\colorlet{shadecolor}{pink}
\newcommand{\cmark}{\ding{51}}%
\newcommand{\xmark}{\ding{55}}%
\theoremstyle{plain}
\newtheorem{theorem}{Theorem}[]
\newtheorem{corollary}{Corollary}[theorem]
\newtheorem{lemma}[]{Lemma}
\newtheorem{assumption}{Assumption}[]
\newtheorem{remark}{Remark}[]
\newcommand{\ra}[1]{\renewcommand{\arraystretch}{#1}}
\def\x{{\mathbf x}}
\def\z{{\mathbf z}}
\def\s{{\mathbf s}}
\def\y{{\mathbf y}}
\def\E{{\mathbb E}}
\def\O{{\mathcal O}}
\def\R{{\mathbb{R}}}
\newcommand{\ts}{\textsuperscript}
\newcommand{\printfnsymbol}[1]{%
  \textsuperscript{\@fnsymbol{#1}}%
}
\begin{document}
\title{\bf {\color{black}On the Benefits of Multiple Gossip Steps in Communication-Constrained Decentralized Optimization} }
\date{}
\author[1]{Abolfazl Hashemi}
\author[1]{Anish Acharya \thanks{Equal Contribution}}
\author[1]{Rudrajit Das \printfnsymbol{1}}
\author[1]{\\Haris Vikalo}
\author[1]{Sujay Sanghavi}
\author[1,2]{Inderjit Dhillon}
\affil[1]{University of Texas at Austin}
\affil[2]{Amazon}
\maketitle
\begin{abstract}
{\color{black}In decentralized optimization, it is common algorithmic practice 
to have nodes interleave (local) gradient descent iterations with gossip (i.e. averaging over the network) steps. Motivated by the training of large-scale machine learning models, it is also increasingly common to require that messages be {\em lossy compressed} versions of the local parameters. In this paper we show that, in such compressed decentralized optimization settings, there are benefits to having {\em multiple} gossip steps between subsequent gradient iterations, even when the cost of doing so is appropriately accounted for e.g. by means of reducing the precision of compressed information. In particular, we show that having $O(\log\frac{1}{\epsilon})$ gradient iterations {with constant step size} - and $O(\log\frac{1}{\epsilon})$ gossip steps between every pair of these iterations - enables convergence to within $\epsilon$ of the optimal value for smooth non-convex objectives satisfying Polyak-\L{}ojasiewicz condition. This result also holds for smooth strongly convex objectives. 
To our knowledge, this is the first work that derives convergence results for nonconvex optimization under arbitrary communication compression.\footnote{Throughout the paper we have used gossip/consensus, graph/network and node/client/agent  interchangeably.}
}

\end{abstract}
\section{Introduction}
We consider distributed optimization  over a network with $n$ client nodes where  the objective function is possibly nonconvex. Formally, we are interested in 
\begin{equation}\label{eq:problem}
\min_{\mathbf{x}\in \R^d} \left[f(\mathbf{x}):=\sum_{i=1}^n f_i(\mathbf{x})\right],
\end{equation}
where $f_i: \mathbb{R}^d \to \mathbb{R}$ for $i \in [n]:=\left\{1, ..., n \right\}$ is the local objective function of the $i\ts{th}$ client. The goal of the clients in the network is to collaboratively solve the above optimization problem by passing messages over a graph that connects them \cite{nedic2009distributed}.

The optimization task in \eqref{eq:problem} arises in many important distributed machine learning (ML) tasks, i.e., training and optimization of ML models in a distributed/decentralized manner \cite{mcmahan2017communication,seaman2017optimal,assran2018stochastic}. Solving such distributed tasks is often facilitated by communication of agents' local model parameters over a network that governs their communication capabilities. Compared to a centralized optimization framework, distributed optimization enables locality of data storage and model updates which in turn offers computational advantages by delegating computations to multiple clients, and further promotes preservation of privacy of user information \cite{mcmahan2017communication}.

As the size of ML models grows, exchanging information across the network becomes a major challenge in distributed optimization \cite{assran2018stochastic}. It is therefore imperative to design communication-efficient strategies which reduce the amount of communicated data by performing compressed communication while at the same time, despite the use of compressed communication, achieve a convergence properties that is on par with the performance of centralized and distributed methods utilizing uncompressed information \cite{he2018cola,assran2018stochastic,koloskova2019decentralized}. 
\begin{figure}[t]
  \begin{center}
    \includegraphics[width=0.5\textwidth]{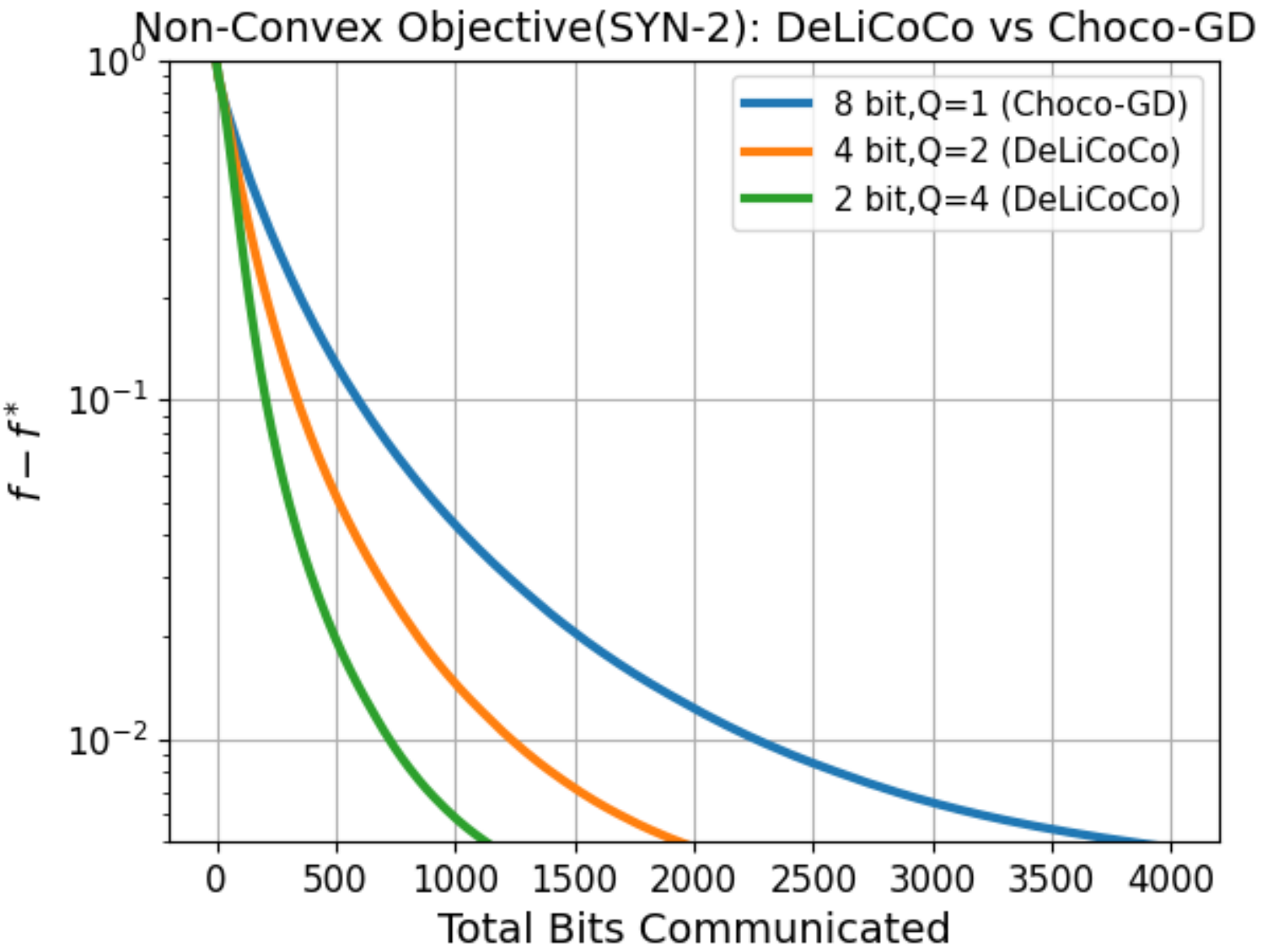}
  \end{center}
  \caption{\footnotesize Comparison of the proposed scheme and the benchmark \cite{koloskova2019decentralized} on the non-convex nonlinear regression task given a fixed communication budget per iteration.}
  \label{fig:motiv}
\end{figure}
\vspace{-3mm}
\subsection{Contribution}
{\color{black}We consider  decentralized nonconvex ML tasks in a communication-constrained settings. 
In such scenarios, the clients may need to compress their local updates (using, e.g., quantization and/or sparsification) before transmitting them to their neighbors. Our goal is to establish a communication-efficient decentralized scheme with accelerated convergence rates for nonconvex tasks. To this end,  we make an observation that in decentralized optimization tasks given a \textit{fixed communication budget per iteration}, having multiple consensus (aka gossiping) steps with aggressive
compression results in a smaller error in terms of the number of communicated bits compared to having just one consensus step with higher precision. As a concrete example
we consider a simple decentralized nonlinear regression task (see Section \ref{sec:exp}) and depict the training error versus the communicated bits in Figure \ref{fig:motiv}. As the figure shows increasing the number of  consensus steps (denoted by  $Q$) with a lower quantization precision requires fewer communicated bits to achieve a target accuracy.

This simple observation motivates us to theoretically study the effect of the number of gossiping steps on the rate of convergence in decentralized optimization in a communication-constrained setting. Specific contributions of this work can be summarized as follows:

1. We propose \underline{\textbf{De}}centralized \underline{\textbf{Li}}near Learning with \underline{\textbf{Co}}mmunication \underline{\textbf{Co}}mpression (DeLi-CoCo), an iterative  decentralized algorithm with arbitrary communication compression (both biased and unbiased compression operators) that performs multiple gossip steps in each iteration for faster convergence. 

2. By employing  $Q>1$ steps of compressed communication after each local gradient update, DeLi-CoCo achieves a linear rate of convergence to a near-optimal solution  for smooth nonconvex objectives satisfying the Polyak-\L{}ojasiewicz condition (see Theorem \ref{thm:2}). This rate matches the convergence rate of decentralized gradient descent (DGD) \cite{yuan2016convergence} with no communication compression. The proposed $Q$-step gossiping further helps to arbitrarily decrease the sub-optimality radius of the near-optimal solution, thereby improving upon the results of DGD \cite{yuan2016convergence} (see Corollary \ref{cor:1}).

3.  Our novel theoretical contributions enables us to demonstrate that given a fixed communication budget, increasing $Q$ and decreasing the precision of compression theoretically improves the convergence properties of DeLi-CoCo (see Section \ref{sec:dissc}). We verify our theoretical results via simple numerical experiments on both convex and nonconvex decentralized optimization tasks.}

\section{Significance and Related Work}
Decentralized learning and optimization have drawn significant attention in the past few years due to the increasing importance of privacy and high data communication costs of centralized methods \cite{koloskova2019decentralizeda,stich2018sparsified,tang2018communication,koloskova2019decentralized,shen2018towards,reisizadeh2019robust,he2018cola}. Decentralized topologies overcome the aforementioned challenges by allowing each client to exchange messages only with their neighbors without exchanging their local data, showing great potential in terms of scalability and privacy-preserving capabilities.

\subsection{Consensus with Compressed Communication}
The study of decentralized optimization problems dates back to 1980s \cite{tsitsiklis1984problems}. The main focus of early research in this area was on the task of average consensus where the goal of a network is to find the average of local variables (i.e., agents' model vectors) in a decentralized manner. Conditions for asymptotic and non-asymptotic convergence of the decentralized average consensus in a variety of settings including directed and undirected time-varying graphs have been established in the seminal works \cite{jadbabaie2003coordination,ren2005consensus,ren2007information,cai2012average,cai2014average,kempe2003gossip,xiao2004fast, boyd2006randomized}. Recently, a pioneering work \cite{koloskova2019decentralized} proposed the first communication-efficient 
average consensus/gossip algorithm that achieves a linear convergence rate and significantly improves the performance of existing quantized gossip methods \cite{carli2010quantized,carli2010gossip,fang2010distributed,li2010distributed}. In  \cite{koloskova2019decentralized} a stochastic decentralized algorithm only for strongly convex and smooth objectives is further developed. Such linearly convergent gossip methods have also recently been extended to the scenario where the communication graph of agents is directed and time-varying \cite{taheri2020quantized,chen2020dist}. 
In our work, we consider general nonconvex learning tasks and employ the proposed gossiping scheme of \cite{koloskova2019decentralized} 
as a subroutine. However, we  propose a new  decentralized algorithm  with arbitrary compression that leverages multiple gossip steps to collaboratively solve nonconvex problems under the Polyak-\L{}ojasiewicz condition \cite{polyak1963gradient,karimi2016linear}. 

\subsection{Distributed Optimization with Compressed Communication} Distributed optimization is one of the richest topics at the intersection of machine learning, signal processing and control. Consensus/gossip algorithms have enabled distributed optimization of (non)convex objectives (e.g., empirical risk minimization) by modeling the task of decentralized optimization as noisy consensus. Examples include the celebrated distributed (sub)gradient descent algorithms (DGD) \cite{nedic2009distributed, johansson2010randomized,yuan2016convergence}. These schemes consider small-scale problems where the clients can communicate uncompressed messages to their neighbors.
Designing communication-efficient distributed optimization algorithms is an active area of research motivated by the desire to reduce the communication burden of multi-core and parallel optimization of ML models \cite{konevcny2018randomized,stich2018sparsified,karimireddy2019error}. Majority of the existing works consider distributed optimization tasks with master-slave architectures where the compression of communication is accomplished by using methods based on sparsification or quantization of gradients \cite{seide20141,strom2015scalable,konevcny2018randomized,stich2018sparsified,bernstein2018signsgd,alistarh2017qsgd}. An example of such a setting is federated learning \cite{mcmahan2017communication,konevcny2016federated} which enables distributed learning of an ML model in a cloud while the training data remains distributed across a large number of clients. {\color{black}Recent federated learning schemes that promote communication efficiency either focus on compressing the size of the client-to-cloud messages or decreasing the number of communication rounds \cite{koloskova2019decentralizeda,reisizadeh2019fedpaq,sun2019communication,wang2018cooperative,basu2019qsparse,chen2018lag}. In contrast to that line of work, we consider a general decentralized learning scenario and exploit the error feedback mechanism of \cite{seide20141,strom2015scalable,stich2018sparsified} as part of our proposed scheme to enable arbitrary compression while maintaining a linear convergence rate. More importantly, our focus in this paper is the importance of organizing the communication resources. That is, given a fixed communication budget what is the best strategy for the number of consensus steps and the precision of compression in order to achieve a smaller error in terms of the number of communicated bits.} 

It is worth noting that unlike a majority of decentralized optimization schemes including those with uncompressed communication  that require strong convexity to a achieve linear rate, e.g. \cite{seaman2017optimal,yuan2016convergence,nedic2017achieving,shi2015extra} -- except the recent results in \cite{yi2019linear,tang2019distributed} with full communication --  we only assume the Polyak-\L{}ojasiewicz condition which enables us to analyze nonconvex learning tasks. To our knowledge, the proposed algorithm is the first scheme  to achieve linear convergence in the decentralized setting with compressed communication under the Polyak-\L{}ojasiewicz condition. In Table \ref{tb:comp}, we compare the proposed scheme with the related works.

\begin{table*}[t]
\small
\caption{\small Comparison of convergence rates of different decentralized optimization algorithms under smoothness. 
In the table, $\rho,\rho_1,\rho_2,\rho_3 \in (0,1)$ denote different rates of linear convergence that depend on problem's properties. 
$\alpha_1$ and $\alpha_2$ the radii of suboptimality and depend on network and function properties. 
}
\ra{1}
\centering
\begin{tabular*}{1\linewidth}{@{}cccc@{}}\toprule 
Algorithm &Convergence &Setting&Compression\\ \midrule
DGD \cite{nedic2009distributed} &$\O(1/T)$ &Strong convexity, full gradient&{\color{red}\xmark}\\\midrule
DGD \cite{yuan2016convergence} &$\O(\rho_1^T)+\alpha_1$ &Restricted strong convexity, full gradient&{\color{red}\xmark}\\\midrule
EXTRA, SSDA \cite{shi2015extra,seaman2017optimal} &$\O(\rho_2^T)$ &Strong convexity, full gradient&{\color{red}\xmark}\\\midrule
DIGing\cite{nedic2017achieving} &$\O(\rho_3^T)$ &Strong convexity, full gradient&{\color{red}\xmark}\\\midrule
\cite{koloskova2019decentralized,tang2019texttt,lu2020moniqua}&$\O(1/\sqrt{T})$&Smooth non-convex, stochastic gradient&{\color{black}\cmark}\\\midrule
{\bf This work} &$\O(\rho^T)+C\alpha^Q$ &PL condition, full gradient&{\color{black}\cmark}\\
\bottomrule
\end{tabular*}
\label{tb:comp}
\end{table*}

\section{Preliminaries and Background}
In this section, we propose our linearly convergent decentralized learning algorithm. First, we briefly overview a few important concepts and definitions.

We consider the standard decentralized optimization setup \cite{nedic2009distributed} where $n$ clients, each having a local function $f_i(.)$, aim to collaboratively reach $\x^\ast \in \mathcal{X}^\ast \subset \R^d$, an optimizer of \eqref{eq:problem}. Problem \eqref{eq:problem} can be written equivalently as \cite{yuan2016convergence,nedic2009distributed,koloskova2019decentralized,shi2015extra}
\begin{equation}\label{eq:problem2}
\min_{\mathbf{x}_1 = \dots =\x_n} \left[F(X):=\sum_{i=1}^n f_i(\mathbf{x}_i)\right],
\end{equation}
where $\x_i\in \R^d$ is the vector collecting the local parameters of client $i$, and $X \in \mathbb{R}^{d\times n}$ is a matrix having $\x_i$ as its $i\ts{th}$ column. Therefore, the goal of the agents in the network is to achieve consensus such that $\x_i = \x^\ast$ for some  $\x^\ast \in \mathcal{X}^\ast$; in matrix notation, $X = X^\ast$, where all the columns of $X^\ast$ are equal to $\x^\ast$, i.e. $X^\ast = \x^\ast \mathbf{1}^\top$. 

To solve \eqref{eq:problem2}, each client can communicate only with its neighbors, where the communication in the network is modeled by a graph. Specifically, we assume each node $i$ associates a non-negative weight
$w_{ij}$ to any node $j$ in the network, and $w_{ij}> 0$ if and only if node $j$ can communicate with node $i$, and $w_{ii}> 0$ for all $i$. Let $W = [w_{ij}] \in [0,1]^{n \times n}$ be the matrix that collects these weights. We call $W$ the mixing or gossip matrix 
and state some its properties (following \cite{xiao2004fast}) below.
\begin{assumption}\label{assp:1}
{(\textbf{Mixing Matrix})}
\textit{The gossip matrix $W = [w_{ij}] \in [0,1]^{n \times n}$ associated with a connected graph is non-negative, symmetric and doubly stochastic, i.e.}
\begin{equation}
W = W^\top, \qquad W\mathbf{1} =  \mathbf{1}.
\end{equation}
\textit{Under this condition, eigenvalues of  $W$ can be shown to satisfy $1 = |\lambda_1(W)| > |\lambda_2(W)| \geq \dots \geq |\lambda_n(W)|$ \cite{xiao2004fast}. Furthermore, $\delta := 1 - |\lambda_2(W)| \in (0,1]$}
\textit{is the so-called spectral gap of $W$.}
\end{assumption}
A large spectral gap implies a faster convergence rate of decentralized algorithms. When  the graph is fully connected and $\text{deg}(i) = n$, with $W = \mathbf{1}\mathbf{1}^\top/n$, it holds that $\delta = 1$ which in turn implies consensus can be achieved exactly after one iteration of message passing. 

{\color{black}Designing the communication network and its associated mixing matrix $W$ with a large  spectral gap is an important task and an active area of research in multi-agent systems (see e.g. \cite{jadbabaie2003coordination,xiao2004fast,nedic2009distributed}) which is beyond the scope of this work. Here, we simply make the common assumption that $W$ and its spectral gap $\delta$ are known and can be used as inputs of our proposed algorithm.}

We now define some commonly assumed properties of the objective function.\footnote{In this work, $\|.\|$ denotes the $\ell_2$ norm for vectors and Frobenius norm for matrices, respectively.}
\begin{assumption}\hspace{-0.1 cm}{(\textbf{Smoothness})}
\textit{ Each local objective function is $L_i$-smooth, i.e., for all $\x,\y \in \R^d$
\begin{equation}
f_i(\x)\leq f_i(\y)+(\x-\y)^\top \nabla f_i(\y)+\frac{L_i}{2}\|\x-\y\|^2. 
\end{equation}
}
\end{assumption}
We will find it useful to define $L := \sum_i L_i/n$ and $\hat{L} := \max_i L$.
\begin{assumption}{(\textbf{Polyak-\L{}ojasiewicz Condition})}
\textit{The objective function satisfies the Polyak-\L{}ojasiewicz condition (PLC) with parameter $\mu$, i.e. for all $\forall \x \in \R^d$
\begin{equation}
\|\nabla f(\x)\|^2 \geq 2\mu (f(\x)-f^\ast), \quad \mu>0, \quad f^\ast = \min_\x f(\x).
\end{equation}
}
\end{assumption}
The Polyak-\L{}ojasiewicz condition implies that when multiple global optima exist, each stationary point of the objective function is a global optimum \cite{polyak1963gradient,karimi2016linear}. This setting enables studies of modern large-scale ML tasks such as training of deep neural networks that are generally nonconvex but are fairly likely to satisfy PLC \cite{liu2020toward}. It is worth noting that $\mu$-strongly convex functions satisfy PLC with parameter $\mu$ -- thus, PLC is a weaker assumption than strong convexity.

Convergence of centralized gradient descent under PLC follows a very simple analysis \cite{polyak1963gradient,karimi2016linear}. However, in decentralized settings with compression, analysis of the existing algorithms, e.g. \cite{yuan2016convergence,seaman2017optimal,koloskova2019decentralized}, relies on co-coercivity of strongly convex objectives (see Theorem 2.1.11 in \cite{nesterov2013introductory}).
Unfortunately, the results of such analysis do not generalize to PLC settings. In this paper, by performing a novel and simple convergence analysis, we establish convergence of DeLi-CoCo for decentralized nonconvex problems with compressed communication under PLC.

Finally, we characterize the compression operator $\mathcal{C}$ that we use in our algorithm. 
The following assumption is standard and has been previously made by \cite{stich2018sparsified,karimireddy2019error,koloskova2019decentralized}.
\begin{assumption}\label{assp:4}
(\textbf{Contraction Compression})
\textit{The compression operator $\mathcal{C}$ satisfies}
\begin{equation}
\E_{\mathcal{C}} \left[\|\mathcal{C}(\x)-\x\|^2  \text{ }|\text{ }\x\right]  \leq (1-\omega)\|\x\|^2, 
	\end{equation}  
\textit{for all $\x \in \R^d$ where $0<\omega\leq 1$ and the expectation is over the internal randomness of $\mathcal{C}$.}
\end{assumption}
Note that $\mathcal{C}$ can be a biased or an unbiased compression operator including:
\begin{itemize}
    \item Random selection of $k$ out of $d$ coordinates or $k$ coordinates with the largest magnitudes. In this case  $\omega = k/d$ \cite{stich2018sparsified}. We denote these two by $\mathrm{rand}({\omega})$ and $\mathrm{top}({\omega})$, respectively.
    \item Setting $\mathcal{C}(\x) = \x$ with probability $p$ and $\mathcal{C}(\x) = 0$ otherwise. In this case $\omega = p$ \cite{koloskova2019decentralized}. We denote this by $\mathrm{rand2}({\omega})$.
    \item $b$-bit random quantization (i.e., the number of quantization levels is $2^{b}$)
\begin{equation}
\mathrm{qsgd}_b(\x) = \frac{\mathrm{sign}(\x) \|\x\|}{2^{b}w}\left\lfloor2^{b}\frac{|\x|}{\|\x\|}+\mathbf{u}\right\rfloor,
\end{equation}
where $w = 1+\min\{\sqrt{d}/2^{b},d/2^{2b}\}$, $\mathbf{u} \sim [0,1]^d$, and $\mathrm{qsgd}_b(\mathbf{0}) = \mathbf{0}$. In this case, $\omega = 1/w$ \cite{alistarh2017qsgd}.
\end{itemize}
\section{Compressed Decentralized Learning}
In this section, we present our proposed algorithm for solving \eqref{eq:problem2} iteratively in a decentralized manner where the agents are restricted to communicate compressed information. In particular, we aim to develop a scheme that by relying on performing multiple low precision compressed gossiping steps achieves a smaller error in terms of the number of communicated bits.

The proposed DeLi-CoCo scheme (see Algorithm 1) consists of two main subroutines: (i)  update of the local variable $\x_i$ via gradient descent, and (ii) exchange of compressed messages between neighboring clients by performing $Q\geq 1$ compressed gossiping steps. 

Let $t = 1, \dots, T$ denote the $t\ts{th}$ iteration of DeLi-CoCo and let $q = 0,\dots.Q-1$ denote the $q\ts{th}$ compressed gossiping/consensus step. Each client $i$ maintains three local variables: $\x_{t,i}^{(q)}$, $\z_{t,i}^{(q)}$, and $\s_{t,i}^{(q)}$. Here, $\x_{t,i}^{(q)}$ denotes the vector of current local parameters of node $i$, while $\z_{t,i}^{(q)}$, and $\s_{t,i}^{(q)}$ are maintained locally to keep track of the compression noise and be used as an error feedback for subsequent iterations, respectively \cite{stich2018sparsified,koloskova2019decentralized}.
 
 At iteration $t$, each client updates its own local parameters by performing a simple gradient descent update according to
 \begin{equation}
  \x_{t,i}^{(0)} = \x_{t-1,i}^{(Q)} - \eta \nabla f_i(\x_{t-1,i}^{(Q)}),
 \end{equation}
 where $\eta>0$ is a constant learning rate 
 specified in Theorem \ref{thm:2}.
 Following the gradient update, we propose to perform $Q$ compressed gossiping steps in a decentralized manner to further update the local parameters as well as the error feedback variables. Intuitively,  this $Q$-step procedure is a crucial part of DeLi-CoCo  that enables updated parameters $\x_{t,i}^{(0)}$ to converge to their average value. 
 
To perform the $(q+1)\ts{st}$ gossiping step, each node generates the message
 $\mathcal{C}(\x_{t,i}^{(q)}-\z_{t,i}^{(q)})$, where  $\mathcal{C}: \R^d \rightarrow \R^d$ denotes the (potentially random) compression operator, and $\z_{t,i}^{(q)}$ is a parameter that keeps track of the compression error. The compressed message $\mathcal{C}(\x_{t,i}^{(q)}-\z_{t,i}^{(q)})$ is communicated to update $\s_{t,i}^{(q)}$, and then it is further used by the transmitting client as an error feedback to update $\z_{t,i}^{(q)}$: 
 \begin{equation}
 \begin{aligned}
  \s_{t,i}^{(q+1)}&= \s_{t,i}^{(q+1)}+\sum_{j=1}^n w_{ij}\mathcal{C}(\x_{t,j}^{(q)}-\z_{t,j}^{(q)}),\\ \z_{t,i}^{(q+1)} &=\z_{t,i}^{(q)}+\mathcal{C}(\x_{t,i}^{(q)}-\z_{t,i}^{(q)}).
   \end{aligned}
 \end{equation}

 Intuitively, $\z_{t,i}^{(q)}$ and the error feedback mechanism enable all the local information to be transmitted eventually with a delay that depends on  $\mathcal{C}$. 
 
 Then, to accomplish the $(q+1)\ts{st}$ gossiping step, each client performs
\begin{equation}
\x_{t,i}^{(q+1)} = \x_{t,i}^{(q)}+\gamma(\s_{t,i}^{(q+1)}-\z_{t,i}^{(q+1)}),
 \end{equation}
 where $0<\gamma\leq 1$ is the gossiping/consensus learning rate whose exact value will be specified in Theorem \ref{thm:2}.
 After performing compressed gossiping for $Q$ steps, the $t\ts{th}$ iteration of DeLi-CoCo is complete. 
 
 The above update rules are summarized in Algorithm 1 where we use an equivalent and useful matrix notation where $\x_{t,i}^{(q)}$, $\s_{t,i}^{(q)}$, and $\z_{t,i}^{(q)}$ are stored as the $i\ts{th}$ column of $X^{(q)}_t$, $S^{(q)}_t$, and $Z^{(q)}_t$, respectively.
\begin{remark}
Let $Q = 1$, $\gamma = 1$, and assume there is no compression, i.e. $\mathcal{C}(\x_{t,i}^{(q)}-\z_{t,i}^{(q)}) = \x_{t,i}^{(q)}-\z_{t,i}^{(q)}$. Then the proposed DeLi-CoCo scheme reduces to the DGD \cite{yuan2016convergence}. If $Q = 1$, $\eta = \O(1/T)$, and clients perform local stochastic gradient updates, the proposed scheme reduces to Choco-SGD \cite{koloskova2019decentralized}. We will show in Section 4 that by performing $Q> 1$ gossiping steps and reducing the precision of compression, DeLi-CoCO achieves a smaller error given a fixed communication budged.
 \end{remark}
 
{\color{black} 
\textbf{Practical Considerations}: In a scenario where there is negligible latency and synchronization among the nodes, the proposed algorithm that relies on multiple compressed gossiping steps achieves a faster convergence rate and further requires  fewer total number of bits needed for communication (see Section \ref{sec:exp}). Even if latency/synchronization constraints are taken into consideration the proposed algorithm leads to significant savings in the total number of communicated bits. However, in this case, certifying that our algorithm converges faster (with respect to wall-clock time) is difficult without knowing the actual time expended on synchronization and the latency of the communication structure, and is left for future work. Thus, in communication-constrained settings, Algorithm \ref{alg:C} with multiple gossiping steps ($Q>1$) is preferred.
}
\begin{algorithm}[t]
	\caption{The DeLi-CoCo Algorithm}
	\label{alg:C}
	\begin{algorithmic}[1]
		\STATE {\bfseries Input:} 
		stepsize $\eta$, consensus stepsize $\gamma$, number of gradient iterations $T$, number of consensus steps per gradient iteration $Q$, mixing matrix $W$; initialize $X_{0}^{(Q)}$, $Z_0^{(0)} = S_0^{(0)} = \mathbf{0}$.
		\FOR{ $t =1,\dots, T$}
		\STATE $X^{(0)}_t = X^{(Q)}_{t-1}-\eta \nabla F(X^{(Q)}_{t-1})$ (local gradient update)
		\FOR{$q =0, 1,\dots, Q-1$}
		\STATE $S_t^{(q+1)} =S_t^{(q)}+\mathcal{C}(X^{(q)}_t-Z_t^{(q)})W$ (Exchanging messages)
		\STATE $Z_t^{(q+1)} =Z_t^{(q)}+\mathcal{C}(X^{(q)}_t-Z_t^{(q)})$ (Compression error feedback)
		\STATE $X^{(q+1)}_t = X^{(q)}_t+\gamma(S_t^{(q+1)}-Z_t^{(q+1)})$ (Local gossip update)
		\ENDFOR
		\STATE $Z_{t+1}^{(0)} = Z_{t}^{(Q)}$, $S_{t+1}^{(0)} = S_{t}^{(Q)}$
		\ENDFOR
	\end{algorithmic}
\end{algorithm}
\section{Convergence Analysis}
In this section we analyze the convergence properties of DeLi-CoCo.
First, We define the following quantities $\Delta^2 :=\max_{\x^\ast \in \mathcal{X}^\ast}\sum_{i=1}^n \|\nabla f_i(\x^\ast)\|^2$ and $R_0 := F(X_{0}^{Q})-f^\ast$.
The main results of our convergence analysis are summarized in the following theorem. 
{\color{black}
\begin{theorem}\label{thm:2}
Suppose Assumptions \ref{assp:1}-\ref{assp:4} hold. Define 
\begin{equation}\label{eq:notationf}
\begin{aligned}
Q_0&:=\left\lceil{\log\left(\bar{\rho}/46\right)}\Bigg/{\log\left(1-\frac{\delta\gamma}{2}\right)}\right\rceil, \hspace{2mm} \bar{\rho} := 1-\frac{\mu}{n\hat{L}},\\
\gamma &= \frac{\delta\sqrt{\omega}}{16\delta+\delta^2-8\delta\sqrt{\omega}+(4+2\delta)\lambda_{\max}^2(I-W)}.
\end{aligned}
\end{equation}
Then, if the nodes are initialized such that $X_{0}^{(Q)} = \mathbf{0}$, for any $Q>Q_0$ after $T$ iterations the iterates of DeLi-CoCo with $\eta= \frac{1}{\hat{L}}$ satisfy
\begin{multline}\label{eq:bound:new:plc}
\E_{\mathcal{C}}[F(X^{(Q)}_{T})]-f^\ast =\O\biggl(\frac{\Delta^2}{1-\bar{\rho}}\left(1-\frac{\delta \gamma}{2}\right)^{\frac{Q}{2}} +\left[1+\frac{L}{\mu\bar{\rho}}\left(1+\left(1-\frac{\delta{\gamma}}{2}\right)^{\frac{Q}{2}}\right)\right]R_0\rho^T\biggr).
\end{multline}
\end{theorem}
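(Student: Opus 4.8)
\emph{Decomposition via the preserved average.} The plan is to decouple the dynamics into the motion of the network-wide average and the consensus (disagreement) error, analyze each separately, and then close a two-dimensional linear recursion. Define the column average $\bar{\x}_t := X_t^{(Q)}\mathbf{1}/n$ and the averaged matrix $\bar X_t := \bar{\x}_t\mathbf{1}^\top$. Since $Z_0^{(0)}=S_0^{(0)}=\mathbf 0$ and $W\mathbf 1=\mathbf 1$, an induction shows $(S_t^{(q)}-Z_t^{(q)})\mathbf 1=\mathbf 0$ throughout, so every gossip update $X^{(q+1)}=X^{(q)}+\gamma(S^{(q+1)}-Z^{(q+1)})$ preserves the average. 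Hence $\bar{\x}_t$ moves only through the gradient step, $\bar{\x}_t=\bar{\x}_{t-1}-\frac{\eta}{n}\sum_{i}\nabla f_i(\x_{t-1,i}^{(Q)})$, which is inexact gradient descent on $f$ with effective step $\eta/n=1/(n\hat L)$, the gradients being evaluated at the local iterates rather than at $\bar{\x}_{t-1}$. I would then split the target quantity as $F(X_T^{(Q)})-f^\ast=\big(f(\bar{\x}_T)-f^\ast\big)+\big(F(X_T^{(Q)})-f(\bar{\x}_T)\big)$ and bound the second difference by $L$-smoothness in terms of $\|X_T^{(Q)}-\bar X_T\|$.

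\emph{Consensus contraction over $Q$ inner steps.} The core gossip estimate is the CHOCO contraction of \cite{koloskova2019decentralized}: a single compressed gossip step with step size $\gamma$ shrinks a Lyapunov potential of the form $e_t^{(q)}:=\|X_t^{(q)}-\bar X_t\|^2+c\,\|X_t^{(q)}-Z_t^{(q)}\|^2$ by a factor $\big(1-\tfrac{\delta\gamma}{2}\big)$, for a suitable constant $c$ and the prescribed $\gamma$. The delicate point is that the feedback variables are carried across outer iterations ($Z_{t+1}^{(0)}=Z_t^{(Q)}$, likewise for $S$), so the contraction cannot be invoked per outer iteration in isolation; instead I would iterate the one-step bound across all $Q$ inner steps to obtain a factor $\big(1-\tfrac{\delta\gamma}{2}\big)^{Q}$ per outer iteration, together with the fresh disagreement injected by the preceding gradient step, of order $\eta^2\|\nabla F(X_{t-1}^{(Q)})\|^2$. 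Writing $\nabla f_i(\x)=\nabla f_i(\x^\ast)+(\nabla f_i(\x)-\nabla f_i(\x^\ast))$ and using smoothness, this injection is bounded by a multiple of $\Delta^2$ plus a term proportional to the current suboptimality, which is exactly what lets the disagreement recursion couple back to the optimization error.

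\emph{PLC descent for the average.} Using that $f$ is $nL$-smooth, that $\eta=1/\hat L$ lies in the admissible range, and the standard descent lemma, I would obtain $f(\bar{\x}_t)-f^\ast\le \bar\rho\,(f(\bar{\x}_{t-1})-f^\ast)+c_1\,\|X_{t-1}^{(Q)}-\bar X_{t-1}\|^2$, where the factor $\bar\rho=1-\mu/(n\hat L)$ follows from applying the PLC bound $\|\nabla f(\x)\|^2\ge 2\mu(f(\x)-f^\ast)$ to the exact part of the update, and the additive term quantifies the bias from evaluating the gradients off-consensus. This is the step that departs from the cited strongly convex analyses, which lean on co-coercivity; under PLC I would instead combine the smoothness descent inequality with the PLC bound directly, avoiding co-coercivity altogether.

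\emph{Coupling and unrolling.} The two displays form a linear recursion in the pair $\big(f(\bar{\x}_t)-f^\ast,\ \|X_t^{(Q)}-\bar X_t\|^2\big)$, and the hypothesis $Q>Q_0$ is precisely what makes $\big(1-\tfrac{\delta\gamma}{2}\big)^{Q}<\bar\rho/46$, forcing the disagreement to contract strictly faster than the optimization error. This separation keeps the cross-terms summable and lets the coupled system be governed by a single geometric rate $\rho$ that is a constant multiple of $\bar\rho$; summing the resulting geometric series (and using $X_0^{(Q)}=\mathbf 0$, so the initial disagreement vanishes and $f(\bar{\x}_0)-f^\ast=R_0$) yields the two advertised contributions: the $T$-independent floor $\tfrac{\Delta^2}{1-\bar\rho}\big(1-\tfrac{\delta\gamma}{2}\big)^{Q/2}$ arising from the steady-state disagreement driven by gradient heterogeneity (the half-power reflecting that the disagreement enters the final smoothness bound through its norm rather than its square), and the transient $R_0\rho^T$ term with its PLC-dependent prefactor. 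I expect the main obstacle to lie in the consensus step: handling the error-feedback variables that persist across outer iterations while still extracting a clean $\big(1-\tfrac{\delta\gamma}{2}\big)^{Q}$ contraction, and simultaneously bounding the injected disagreement by $\Delta^2$ and the current suboptimality so that the two recursions close into a solvable system.
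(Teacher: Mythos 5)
Your proposal is correct in its essential strategy and would deliver the stated bound, but it takes a genuinely different decomposition from the paper. You anchor the analysis to the \emph{true} column average $\bar{\x}_t = X_t^{(Q)}\mathbf{1}/n$ (correctly observing that the error-feedback gossip preserves it, since $(S_t^{(q)}-Z_t^{(q)})\mathbf{1}=\mathbf{0}$ by induction), so your average evolves by \emph{inexact} gradient descent and its PLC descent inequality picks up a disagreement term; this forces a genuinely bidirectional $2\times 2$ linear recursion in $\bigl(f(\bar{\x}_t)-f^\ast,\ \|X_t^{(Q)}-\bar{X}_t\|^2\bigr)$ that you close via $Q>Q_0$. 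The paper instead defines $\bar{X}_t$ as a \emph{virtual} sequence running exact projected gradient descent, $\bar{X}_{t+1}=\bar{X}_t-\eta\,\mathcal{P}_{\mathcal{L}}(\nabla F(\bar{X}_t))$ with the gradient evaluated at $\bar{\x}_t$ itself: Lemma \ref{lem:plc1} then gives unconditional linear decay $F(\bar{X}_t)-f^\ast\le R_0\bar{\rho}^{\,t}$ with no coupling back from the disagreement, the error $e_t^2=\E_{\mathcal{C}}\|X_t^{(Q)}-\bar{X}_t\|^2+\E_{\mathcal{C}}\|X_t^{(Q)}-Z_t^{(Q)}\|^2$ obeys the one-directionally driven recursion $e_{t+1}^2\le\xi e_t^2+\nu_t+u$ with $\nu_t\propto F(\bar{X}_t)-f^\ast$ (Lemma \ref{lem:plc3}), and the theorem follows from a single smoothness expansion of $F(X_T^{(Q)})$ around $\bar{X}_T$ with a $Q$-dependent Young parameter $\alpha\propto(1-\delta\gamma/2)^{-Q/2}$ --- which is where the $Q/2$ exponent on the $\Delta^2$ floor actually originates, rather than from a norm-versus-squared-norm effect as you suggest. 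The paper's triangular structure buys a simpler unrolling (two geometric series with rates $\rho$ and $\xi$, handled by $\zeta=\rho-\xi>0$); your decomposition is the more standard perturbed-iterate argument and has the advantage that the gossip contraction of Lemma \ref{lem:choco} is naturally stated relative to the preserved column average of the current iterate, whereas invoking it relative to the virtual sequence, as the paper does, implicitly requires the true average to track the virtual one. Both routes need $Q>Q_0$ for the same reason: to make the inner consensus contraction strictly dominate $\bar{\rho}$.
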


\subsection{Discussion}\label{sec:dissc}
The result of Theorem \ref{thm:2} confirms our earlier empirical observation in Figure \ref{fig:motiv}. To see this, note that the convergence rate depends on $\left(1 - \frac{\delta \gamma}{2}\right)^{Q/2} < \left(1-\frac{\delta^2\sqrt{\omega}}{82}\right)^{\frac{Q}{2}}$ -- we shall analyze this upper bound to motivate the benefit of having higher $Q$. Consider two pairs of $(Q_1,\omega_1)$ and $(Q_1\times c,\omega_1/c)$ where $c>0$ is an integer that determines the allocation of communication resources, and $(Q_1,\omega_1)$ satisfies the conditions stated in Theorem \ref{thm:2}. The proposed scheme for both of these pairs require the same amount of communication budget. Upon defining $ g(c):=\left(1-\frac{\delta^2\sqrt{ \omega}}{82\sqrt{c}}\right)^{\frac{c\times Q}{2}}$, in Figure \ref{fig:gc} we depict the value of $g(c)$ versus $c$ for various values of spectral gap $\delta$. As the figure shows $g(c)$ is decreasing in $c$ meaning that for a fixed communication budget, increasing the number of gossiping steps $Q$ and decreasing the compression parameter $\omega$ theoretically results in improved convergence properties as both terms in \eqref{eq:bound:new:plc} incur smaller values. Intuitively, this is expected since using the Bernoulli inequality $\left(1-\frac{\delta^2\sqrt{\omega_1}}{82\sqrt{c}}\right)^
{\frac{c Q_1}{2}}
\approx \left(1-\frac{\delta^2\sqrt{c\omega_1}}{82}\right)^
\frac{Q_1}{2}$, 
which is much smaller than $\left(1-\frac{\delta^2\sqrt{\omega_1}}{82}\right)^{\frac{Q_1}{2}}$. 
This theoretical result is indeed consistent with our earlier empirical observation in Figure \ref{fig:motiv} thereby showing the advantage of Algorithm \ref{alg:C} that advocates the use of multiple gossiping steps to achieve a smaller error in terms of the number of communicated bits.
\begin{figure}[t]
  \begin{center}
    \includegraphics[width=0.5\textwidth]{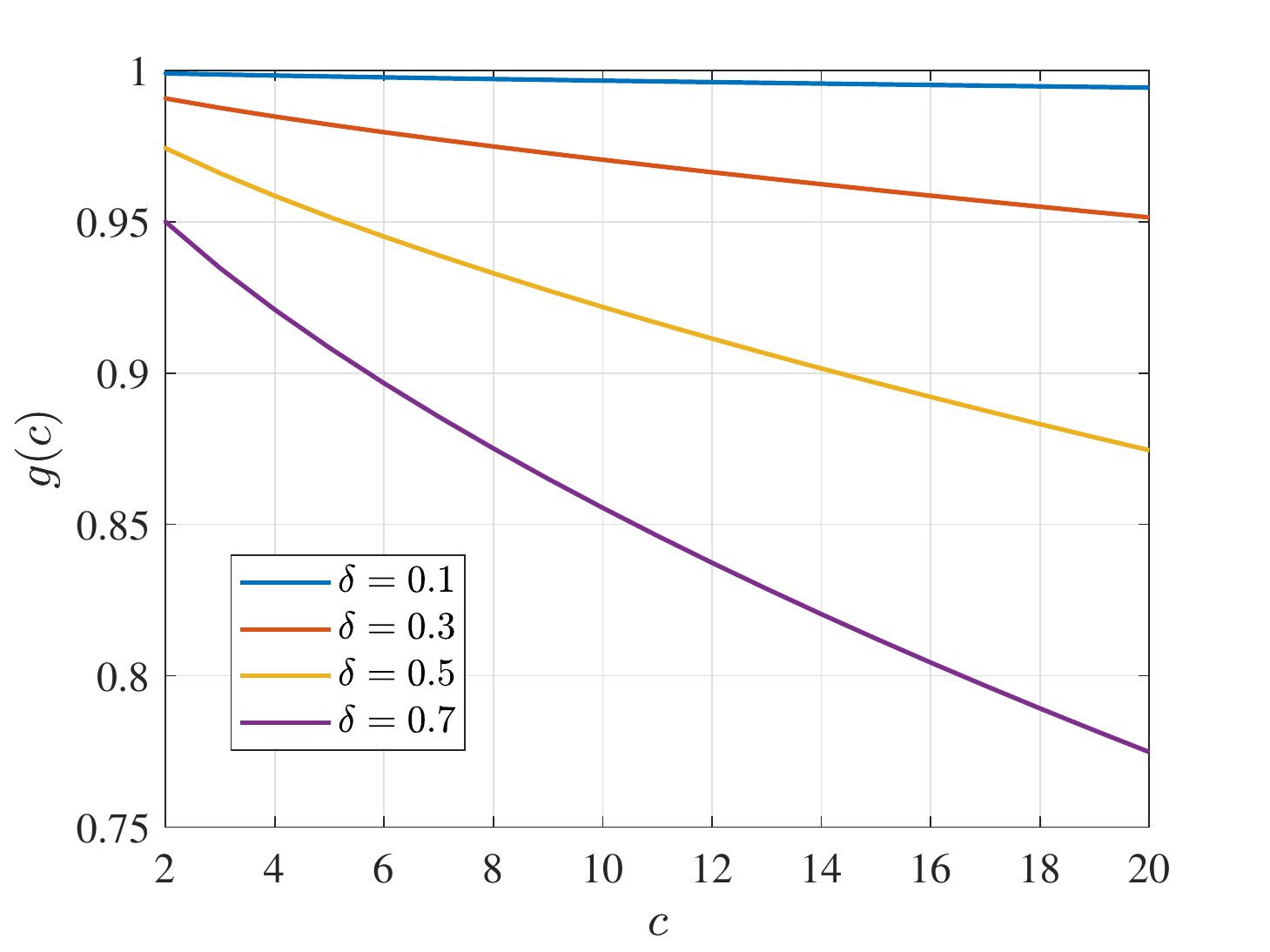}
  \end{center}
  \caption{\footnotesize 
  Variation of $g(c)$ vs. $c$ for different values of $\delta$.}
  \label{fig:gc}
\end{figure}
}

We further highlight the following remarks:

{\bf 1.  Comparison to DGD:}
We compare our result to the prior work in \cite{rogozin2019projected,yuan2016convergence} that assume exact communication. First, in contrast to \cite{yuan2016convergence,rogozin2019projected}, our analysis is carried out under PLC without assuming (restricted) strong convexity. 
The radius of the near-optimal neighborhood in \cite{yuan2016convergence} (see Theorem 4 there) is proportional to $\Delta/\delta$ while in our case, by using the proposed $Q$- step compressed gossiping procedure, the radius is proportional to $\Delta^2(1-\delta)^{\frac{Q}{2}}$; 
in fact, we can make the bound arbitrarily small by performing a sufficiently large number of gossiping steps $Q$ (see Corollary \ref{cor:1}).

{\bf 2. Effect of Compression:} Our results reveal that compression of messages using contraction operators can be thought of as weakening the connectivity property of the communication graph by inducing spectral gap $\delta' = \delta \omega$.  As $\omega$ approaches zero, the consensus learning rate decreases. Hence, as per intuition, a larger $Q$ is required to satisfy the conditions in the statement of Theorem \ref{thm:2}. 

{\bf 3. Almost Linear Convergence:}
Our analysis  further reveals that at the cost of increased number of rounds of communication, the suboptimality radius can be arbitrarily reduced. In particular, $\E_{\mathcal{C}}[F(X^{(Q)}_{t})]-f^\ast \leq  \epsilon$ accuracy can be achieved after $\O(\log^2(1/\epsilon))$ rounds of communication by setting $Q = T = \log(1/\epsilon)$. However, in practice it suffices to use a small $Q$ to achieve a competitive performance compared to centralized and decentralized schemes with no compression.

{\bf 4.  Power of Overparameterization:} 
Consider the case that \eqref{eq:problem}  corresponds to a decentralized regression or classification task wherein the model architecture is expressive
enough to completely fit or {\em  interpolate} the training data distributed among the clients \cite{ma2017power,vaswani2018fast,vaswani2019painless}, e.g. in the case of over-parameterized neural networks or functions satisfying a certain growth condition \cite{schmidt2013fast,cevher2019linear}. 
Then any stationary point of $f$ will also be a stationary point of each of the $f_i$'s and thus $\Delta^2=0$. Therefore, in the this setting and under PLC, Deli-CoCo  converges exactly at a linear rate of $\O(\log(1/\epsilon))$ by setting $Q$ to be a constant independent of $\epsilon$. 
\begin{corollary}\label{cor:1}
Instate the notation and hypotheses of Theorem \ref{thm:2}. 
In order to achieve $\E_{\mathcal{C}}[F(X^{(Q)}_{T})]-f^\ast \leq \epsilon$, Deli-CoCo requires $\tau = \O(\log^2(1/\epsilon))$ rounds of communication if $\Delta\neq 0$, and $\tau = \O(\log(1/\epsilon))$ if $\Delta=0$.
\end{corollary}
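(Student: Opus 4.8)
The plan is to read the result off directly from the two-term bound \eqref{eq:bound:new:plc} of Theorem \ref{thm:2}, after the key bookkeeping observation that the total number of communication \emph{rounds} is $\tau = TQ$: each of the $T$ gradient iterations is followed by exactly $Q$ compressed gossip steps. The bound decomposes into a heterogeneity/consensus term proportional to $\Delta^2\left(1-\frac{\delta\gamma}{2}\right)^{Q/2}$, which is governed solely by $Q$, and a linear-convergence term proportional to $R_0\rho^T$. The crucial point is that the prefactor $1+\frac{L}{\mu\bar{\rho}}\left(1+\left(1-\frac{\delta\gamma}{2}\right)^{Q/2}\right)$ of the second term is bounded above by the $\epsilon$-independent constant $1+\frac{2L}{\mu\bar{\rho}}$ for every admissible $Q$, because $\left(1-\frac{\delta\gamma}{2}\right)^{Q/2}\le 1$. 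Hence the two terms can be driven below $\epsilon/2$ essentially independently: the first by increasing $Q$, the second by increasing $T$.

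First I would treat the case $\Delta\neq 0$. Since $\delta\gamma/2$ is a fixed constant in $(0,1)$ depending only on the problem and network parameters, making the first term at most $\epsilon/2$ amounts to requiring $\left(1-\frac{\delta\gamma}{2}\right)^{Q/2}\le \frac{(1-\bar{\rho})\epsilon}{2c_1\Delta^2}$ for the implied $\O$-constant $c_1$; taking logarithms gives $Q=\O(\log(1/\epsilon))$. Likewise, since $\rho\in(0,1)$ is a fixed constant and the prefactor of the second term is bounded by a constant, forcing that term below $\epsilon/2$ yields $T=\O(\log(1/\epsilon))$. Multiplying the two, $\tau = TQ = \O\!\left(\log^2(1/\epsilon)\right)$. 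I would also remark that $Q=\Theta(\log(1/\epsilon))\to\infty$ automatically satisfies the admissibility requirement $Q>Q_0$ of Theorem \ref{thm:2} once $\epsilon$ is small enough, since $Q_0$ is a constant independent of $\epsilon$.

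For the case $\Delta = 0$ the heterogeneity term vanishes identically, so only the linear term survives. Here I would fix $Q$ to any constant exceeding $Q_0$ — for instance $Q = Q_0+1$, which is independent of $\epsilon$ — so that the prefactor is controlled, and then choose $T=\O(\log(1/\epsilon))$ to drive $R_0\rho^T\le \epsilon$. Because $Q$ is now $\O(1)$, we obtain $\tau = TQ = \O(\log(1/\epsilon))$, the exact linear rate anticipated in the overparameterization remark. The calculations are routine; the only point requiring genuine care is precisely this bookkeeping between gradient iterations $T$ and communication rounds $\tau=TQ$, together with verifying that the chosen $Q$ respects the admissibility constraint $Q>Q_0$ in both regimes. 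Since $Q_0$ depends only on $\delta,\gamma,\bar{\rho}$ and not on $\epsilon$, the constraint is harmless: it is met for free when $\Delta\neq 0$ (where $Q$ grows), and by an explicit constant choice when $\Delta = 0$.
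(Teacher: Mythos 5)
Your proposal is correct and follows essentially the same route as the paper, which justifies the corollary in its discussion (Remarks 3 and 4) by setting $Q = T = \O(\log(1/\epsilon))$ to kill the two terms of the bound in Theorem \ref{thm:2} separately and counting communication rounds as $\tau = TQ$, with $Q$ reduced to a constant exceeding $Q_0$ when $\Delta = 0$. Your additional care about the bounded prefactor of the $\rho^T$ term and the $\epsilon$-independence of the admissibility threshold $Q_0$ is consistent with, and slightly more explicit than, the paper's treatment.
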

\textit{To our knowledge, DeLi-CoCo is the first algorithm attaining a linear convergence rate for decentralized nonconvex optimization with compressed communication in the interpolation regime}. Notice that linear convergence even in the centralized setting necessitates $T = \O(\log 1/\epsilon)$. In the decentralized setting under strong convexity (SC), without using techniques such as gradient tracking \cite{shi2015extra}, DGD based schemes use either $\eta = \O(\frac{1}{t})$ to have $\O(1/\epsilon)$ rounds of communication (e.g. Choco-GD or DGD \cite{koloskova2019decentralized,nedic2009distributed}), or a fixed stepsize (independent of $T$) to achieve linear convergence to a \textit{near-optimal} solution \cite{yuan2016convergence}. Corollary \ref{cor:1} states without over-parameterization $Q = \O(\log 1/\epsilon)$ enables our algorithm to converge to an $\epsilon$-accurate solution under PLC with $\O(\log^2 1/\epsilon)$ rounds of communication, which is a significant improvement over $\O(1/\epsilon)$ for DGD based schemes with decaying step-size.

{\bf 5. Implications for Federated Learning:} Theorem \ref{thm:2} also implies a near linear convergence rate for federated learning tasks -- in their simplest form -- satisfying PLC under compressed communication. This scenario corresponds to  a decentralized learning problem over a network  with $\delta = 1$ \cite{wang2019matcha} under which DeLi-CoCo efficiently delivers a stationary solution.
Nonetheless, 
there are some open problems and issues such as delayed communication and intermittent client availability in federated learning \cite{kairouz2019advances} that are not considered here. 

{\bf 6.  Results under strong convexity:}
Since PLC is implied by strong convexity, Theorem \ref{thm:2} provides a convergence rate for strongly convex and smooth objectives. We make this more explicit in \Cref{thm:1}: 
{\color{black}
\begin{theorem}\label{thm:1}
Suppose Assumptions 1,2, and 4 hold. Further, assume each $f_i$ is strongly convex with parameter $\mu_i$, and define $\mu = \sum_i \mu_i/n$, $\hat{\mu} = \min_i \mu_i$ and $D_0 := \|X^{(Q)}_0-X^\ast\|^2$.
Define 
\begin{equation}\label{eq:notationc}
\begin{aligned}
Q_0&:=\left\lceil{\log\left(\ell/46\right)}\Bigg/{\log\left(1-\frac{\delta\gamma}{2}\right)}\right\rceil,\hspace{2mm}
\ell := 1-\frac{\hat{\mu}}{\hat{L}},\\
\gamma &= \frac{\delta\sqrt{\omega}}{16\delta+\delta^2-8\delta\sqrt{\omega}+(4+2\delta)\lambda_{\max}^2(I-W)}.
\end{aligned}
\end{equation}
Then, if the nodes are initialized such that $X_{0}^{(Q)} = \mathbf{0}$, for any $Q>Q_0$ after $T$ iterations the iterates of DeLi-CoCo with $\eta= \frac{1}{\hat{L}}$ satisfy
\begin{equation}
\begin{aligned}
    \E_{\mathcal{C}}\|X^{(Q)}_{T}-X^\ast\|^2= \O\biggl(\frac{\Delta^2}{\hat{\mu}^2}\left(1-\frac{\delta\gamma}{2}\right)^{Q} +\left[1+\frac{T}{\ell^2 (\hat{L}-\hat{\mu})}\left(1-\frac{\delta\gamma}{2}\right)^{Q}\right]D_0 \ell^T\biggr).
    \end{aligned}
\end{equation}
\end{theorem}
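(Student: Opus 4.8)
The plan is to mirror the structure used for Theorem \ref{thm:2} but to track the squared iterate distance $\|X_t^{(Q)}-X^\ast\|^2$ directly via strong convexity instead of the function-value gap. The first step is an orthogonal decomposition through the column average. Writing $\bar{x}_t := \frac1n X_t^{(Q)}\mathbf{1}$ and $\bar{X}_t := \bar{x}_t\mathbf{1}^\top$, one has $\|X_t^{(Q)}-X^\ast\|^2 = \|X_t^{(Q)}-\bar{X}_t\|^2 + n\|\bar{x}_t-x^\ast\|^2$, splitting the target quantity into a \emph{consensus error} and an \emph{optimization error} on the average iterate. Because $W$ is doubly stochastic (Assumption \ref{assp:1}) and $Z_0^{(0)}=S_0^{(0)}=\mathbf{0}$, each gossip update $X^{(q+1)}=X^{(q)}+\gamma(S^{(q+1)}-Z^{(q+1)})$ preserves the column average: the difference $S^{(q+1)}-Z^{(q+1)}$ always has zero row-sum since $(W-I)\mathbf{1}=\mathbf{0}$. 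Hence only the gradient step moves $\bar{x}_t$, and it does so as $\bar{x}_t=\bar{x}_{t-1}-\frac{\eta}{n}\sum_i\nabla f_i(x_{t-1,i}^{(Q)})$, i.e. an inexact gradient step whose inexactness is governed by the consensus error.

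The second step is the consensus-contraction lemma, reused verbatim from the analysis of Theorem \ref{thm:2}: this is the CHOCO-gossip argument of \cite{koloskova2019decentralized} and is independent of convexity. It shows that a single compressed gossip step contracts a Lyapunov function built from $\|X^{(q)}-\bar{X}\|^2$ and the error-feedback term $\|X^{(q)}-Z^{(q)}\|^2$ by the factor $(1-\delta\gamma/2)$, so $Q$ steps contract it by $(1-\delta\gamma/2)^Q$. Folding in the gradient step and using $L_i$-smoothness to bound the dispersion of $\nabla f_i(x^\ast)$ at the optimum by $\Delta^2$, this yields a recursion $e_t \le (1-\delta\gamma/2)^Q(c_1 e_{t-1}+c_2\eta^2 o_{t-1}+c_3\eta^2\Delta^2)$, where $e_t$ is the consensus error, $o_t:=\|\bar{x}_t-x^\ast\|^2$, and $\Delta^2$ is the irreducible heterogeneity drive.

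The third step is the strong-convexity descent on the average. With $\eta=1/\hat{L}$ and each $f_i$ being $\mu_i$-strongly convex and $L_i$-smooth, the standard one-step contraction of gradient descent, combined with a smoothness bound that replaces $\frac1n\sum_i\nabla f_i(x_{t-1,i})$ by $\nabla f(\bar{x}_{t-1})$ up to an error measured by $e_{t-1}$, gives $o_t \le \ell\,o_{t-1}+c_4 e_{t-1}$ with $\ell=1-\hat{\mu}/\hat{L}$. The final step couples these two inequalities and unrolls them over $t=1,\dots,T$. Choosing $Q>Q_0$ makes $(1-\delta\gamma/2)^Q$ small enough (the threshold $Q_0$ and the constant $46$ are tuned precisely so the consensus factor dominates the amplification constants), keeping the coupled system stable and forcing $e_t$ to decay at rate $\ell$ as well. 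Summing the resulting geometric series produces the two reported terms: the persistent radius $\frac{\Delta^2}{\hat{\mu}^2}(1-\delta\gamma/2)^Q$ from the constant $\Delta^2$ drive, and the transient $D_0\ell^T$ term; the extra $\frac{T}{\ell^2(\hat{L}-\hat{\mu})}(1-\delta\gamma/2)^Q$ factor multiplying $D_0\ell^T$ is the signature of convolving two sequences that decay at the \emph{same} rate $\ell$, which produces a $T\ell^T$ contribution.

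The main obstacle I expect is controlling the \emph{coupled} recursion rather than either piece alone: the consensus error feeds the optimization error and vice versa, and one must verify that $Q>Q_0$ is exactly the condition that decouples them, i.e. that $(1-\delta\gamma/2)^Q$ beats the amplification constants $c_1,c_2$ so that the combined Lyapunov function still contracts at rate $\ell$. The delicate bookkeeping is tracking how $\Delta^2$ propagates from the consensus recursion into the final $\frac{\Delta^2}{\hat{\mu}^2}$ radius, and carrying the precise powers of $\hat{\mu}$, $\hat{L}$, and $\gamma$ through the geometric sums; this is where the argument is most error-prone.
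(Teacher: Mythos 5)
Your proposal is sound and would yield the stated bound, but it follows a genuinely different route from the paper. You decompose $\|X_t^{(Q)}-X^\ast\|^2$ orthogonally about the \emph{true} column average $\bar{\x}_t=\frac1n X_t^{(Q)}\mathbf{1}$ (correctly observing that the gossip updates preserve this average because $(W-I)\mathbf{1}=\mathbf{0}$ and $S_0^{(0)}=Z_0^{(0)}=\mathbf{0}$), and you then treat the average's update as \emph{inexact} gradient descent, arriving at a bidirectionally coupled pair of recursions between the consensus error and the optimization error. The paper instead introduces a \emph{virtual} sequence $\bar{X}_{t+1}=\bar{X}_t-\eta\mathcal{P}_{\mathcal{L}}(\nabla F(\bar{X}_t))$ that runs exact projected gradient descent from the initialization and never sees the perturbations: Lemma \ref{lem:sc1} shows $\|\bar{X}_t-X^\ast\|\le(1-\mu\eta)^t\|\bar{X}_0-X^\ast\|$ unconditionally via co-coercivity, so the driver of the error recursion in Lemma \ref{lem:sc2} decays linearly regardless of the consensus error, and the coupling becomes one-directional — $e_t$ is bounded first and then fed into $\E_{\mathcal{C}}\|X_{t+1}^{(Q)}-X^\ast\|\le e_{t+1}+\ell\,\E_{\mathcal{C}}\|X_t^{(Q)}-X^\ast\|$, which is derived at the level of norms (not squares) using non-expansiveness of $\mathcal{P}_{\mathcal{L}}$ and Theorem 2.1.11 of \cite{nesterov2013introductory} applied at the actual iterates. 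What each approach buys: your orthogonal split is the more standard Choco-style bookkeeping and keeps the two error sources cleanly separated, but you must verify that the $2\times 2$ coupling matrix has spectral radius at most $\ell$ (this is exactly what $Q>Q_0$ buys) and you must absorb the Young's-inequality slack $(1+\beta)\ell^2\le\ell$ when coupling squared quantities — a step the paper avoids entirely by working with norms; the paper's virtual-sequence trick decouples the recursions at the cost of an error term $E_t^{(Q)}$ that conflates consensus error with the drift between the true average and the virtual trajectory. Both routes produce the same signature terms: the persistent radius $\frac{\Delta^2}{\hat{\mu}^2}(1-\frac{\delta\gamma}{2})^Q$ from the heterogeneity drive and the $T\ell^T$ factor from convolving two sequences decaying at the same rate, exactly as you anticipate.
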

}
\vspace{-6mm}
\subsection{Proof Outline}
Here, we briefly discuss the main ideas of the proof. Further details are in the supplementary material. 

\textit{Perturbed iterate analysis:} Our proof relies on analyzing the (virtual) average iterates \cite{rogozin2019projected,yuan2016convergence}
\begin{equation}
\begin{aligned}
&\bar{X}_{t+1} = \bar{X}_{t} - \eta\bar{\nabla} F(\bar{X}_{t}), \quad \bar{X}_{t} = [\bar{\x}_{t},\dots,\bar{\x}_{t}], \\&\bar{\nabla} F(\bar{X}_{t}) := \frac{1}{n}[\nabla f(\bar{\x}_{t}), \dots, \nabla f(\bar{\x}_{t})].
\end{aligned}
\end{equation}
In particular, we  show that the average iterates converge linearly under PLC, i.e., $F(\bar{X}_{t})-f^\ast = \O(\bar{\rho}^t)$
The iterates of DeLi-CoCo can be thought of as (random) perturbations of the virtual sequence $\{\bar{X}_{t}\}$: $X^{(q)}_t = \bar{X}_t+E^{(q)}_t $ for some (random) error matrix $E^{(q)}_t  \in \mathbb{R}^{d\times n}$. Evidently, we need  to derive an upper bound on the error terms $E^{(q)}_t$.

\textit{Bounding the error:} We first bound the error term of the $t\ts{th}$ iteration according to
\begin{equation}
	\E_{\mathcal{C}}\|E^{(Q)}_t\|^2 \leq e_t^2 := \E_{\mathcal{C}}\|X^{(Q)}_t - \bar{X}_{t}\|^2+\E_{\mathcal{C}}\|X^{(Q)}_t - Z^{(Q)}_t\|^2.
\end{equation} 
The starting point of analyzing $e_t^2$ is to leverage the linear convergence of the average iterates $\{\bar{X}_t\}$ and the gossiping steps with error feedback \cite{koloskova2019decentralized}. However, as stated in the proof of Theorem 2 in \cite{koloskova2019decentralized}, the choice of gossiping stepsize $\gamma$ there is very sub-optimal which in turn results in a very conservative bound. To remedy this we leverage a tighter approximation thereby improving the dependence of $\gamma$ and $\omega$. Establishing this last result we argue that the sequence $e_t^2$ and in turn the error term  also  converge linearly. Finally, leveraging this result, we establish the claim of the Theorems. 
\begin{figure*}[!htb]
\centering 
\subfloat[$\gamma = 0.05$]{
    \label{fig:1_a}
	\includegraphics[width=0.31\textwidth]{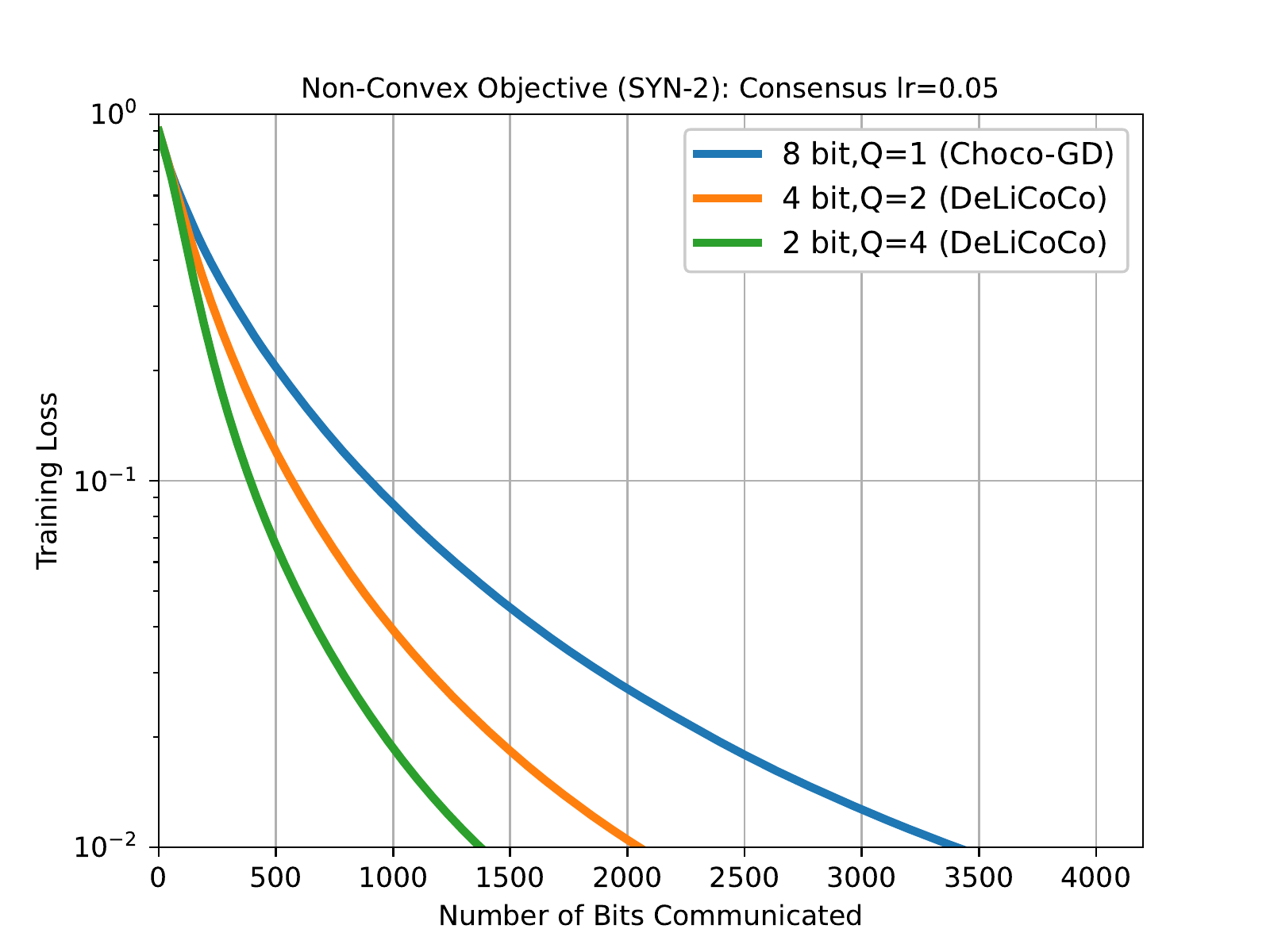}
	} 
\hspace{-0.3cm}
\subfloat[$\gamma = 0.1$]{
    \label{fig:1_b}
	\includegraphics[width=0.31\textwidth]{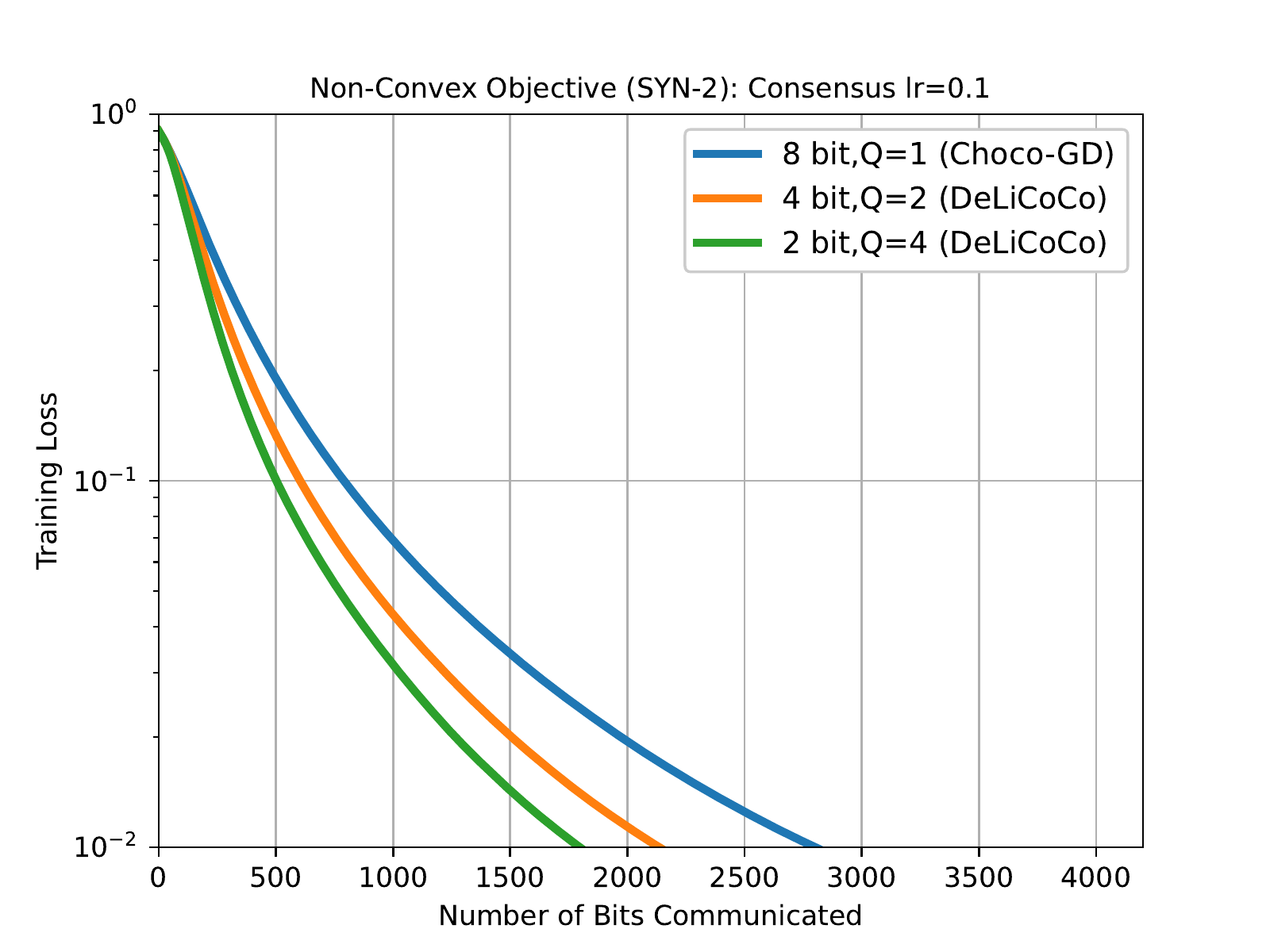}
	} 
\hspace{-0.3cm}
\subfloat[$\gamma = 0.15$]{
    \label{fig:1_c}
	\includegraphics[width=0.31\textwidth]{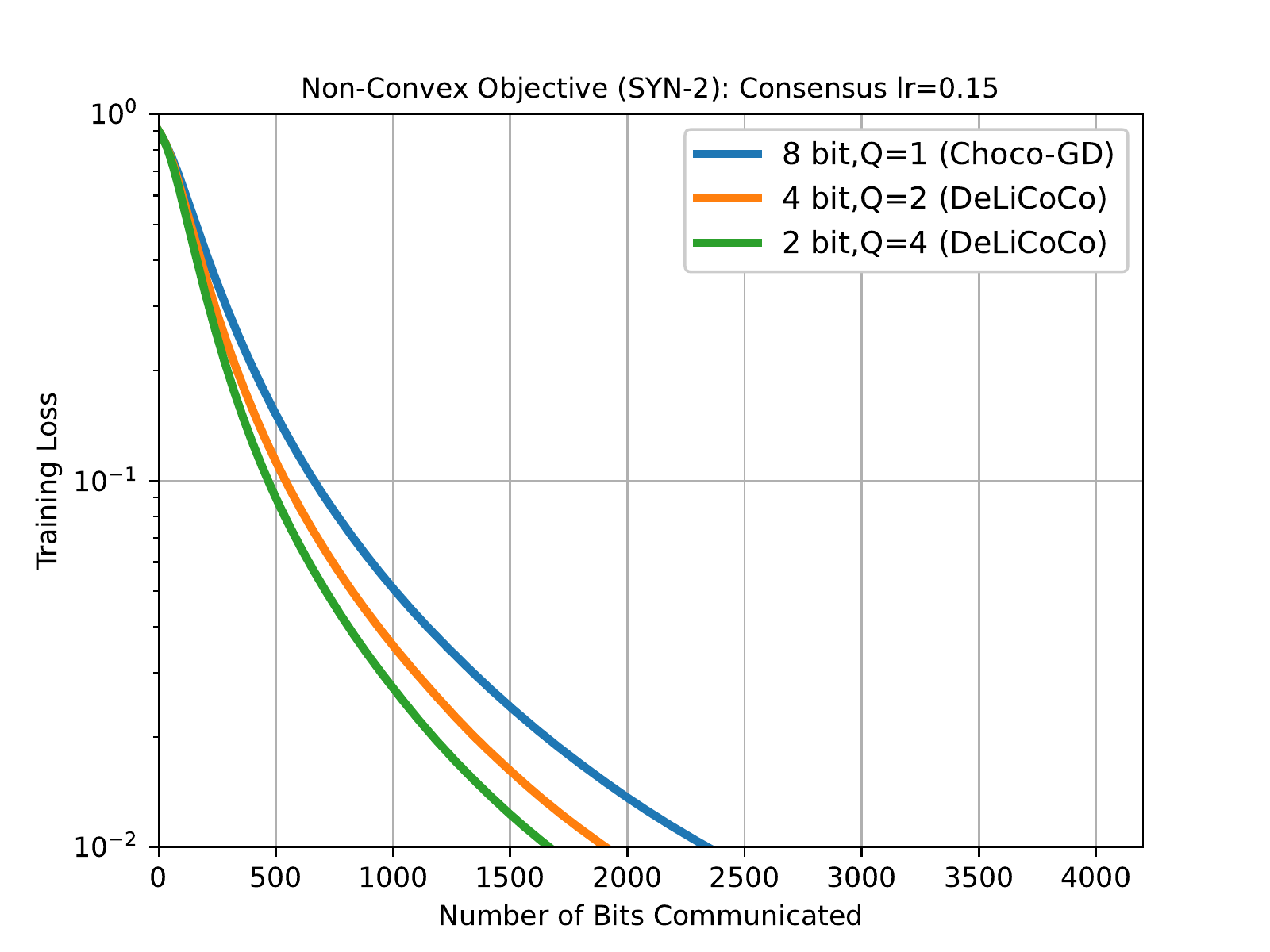}
	} 
\caption{\footnotesize Effect of different $(Q,b)$ pairs (where $b$ denotes the number of bits in qsgd) such that $Q b = 8$, on the total number of bits communicated for SYN-2, with three different consensus learning rates $\gamma$. 
In all three plots, torus topology is used with $n = 16$, $\ell_2$ regularization value = 0.001, and $\eta = 0.1$.}
\label{fig:1}
\end{figure*}
\begin{figure*}[t]
\centering 
\subfloat[$n = 9$]{
    \label{fig:2_a}
	\includegraphics[width=0.31\textwidth]{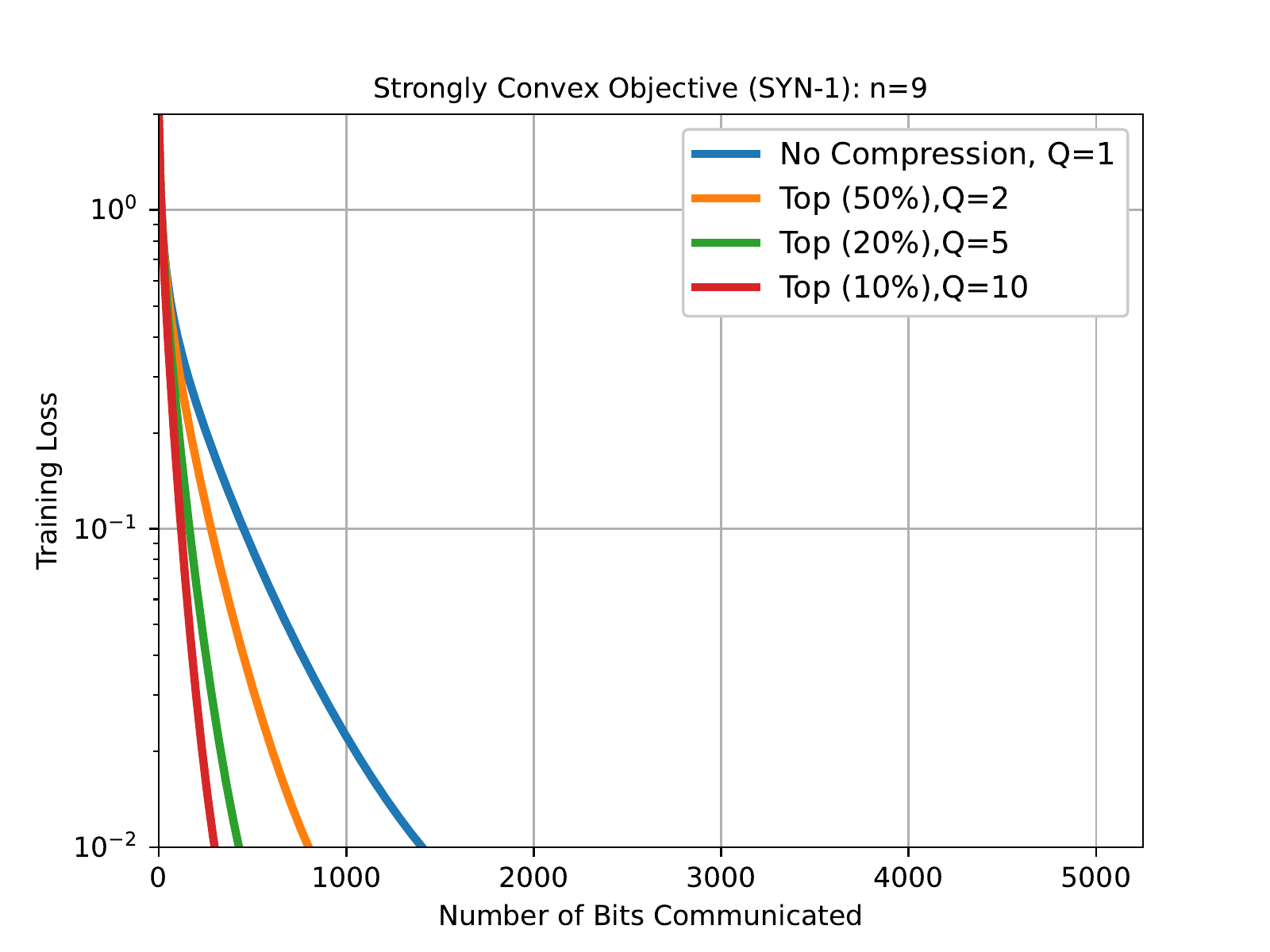}
	} 
\hspace{-0.3cm}
\subfloat[$n = 16$]{
    \label{fig:2_b}
	\includegraphics[width=0.31\textwidth]{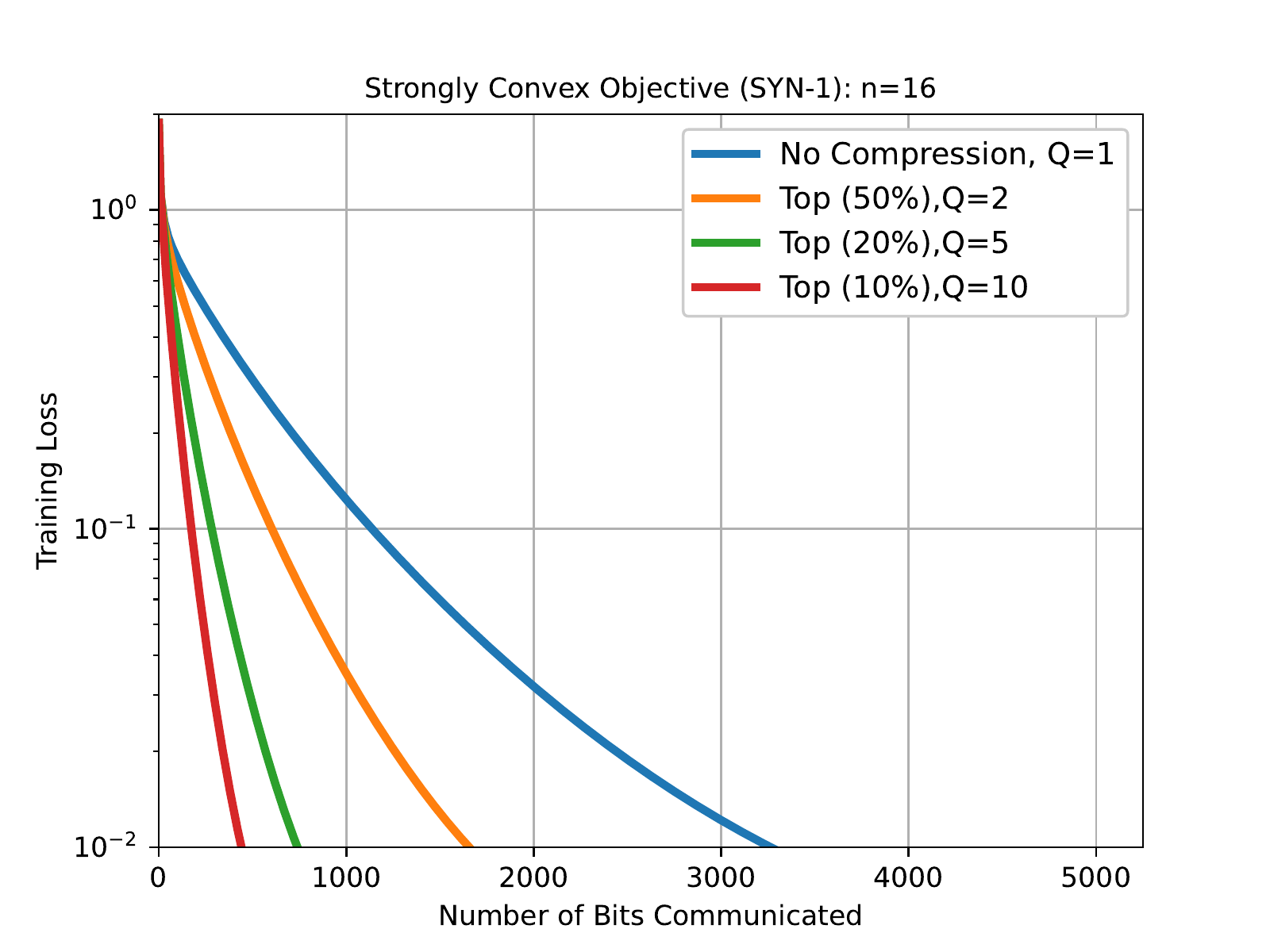}
	} 
\hspace{-0.3cm}
\subfloat[$n = 25$]{
    \label{fig:2_c}
	\includegraphics[width=0.31\textwidth]{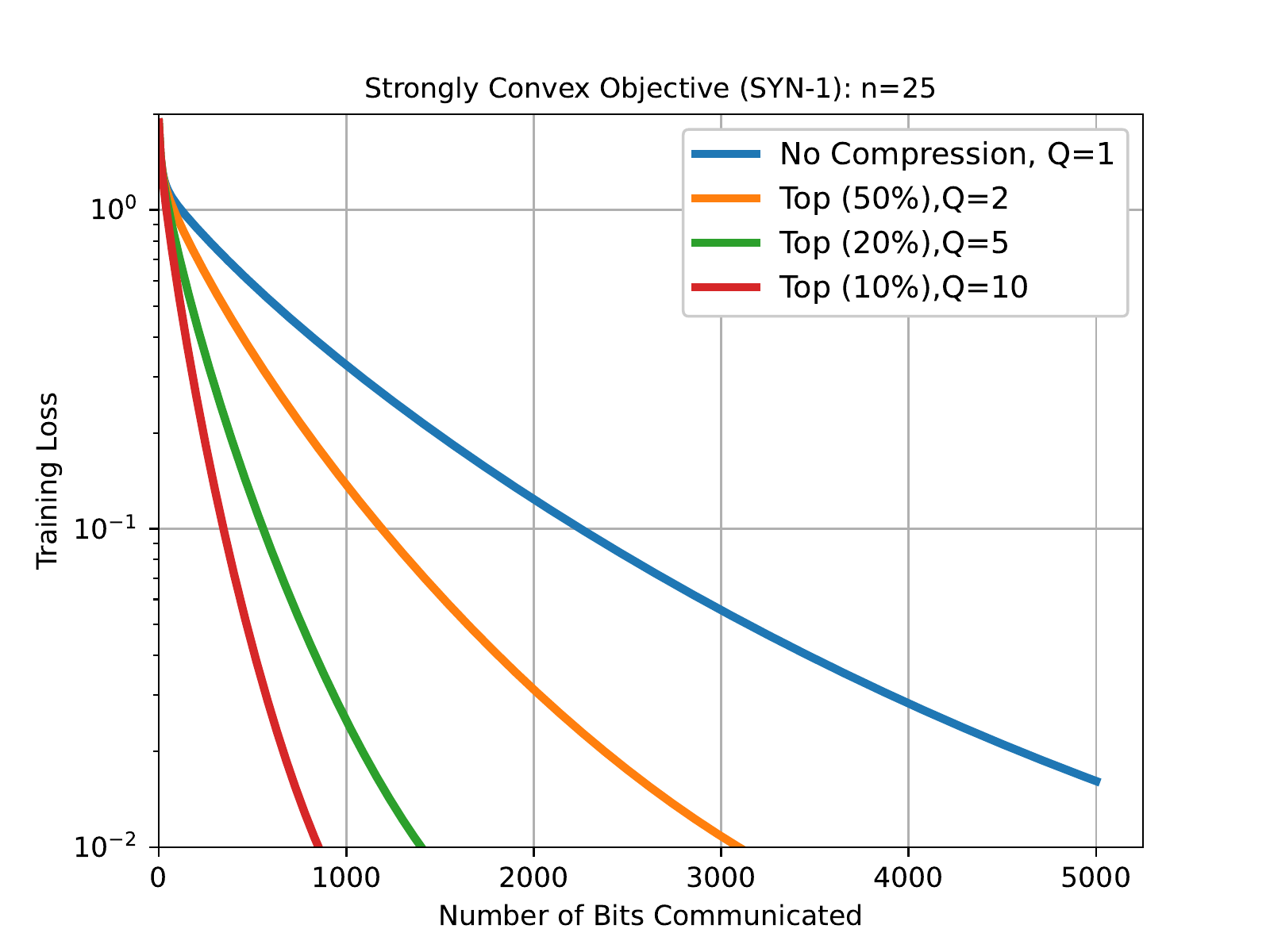}
	} 
\caption{\footnotesize Effect of different $(Q,\omega)$ pairs (where $\omega$ denotes the percentage of largest magnitude co-ordinates retained in the top-$k$ quantization) such that $Q \omega = 100$, on the total number of bits communicated for SYN-1. We consider the torus topology with three different values of $n$.
In all three plots, $\gamma = 0.05$, $\ell_2$ regularization value = 0.001, and $\eta = 0.1$.}
\label{fig:2}
\end{figure*}
\section{Verifying the Theory}\label{sec:exp}
We verify our theory on three simple yet common machine learning problems - logistic regression, linear regression and non-linear regression. 
{\color{black} We consider the top($k$),$\textrm{qsgd}$, and rand($\omega$) compression schemes and 
the ring and torus topologies to represent the communication graph of the network (see, e.g. \cite{xiao2004fast,koloskova2019decentralized}). 
}Before describing our experimental set-up, we describe the datasets.

Let $\{s^{(i)}_{1},\ldots,s^{(i)}_{n_{i}}\}$ denote the samples being processed in the $i\ts{th}$ node where $n_{i}$ is the total number of samples in the $i\ts{th}$ node.
Then, $f_{i}(\x) = \frac{1}{n_{i}} \sum_{j=1}^{n_{i}} \ell (\x,s^{(i)}_{j})$, where $\ell(.)$ denotes the loss function of the regression tasks that we explain next.

\textit{Logistic Regression:}
We use a binary version of MNIST \cite{lecun1998gradient} where the first five classes are treated as class 0 and the rest as class 1. We train a classifier with the binary cross-entropy loss. 
We consider a decentralized setting where the data is evenly distributed among all the nodes in a challenging sorted setting (sorted based on labels) where at most one node  acquires examples from both classes. 

\textit{Linear Regression:}
We train a linear regression model on $m=10000$ synthetic data samples $\{(\mathbf{a}_{i},y_{i})\}_{i=1}^{m}$ generated according to $y_{i} = \langle \boldsymbol{\theta}^\ast, \mathbf{a}_{i} \rangle + e_{i}$, where $\boldsymbol{\theta}^\ast \in \mathbb{R}^{2000}$, the $i\ts{th}$ input $\mathbf{a}_{i} \sim \mathcal{N}(\mathbf{{0}}, I_{2000})$, and noise $e_{i} \sim \mathcal{N}(0,0.05)$. We refer to this 
dataset as SYN-1. Here, we use the squared loss function with $\ell_{2}$-regularization.

\textit{Non-Convex Non-Linear Regression:}
We train a non-linear regression model on $m=10000$ synthetic data samples $\{(\mathbf{a}_{i},y_{i})\}_{i=1}^{m}$ generated as $y_i = \text{relu}(\langle \boldsymbol{\theta}^\ast, \mathbf{a}_i \rangle) + e_i$, where $\boldsymbol{\theta}^\ast \in \mathbb{R}^{2000}$, the $i\ts{th}$ input $\mathbf{a}_i \sim \mathcal{N}(\vec{{0}}, I_{2000})$, $e_i \sim \mathcal{N}(0,0.05)$ and $\text{relu}(z) = \text{max}(z,0)$ (i.e. the standard ReLU function). 
We call this synthetic dataset SYN-2 henceforth. We model this task as training a one-layer neural network having ReLU activation with the squared loss function and {$\ell_{2}$-regularization.

{
\color{black}
\textbf{Experimental Setting -- Fixed Communication Budget Per Iteration.}
In order to illustrate the value of having more gossiping steps (i.e. larger $Q$), we consider a simple setting where our communication budget in every iteration (involving one gradient computation step and $Q$ gossiping steps) is fixed. So for top-$k$/rand-$k$, we keep $Q k$ constant, whereas for $b$-bit qsgd, $Q b$ is kept constant. Since $Q k$/$Q b$ is kept constant, the total number of bits communicated will be proportional to the number of iterations $T$ (which is the  horizontal axis of the plots in Figures \ref{fig:1} and \ref{fig:2}).
In \Cref{fig:1}, we plot the training loss on the vertical axis (in log-scale) vs. the number of bits (order wise) on the horizontal axis for SYN-2 (non-convex non-linear regression) with qsgd. We maintain $Qb = 8$ and consider 3 different consensus learning rates $\gamma = \{0.05,0.1,0.15\}$ (keeping everything else the same).

In \Cref{fig:2}, we show similar plots for SYN-1 (strongly convex linear regression task) with top-$k$. Let $\omega = (k/d)*100$ ($d$ being the dimension of the vectors). We keep $Q\omega = 100$ (note that this is the same as maintaining $Q k$ constant) and consider 3 different values of the number of nodes $n = \{9,16,25\}$ (keeping everything else the same).
}

\textbf{Significance of large Q.}
In both \Cref{fig:1} and \Cref{fig:2}, observe that higher $Q$ at the expense of more aggressive compression leads to fewer gross total number of bits communicated -- as predicted by the results established in the beginning of  Section \ref{sec:dissc}. Note that if latency/synchronization time between the nodes is negligible, then having higher $Q$ also leads to faster convergence (since the total number of bits is proportional to $T$ in our setting). Further, in \Cref{fig:2} (for SYN-1, which has a strongly convex objective), observe that higher $Q$ results in almost straight line curves (recall that the training loss is plotted in log-scale) -- implying linear convergence. Further note that for $Q=1$ the curves are not straight lines. This verifies our theoretical findings in Corollary \ref{cor:1}. 


In \Cref{fig:3}, we show results for the logistic regression task on MNIST with rand($\omega$). \footnote{The results with rand($\omega$) and rand2($\omega$) were nearly identical and so we have only shown the plots for rand($\omega$) here.}  
Let $\omega = (k/d)*100$ ($d$ being the dimension of the vectors). We keep $Q\omega = 100$ (note that this is the same as maintaining $Q k$ constant) and consider the two most commonly used topologies, ring and torus with $n=9$. 

 Consistent with the results in \Cref{fig:1} and \Cref{fig:2}, observe that in \Cref{fig:3}, using a higher $Q$ at the expense of more aggressive compression leads to fewer gross total number of bits communicated -- for both ring and torus topologies. Also if latency/synchronization time between the nodes is negligible, then having higher $Q$ also leads to faster convergence (since the total number of bits is proportional to $T$ in our setting). 
\begin{figure*}[t]
\centering 
\subfloat[Ring]{
    \label{fig:3_a}
	\includegraphics[width=0.47\textwidth]{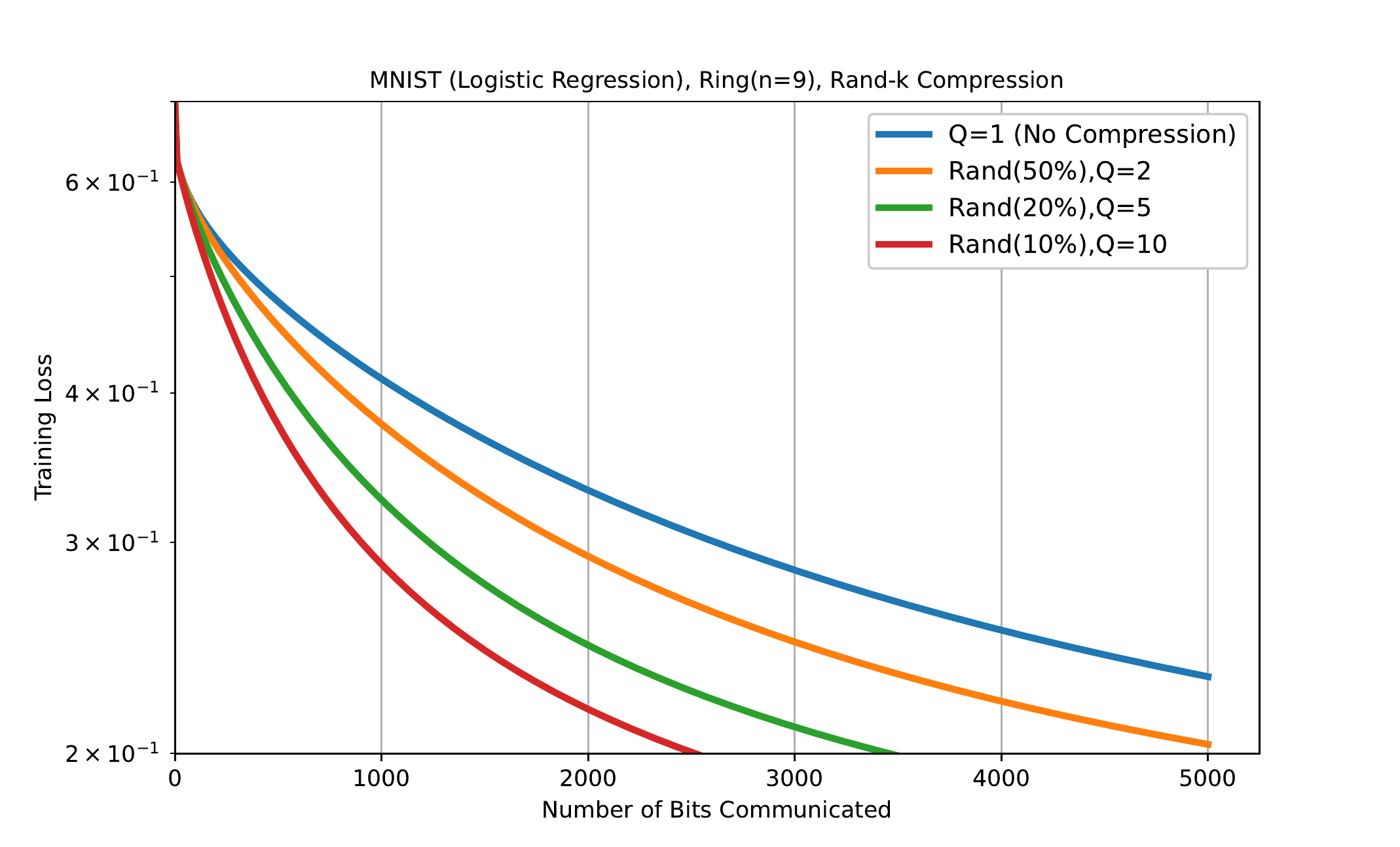}
	} 
\hspace{-0.3cm}
\subfloat[Torus]{
    \label{fig:3_b}
	\includegraphics[width=0.49\textwidth]{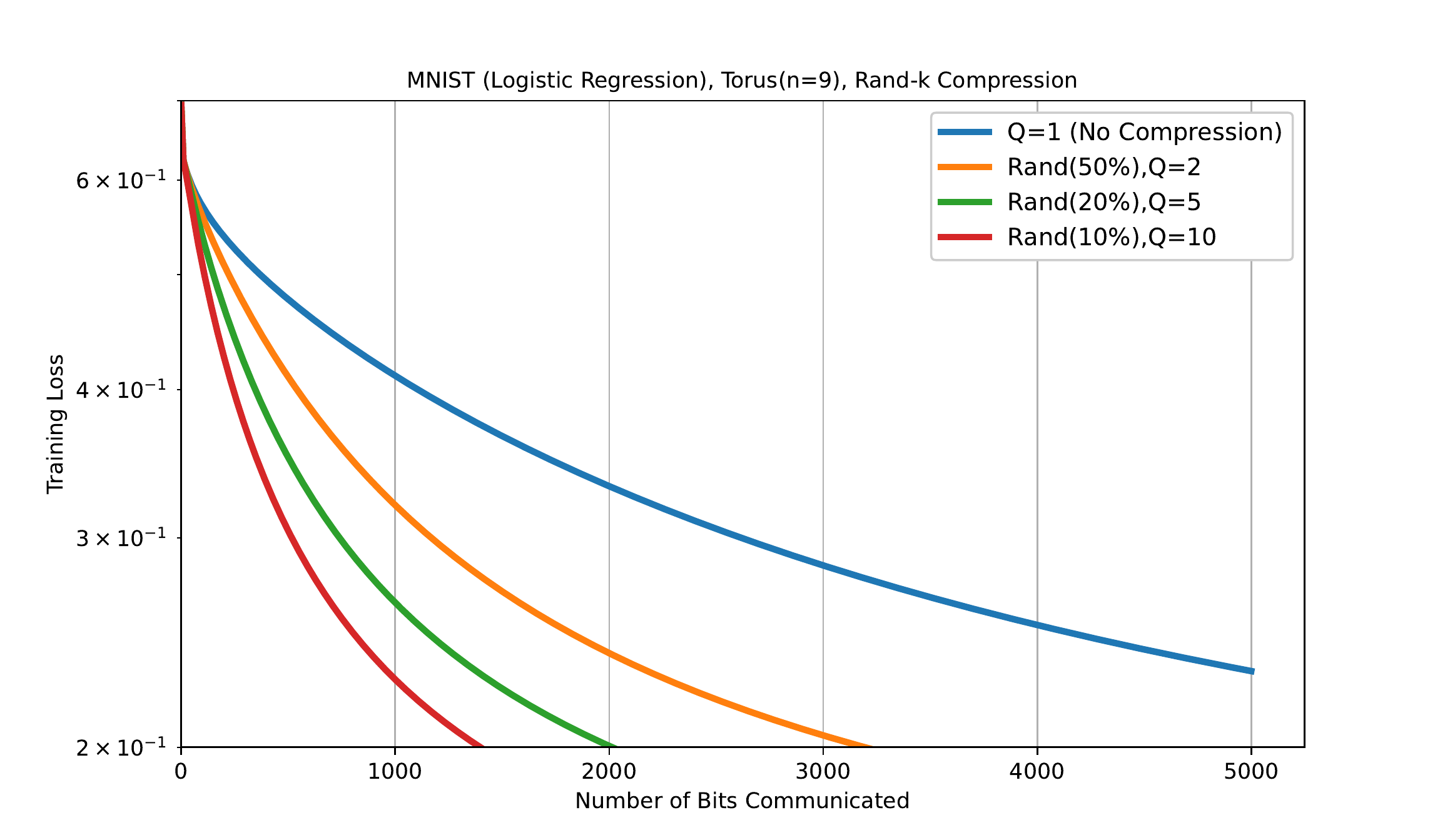}
	} 
\caption{\footnotesize Effect of different $(Q,\omega)$ pairs (where $\omega$ denotes the percentage of random co-ordinates picked in the rand quantization) such that $Q \omega = 100$, on the total number of bits communicated for MNIST logistic regression task. We consider the ring and torus topology with $n = 9$.
In both plots, $\gamma = 0.05$, $\ell_2$ regularization value = 0.001, and $\eta = 0.2$.}
\label{fig:3}
\end{figure*}

\section{Conclusion}
We proposed DeLi-CoCo, a simple and communication efficient collaborative learning algorithm that achieves linear convergence in smooth nonconvex learning tasks that satisfy the PL condition, under arbitrary communication compression.
As part of the future work, it would be of interest to consider extensions to directed and time-varying networks and use of stochastic local gradients.
Another extension would be incorporating momentum to have an accelerated version of the proposed method.
\clearpage
\section*{A. Proofs}
First, we introduce notation and state a few useful facts. Let $\mathcal{L}$ be a {\it linear subspace} of $d\times n$ matrices having identical columns with the projection operator $\mathcal{P}_{\mathcal{L}}(.)$ such that for all $X = [\x_1,\dots,\x_n]$ we have $\mathcal{P}_{\mathcal{L}}(X) = [\bar{\x},\dots,\bar{\x}]$, where $\bar{\x} = \frac{1}{n}\sum_i \x_i$. 

It then becomes evident that the update rule of the average iterate can be written equivalently as
\begin{equation}
	 \begin{aligned}
    \bar{X}_{t+1} &= \mathcal{P}_{\mathcal{L}}(\bar{X}_{t} - \eta\nabla F(\bar{X}_{t}))\\
    & = \bar{X}_{t} - \eta\mathcal{P}_{\mathcal{L}}(\nabla F(\bar{X}_{t}))\\
    \end{aligned}
\end{equation}
where $\mathcal{P}_{\mathcal{L}}(\nabla F(\bar{X}_{t}))=\frac{1}{n}[\nabla f(\bar{\x}_t),\dots, \nabla f(\bar{\x}_t)]$. While the second line does not hold in general, it does in our case due to the definition of $\mathcal{P}_{\mathcal{L}}$ and the fact that $\bar{X}_{t}\in \mathcal{L}$. Note that due to the defined projection operator, for $X^\ast = [\x^\ast,\dots,\x^\ast]$ where $\x^\ast$ is any of the global optima, it holds that $X^\ast \in \mathcal{L}$ and $ \mathcal{P}_{\mathcal{L}}(\nabla F(X^\ast)) = \mathbf{0}\in \mathcal{L}$.

Next, recall the non-expansiveness of projection (see Lemma 2.2.7 and  Corollary 2.2.3 in \cite{nesterov2013introductory}),
\begin{equation}\label{eq:nexpproj}
    \|\mathcal{P}_{\mathcal{L}}(X)-\mathcal{P}_{\mathcal{L}}(Y)\|\leq \|X-Y\|.
\end{equation}

Finally, we explore smoothness and strong convexity of $F(\cdot)$ (in the convex case). Since $\bar{X}_{i}\in \mathcal{L}$ for all  $i\in [T]$ and $\bar{X}^\ast = X^\ast \in \mathcal{L}$, by strong convexity and smoothness of $f(\cdot)$ it holds that
	\begin{equation}
	F(\bar{X}_{i})\leq F(\bar{X}_{j})+\langle\bar{X}_{i}-\bar{X}_{j}, \nabla F(\bar{X}_{j})\rangle+\frac{L}{2}\|\bar{X}_{i}-\bar{X}_{j}\|^2
	\end{equation}
	\begin{equation}
	F(\bar{X}_{i})\geq F(\bar{X}_{j})+\langle\bar{X}_{i}-\bar{X}_{j}, \nabla F(\bar{X}_{j})\rangle+\frac{\mu}{2}\|\bar{X}_{i}-\bar{X}_{j}\|^2
	\end{equation}
	 since $\|\bar{X}_{i}-\bar{X}_{j}\|^2 = n \|\bar{\x}_{i}-\bar{\x}_{j}\|^2$. Therefore, $F(\cdot)$ is $\mu$-strongly convex (in the convex scenario) and $L$-smooth on $\mathcal{L}$ and $\hat{\mu}$-strongly convex (in the convex scenario) and $\hat{L}$-smooth everywhere else.
	 
	 \begin{remark}
	 In the following proofs, for simplicity we make the simplifying assumption that the clients initialize their parameters such that $F(X_{0}^{(Q)}) = F(\bar{X}_{0})$. This can hold easily by setting all initial vectors to be equal to a vector $\x_0\in \R^d$.
	 \end{remark}
\subsection*{A.1. Proof of Convergence Under PLC}
As stated in the main paper, the proof relies on a perturbation analysis and interpreting the iterates of the proposed scheme as random perturbations of a virtual sequence $\bar{X}_{t}$ having identical columns. Then, leveraging linear convergence of the gossip subroutine and the fact $F(\bar{X}_{t})-f^\ast$ decreases linearly, we can show linear convergence of the expected difference $\E_{\mathcal{C}}\|\bar{X}_{t}-X_t^{(Q)}\|$ and, in turn, the error of the proposed scheme. Recall that
\begin{equation}
f^\ast:=\min_{\mathbf{x}} \left[f(\mathbf{x}):=\sum_{i=1}^n f_i(\mathbf{x})\right].
\end{equation}

Lemma \ref{lem:plc1} establishes linear convergence of the virtual average sequence in terms of the suboptimality of the function values. 
\begin{lemma}\label{lem:plc1}
	Let $\bar{X}_{t+1} = \bar{X}_{t} - \eta\mathcal{P}_{\mathcal{L}}(\nabla F(\bar{X}_{t}))$ denote a sequence such that $\bar{X}_{t} = [\bar{\x}_{t},\dots,\bar{\x}_{t}]$, and let $\eta\leq 1/L$. Then
	\begin{equation}
	F(\bar{X}_{t}) - f^\ast \leq [F(\bar{X}_{0})-f^\ast]\left(1-2\frac{\mu}{n}\eta(1-\frac{L\eta}{2})\right)^t.
	\end{equation}
\end{lemma}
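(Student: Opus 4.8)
The plan is to establish the recursion directly from the update rule and then unroll it. First I would recall that $\bar{X}_{t} \in \mathcal{L}$ for all $t$, so that $\mathcal{P}_{\mathcal{L}}(\nabla F(\bar{X}_{t})) = \frac{1}{n}[\nabla f(\bar{\x}_t),\dots,\nabla f(\bar{\x}_t)]$ as established just above the lemma. The key observation is that $F(\cdot)$ restricted to $\mathcal{L}$ is $L$-smooth, so I can apply the smoothness inequality with $\bar{X}_i = \bar{X}_{t+1}$ and $\bar{X}_j = \bar{X}_{t}$, substituting the update $\bar{X}_{t+1}-\bar{X}_{t} = -\eta\,\mathcal{P}_{\mathcal{L}}(\nabla F(\bar{X}_{t}))$. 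This is exactly the standard descent-lemma computation adapted to the subspace $\mathcal{L}$.

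The main computation is then to carry out the per-step decrease. After substitution the smoothness bound reads
\begin{equation}
F(\bar{X}_{t+1}) \leq F(\bar{X}_{t}) - \eta\Bigl(1-\frac{L\eta}{2}\Bigr)\|\mathcal{P}_{\mathcal{L}}(\nabla F(\bar{X}_{t}))\|^2,
\end{equation}
where I have used $\langle \mathcal{P}_{\mathcal{L}}(\nabla F(\bar{X}_{t})), \mathcal{P}_{\mathcal{L}}(\nabla F(\bar{X}_{t}))\rangle = \|\mathcal{P}_{\mathcal{L}}(\nabla F(\bar{X}_{t}))\|^2$. The next step is to relate the Frobenius norm on $\mathcal{L}$ to the PL inequality for $f$. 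Since $\|\mathcal{P}_{\mathcal{L}}(\nabla F(\bar{X}_{t}))\|^2 = \frac{1}{n}\|\nabla f(\bar{\x}_t)\|^2$ (because the $n$ identical columns each equal $\frac{1}{n}\nabla f(\bar{\x}_t)$, giving $n \cdot \frac{1}{n^2}\|\nabla f(\bar{\x}_t)\|^2$), and since $F(\bar{X}_{t}) = f(\bar{\x}_t)$ and $f^\ast$ is the common minimum value, I can apply the Polyak-\L{}ojasiewicz condition $\|\nabla f(\bar{\x}_t)\|^2 \geq 2\mu(f(\bar{\x}_t)-f^\ast)$ to obtain
\begin{equation}
F(\bar{X}_{t+1}) - f^\ast \leq \Bigl(1 - \frac{2\mu}{n}\eta\bigl(1-\tfrac{L\eta}{2}\bigr)\Bigr)\bigl(F(\bar{X}_{t})-f^\ast\bigr).
\end{equation}

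Finally I would iterate this one-step contraction from $t$ down to $0$ to get the claimed geometric bound, and note that the contraction factor lies in $(0,1)$ precisely because $\eta \leq 1/L$ guarantees $1-\frac{L\eta}{2} \geq \frac{1}{2} > 0$ (and the product $\frac{2\mu}{n}\eta(1-\frac{L\eta}{2})$ is positive but can be checked to be below $1$). I expect the only subtle point — and the main obstacle — to be the bookkeeping of the $1/n$ factors: correctly tracking that $F$ has smoothness $L$ on $\mathcal{L}$ while the PL constant $\mu$ is for $f$, and that the norm identity $\|\mathcal{P}_{\mathcal{L}}(\nabla F(\bar{X}_{t}))\|^2 = \frac{1}{n}\|\nabla f(\bar{\x}_t)\|^2$ introduces exactly the factor $\frac{1}{n}$ appearing in the statement. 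Everything else is the textbook PL-convergence argument.
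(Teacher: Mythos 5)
Your proof is correct and follows essentially the same route as the paper's: a per-step descent inequality $F(\bar{X}_{t+1}) \le F(\bar{X}_{t}) - \eta\bigl(1-\tfrac{L\eta}{2}\bigr)\|\mathcal{P}_{\mathcal{L}}(\nabla F(\bar{X}_{t}))\|^2$, the norm identity $\|\mathcal{P}_{\mathcal{L}}(\nabla F(\bar{X}_{t}))\|^2 = \tfrac{1}{n}\|\nabla f(\bar{\x}_{t})\|^2$, the PL condition, and unrolling the resulting contraction. The only (immaterial) difference is that you get the descent inequality directly from the descent lemma via $\langle\mathcal{P}_{\mathcal{L}}(\nabla F(\bar{X}_{t})),\nabla F(\bar{X}_{t})\rangle=\|\mathcal{P}_{\mathcal{L}}(\nabla F(\bar{X}_{t}))\|^2$ --- valid because $\mathcal{P}_{\mathcal{L}}$ is an orthogonal projection onto a linear subspace --- whereas the paper routes through the optimality condition of the quadratic model $g(\bar{Y})$ to obtain the same bound.
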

\begin{proof}
First,  the equivalent update rule for $\bar{X}_t$, i.e.
\begin{equation}
   \bar{X}_{t+1} =  \arg\min_{\bar{Y}\in\mathcal{L}}\{g(\bar{Y}):=F(\bar{X}_{t})+\langle\nabla F(\bar{X}_{t}),\bar{Y}-\bar{X}_{t}\rangle+\frac{1}{2\eta}\|\bar{Y}-\bar{X}_{t}\|^2\}.
\end{equation}
to establish a useful smoothness result.  

From the optimally condition for the convex function $g(\bar{Y})$ (see Theorem 2.2.9 in \cite{nesterov2013introductory}) and the fact that $\bar{X}_{t+1}=\bar{X}_{t} - \eta\mathcal{P}_{\mathcal{L}}(\nabla F(\bar{X}_{t}))$, it follows that
\begin{equation}
    \langle\nabla F(\bar{X}_{t})-\mathcal{P}_{\mathcal{L}}(\nabla F(\bar{X}_{t})),\bar{Y}-\bar{X}_{t+1}\rangle\geq 0
\end{equation}
for any $\bar{Y}\in\mathcal{L}$. Now, using this result and choosing $\bar{Y}= \bar{X}_{t}\in\mathcal{L} $ we have
\begin{equation}
    \begin{aligned}
    F(\bar{X}_{t}) &\geq F(\bar{X}_{t})+\langle\nabla F(\bar{X}_{t})-\mathcal{P}_{\mathcal{L}}(\nabla F(\bar{X}_{t})),\bar{X}_{t+1}-\bar{X}_{t}\rangle\\
    & \geq g(\bar{X}_{t+1}) -\frac{1}{2\eta}\|\bar{X}_{t+1}-\bar{X}_{t}\|^2 -\langle\mathcal{P}_{\mathcal{L}}(\nabla F(\bar{X}_{t})),\bar{X}_{t+1}-\bar{X}_{t}\rangle\\
    &=g(\bar{X}_{t+1}) +\frac{\eta}{2}\|\mathcal{P}_{\mathcal{L}}(\nabla F(\bar{X}_{t}))\|^2,
        \end{aligned}
\end{equation}
where we used the definition of $g(\cdot)$ and the update rule $\bar{X}_{t+1}=\bar{X}_{t} - \eta\mathcal{P}_{\mathcal{L}}(\nabla F(\bar{X}_{t}))$. Now, $L$-smoothness of $F(\cdot)$ on $\mathcal{L}$ and the fact that $\eta\leq 1/L$ imply
    \begin{equation}
    \begin{aligned}
    F(\bar{X}_{t})&\geq F(\bar{X}_{t+1})+(\frac{1}{2\eta}-\frac{L}{2})\|\bar{X}_{t+1}-\bar{X}_{t}\|^2+\frac{\eta}{2}\|\mathcal{P}_{\mathcal{L}}(\nabla F(\bar{X}_{t}))\|^2\\
    &=F(\bar{X}_{t+1})+\eta(1-\frac{\eta L}{2})\|\mathcal{P}_{\mathcal{L}}(\nabla F(\bar{X}_{t}))\|^2.
    \end{aligned}
\end{equation}
Finally, using the definition of $\mathcal{P}_{\mathcal{L}}(\nabla F(\bar{X}_{t}))$, we can relate this last result to the gradient of $f$ at $\bar{x}_t$ to obtain
	\begin{equation}\label{eq:sm1}
	F(\bar{X}_{t+1})\leq F(\bar{X}_{t})-\frac{\eta}{n}(1-\frac{L\eta}{2})\|\nabla f(\bar{\x}_{t})\|^2.
	\end{equation}
Having established the above {\it smoothness} result and recalling that $f(\cdot)$ satisfies PLC we have
	\begin{equation}\label{eq:plc1}
	\|\nabla f(\bar{\x}_{t})\|^2 \geq  2\mu[f(\bar{\x}_{t})-f^\ast].
	\end{equation}
	Subtracting $f^\ast$ from both sides of \eqref{eq:sm1}, using \eqref{eq:plc1}, and noting $F(\bar{X}_{t}) = f(\bar{\x}_{t})$ yields
	\begin{equation}
	F(\bar{X}_{t+1})-f^\ast\leq [F(\bar{X}_{t})-f^\ast]\left(1-2\frac{\mu}{n}\eta(1-\frac{L\eta}{2})\right).
	\end{equation}
	Finally, recursively applying the above result establishes the stated expression.
\end{proof}
The next Lemma establishes a bound on the gradient norm  that will assist us in the proof of  Theorem 1.
\begin{lemma}\label{lem:plc2}
Recall $F$ is $L$-smooth and satisfies PLC with parameter $\mu$ on  $\mathcal{L}$. For any $\bar{X}\in \mathcal{L}$, let $\bar{X}^\ast$ be the projection of $\bar{X}$ on the solution set of $\min_\x f(\x)$. Then, we have
\begin{equation}
    \|\nabla F(\bar{X})-\nabla F(\bar{X}^\ast)\|^2 \leq \frac{L^2}{2\mu}(F(\bar{X})-f^\ast).
\end{equation}
\end{lemma}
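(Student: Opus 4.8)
The plan is to combine the $L$-smoothness of $F$ on $\mathcal{L}$ with the Polyak-\L{}ojasiewicz inequality, exploiting the fact that $\bar{X}^\ast$ is an optimizer and hence $\nabla F(\bar{X}^\ast) = \mathcal{P}_{\mathcal{L}}(\nabla F(\bar{X}^\ast)) = \mathbf{0}$ on $\mathcal{L}$. This reduces the claim to bounding $\|\nabla F(\bar{X})\|^2$ by $\frac{L^2}{2\mu}(F(\bar{X}) - f^\ast)$. The standard route for this kind of estimate is to upper bound the squared gradient norm by $L^2$ times the squared distance to the optimum, and then to upper bound that distance using the PL-derived inequality relating $F(\bar{X}) - f^\ast$ to the same distance.

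First I would invoke smoothness: since $F$ is $L$-smooth on $\mathcal{L}$ and $\nabla F(\bar{X}^\ast) = \mathbf{0}$ (restricted appropriately to $\mathcal{L}$), we get $\|\nabla F(\bar{X}) - \nabla F(\bar{X}^\ast)\|^2 = \|\nabla F(\bar{X})\|_{\mathcal{L}}^2 \le L^2 \|\bar{X} - \bar{X}^\ast\|^2$. Next I would translate the vector statement to the scalar function $f$: because $\bar{X} = [\bar{\x},\dots,\bar{\x}]$ has identical columns, we have $\|\bar{X} - \bar{X}^\ast\|^2 = n\|\bar{\x} - \x^\ast\|^2$ and $F(\bar{X}) - f^\ast = f(\bar{\x}) - f^\ast$, so everything can be phrased in terms of $\bar{\x}$ and $f$. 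The remaining work is to show $\|\bar{\x} - \x^\ast\|^2 \le \frac{2}{\mu}(f(\bar{\x}) - f^\ast)$, i.e. a quadratic-growth bound, which is a known consequence of PL (a function satisfying PL with parameter $\mu$ also satisfies the quadratic growth condition with the same $\mu$, where $\x^\ast$ is taken as the projection onto the solution set). Chaining these gives $\|\nabla F(\bar{X})\|^2 \le L^2 \cdot n \cdot \frac{2}{\mu n}(F(\bar{X}) - f^\ast)$, though I would keep careful track of the $1/n$ factors coming from the identical-column normalization to land exactly on the stated $\frac{L^2}{2\mu}$ constant.

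The main obstacle I anticipate is deriving the quadratic-growth inequality cleanly from PL. The cleanest classical argument (Karimi-Nutini-Schmidt) integrates along the gradient flow from $\bar{\x}$ toward the solution set: defining $g(t)$ along a gradient-descent-like path and using PL to bound $\frac{d}{dt}\sqrt{f - f^\ast}$, one obtains that the trajectory reaches the solution set at a distance controlled by $\sqrt{f(\bar{\x}) - f^\ast}$, yielding $\|\bar{\x} - \x^\ast\| \le \sqrt{2/\mu}\,\sqrt{f(\bar{\x}) - f^\ast}$ with $\x^\ast$ the projection. I would need to verify that this gradient-flow / error-bound argument applies verbatim on the subspace $\mathcal{L}$, using that $F$ restricted to $\mathcal{L}$ inherits both $L$-smoothness and $\mu$-PL (as established in the preliminary computation of the excerpt). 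Once quadratic growth is in hand, the final assembly is purely a matter of substituting and matching constants, which is routine.

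\medskip
\noindent\textbf{Alternative.} If invoking the full quadratic-growth machinery is undesirable, a shorter route bounds $\|\nabla F(\bar{X})\|^2$ directly. By $L$-smoothness on $\mathcal{L}$ one has $f(\bar{\x}) - \tfrac{1}{2L}\|\nabla f(\bar{\x})\|^2 \ge \min_\y f(\y) = f^\ast$ (the descent lemma applied at the point obtained by a single gradient step of size $1/L$), hence $\|\nabla f(\bar{\x})\|^2 \le 2L(f(\bar{\x}) - f^\ast)$. This already gives a bound of the form $\frac{L}{?}$ rather than $\frac{L^2}{2\mu}$, so it alone does not reach the stated constant; it is, however, a useful sanity check and shows the inequality is not tight unless $L/\mu$ is large. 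Reconciling this with the stated $\frac{L^2}{2\mu}$ confirms that the intended proof really does pass through quadratic growth, so I would commit to the gradient-flow argument above as the principal line of attack.
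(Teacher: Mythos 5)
Your proposal matches the paper's proof essentially verbatim: bound $\|\nabla F(\bar{X})-\nabla F(\bar{X}^\ast)\|^2$ by $L^2\|\bar{X}-\bar{X}^\ast\|^2$ via smoothness of $F$ on $\mathcal{L}$, then invoke the quadratic-growth consequence of PLC (the Karimi--Nutini--Schmidt argument you describe) to control $\|\bar{X}-\bar{X}^\ast\|^2$ by the suboptimality gap, with $\bar{X}^\ast$ the projection onto the solution set. The only point to watch is the constant you already flagged: the standard quadratic-growth bound is $\frac{\mu}{2}\|\bar{X}-\bar{X}^\ast\|^2\le F(\bar{X})-f^\ast$, which chains to $\frac{2L^2}{\mu}$, whereas the paper invokes it in the stronger form $2\mu\|\bar{X}-\bar{X}^\ast\|^2\le F(\bar{X})-f^\ast$ to land on $\frac{L^2}{2\mu}$ --- a factor-of-four discrepancy that sits in the paper's own citation rather than in your argument, and is immaterial to the order-wise results downstream.
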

\begin{proof}
First, by smoothness 
we have
\begin{equation}
    \|\nabla F(\bar{X})-\nabla F(\bar{X}^\ast)\|^2 \leq L^2 \|\bar{X} - \bar{X}^\ast\|^2.
\end{equation}
Additionally, PLC implies that $F$ satisfies the so-called quadratic growth condition (see Appendix A in \cite{karimi2016linear} as well as the related works \cite{bolte2017error,zhang2020new}),
\begin{equation}
   2\mu \|\bar{X} - \bar{X}^\ast\|^2 \leq F(\bar{X})-F^\ast, \text{ where } F^\ast = \min_X F(X). 
\end{equation}
Combining these two results and noting $F^\ast = f^\ast$ establishes the claim of the lemma.
\end{proof}
The next two Lemmas collectively establish a bound on amount of perturbation of the DeLi-CoCo's iterates $X^{(Q)}_t$ compared to 
$\bar{X}_{t}$.
{\color{black}
\begin{lemma}\label{lem:choco}
	Let $\bar{X}_{t+1} = \bar{X}_{t} - \eta\nabla F(\bar{X}_{t})$. Under the conditions of DeLi-CoCo, with $\gamma = \frac{\delta\sqrt{\omega}}{16\delta+\delta^2-8\delta\sqrt{\omega}+(4+2\delta)\lambda_{\max}^2(I-W)}$, it holds that
	\begin{equation}
	\E_{\mathcal{C}}\|X^{(Q)}_t - \bar{X}_{t}\|^2+\E_{\mathcal{C}}\|X^{(Q)}_t - Z^{(Q)}_t\|^2\leq (1-\frac{\delta\gamma}{2})^{Q}\left(\E_{\mathcal{C}}\|X^{(0)}_t - \bar{X}_{t}\|^2+\E_{\mathcal{C}}\|X^{(0)}_t - Z^{(0)}_t\|^2\right).
	\end{equation}
\end{lemma}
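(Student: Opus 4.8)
The plan is to prove a single one-step contraction of a coupled error potential and then unroll it over the $Q$ inner iterations. Fix the gradient iteration $t$ and suppress the subscript $t$. The argument rests on a structural invariant: because the algorithm initializes $S_0^{(0)}=Z_0^{(0)}=\mathbf{0}$ and carries the variables over via $S_{t+1}^{(0)}=S_t^{(Q)}$, $Z_{t+1}^{(0)}=Z_t^{(Q)}$, a straightforward induction on $q$ (and across $t$) gives $S^{(q)}=Z^{(q)}W$ at every step. Substituting this collapses the gossip update to $X^{(q+1)}=X^{(q)}+\gamma(Z^{(q+1)}W-Z^{(q+1)})=X^{(q)}-\gamma Z^{(q+1)}(I-W)$, eliminating $S$ from the analysis. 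Moreover the column average is preserved under gossiping: since $Z^{(q+1)}(I-W)\mathbf{1}=\mathbf{0}$ (as $W$ is doubly stochastic), $\mathcal{P}_{\mathcal{L}}(X^{(q+1)})=\mathcal{P}_{\mathcal{L}}(X^{(q)})$, so $\bar{X}=\mathcal{P}_{\mathcal{L}}(X^{(q)})$ is a fixed reference matrix in $\mathcal{L}$ throughout the inner loop, with $\bar{X}(I-W)=\mathbf{0}$ and $\bar{X}W=\bar{X}$.

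With these reductions I would define the potential $\Phi^{(q)}:=\E_{\mathcal{C}}\|X^{(q)}-\bar{X}\|^2+\E_{\mathcal{C}}\|X^{(q)}-Z^{(q)}\|^2$ and prove the per-step bound $\Phi^{(q+1)}\le\left(1-\tfrac{\delta\gamma}{2}\right)\Phi^{(q)}$; recursing from $q=0$ to $Q$ then yields exactly the claimed inequality. To obtain the per-step bound, substitute the collapsed update into both terms, writing $X^{(q+1)}-\bar{X}=(X^{(q)}-\bar{X})-\gamma Z^{(q+1)}(I-W)$ and $X^{(q+1)}-Z^{(q+1)}=\big[(X^{(q)}-Z^{(q)})-\mathcal{C}(X^{(q)}-Z^{(q)})\big]-\gamma Z^{(q+1)}(I-W)$, then expand the squared Frobenius norms and take the conditional expectation over the compression at step $q$. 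Two tools close the estimate: the compression residual $(X^{(q)}-Z^{(q)})-\mathcal{C}(X^{(q)}-Z^{(q)})$ is bounded by Assumption \ref{assp:4}, giving $\E_{\mathcal{C}}\|\cdot\|^2\le(1-\omega)\|X^{(q)}-Z^{(q)}\|^2$; and the mixing terms are controlled by the spectral gap, since the columns of $X^{(q)}-\bar{X}$ average to zero, whence $(X^{(q)}-\bar{X})W=(X^{(q)}-\bar{X})(W-\tfrac{1}{n}\mathbf{1}\mathbf{1}^\top)$ and $\|(X^{(q)}-\bar{X})W\|\le(1-\delta)\|X^{(q)}-\bar{X}\|$, while $\|Z^{(q+1)}(I-W)\|\le\lambda_{\max}(I-W)\|Z^{(q+1)}-\mathcal{P}_{\mathcal{L}}(Z^{(q+1)})\|$, the latter related back to $\Phi^{(q)}$ by the triangle inequality.

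The hard part will be the bookkeeping in this last step. After expansion, the cross terms couple the consensus error $\|X^{(q)}-\bar{X}\|^2$ with the compression/feedback error $\|X^{(q)}-Z^{(q)}\|^2$, and one must split them with Young's inequality, choosing the implicit weighting between the two summands of $\Phi^{(q)}$ so that both shrink simultaneously. The crux is verifying that the precise choice $\gamma=\frac{\delta\sqrt{\omega}}{16\delta+\delta^2-8\delta\sqrt{\omega}+(4+2\delta)\lambda_{\max}^2(I-W)}$ makes every residual coefficient collapse into the single factor $1-\tfrac{\delta\gamma}{2}$ acting on both terms. Balancing the $\omega$-dependent and $\delta$-dependent contributions to make the constants close is exactly where the analysis tightens the sub-optimal $\gamma$ of \cite{koloskova2019decentralized}, and I expect that algebraic tuning — rather than any conceptual difficulty — to be the main obstacle.
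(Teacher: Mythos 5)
Your structural reductions are correct and match what underlies the paper's argument: the invariant $S^{(q)}=Z^{(q)}W$ (propagated by the carry-over $S^{(0)}_{t+1}=S^{(Q)}_t$, $Z^{(0)}_{t+1}=Z^{(Q)}_t$ from the zero initialization), the collapsed update $X^{(q+1)}=X^{(q)}-\gamma Z^{(q+1)}(I-W)$, preservation of the column average, and a one-step contraction of the coupled potential $\E_{\mathcal{C}}\|X^{(q)}-\bar{X}\|^2+\E_{\mathcal{C}}\|X^{(q)}-Z^{(q)}\|^2$ unrolled $Q$ times. This is exactly the Lyapunov analysis of Theorem~2 in \cite{koloskova2019decentralized}, which is what the paper invokes: its ``proof'' of this lemma is a citation plus a re-tuning of the Young's-inequality parameters to $\alpha_1=\gamma\delta/2$ and $\alpha_2=\sqrt{\omega}/2$ so that the admissible consensus stepsize scales with $\sqrt{\omega}$ rather than $\omega$. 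So you are reconstructing from scratch what the paper outsources.

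The one substantive caveat is that the step you explicitly defer as ``algebraic tuning'' is not routine bookkeeping --- it is the entire content of the lemma as stated. The standard closing of the recursion uses the compression bound in the form $(1-\omega)(1+\alpha_2)\le 1-\omega/2$ with $\alpha_2=\O(\omega)$; replacing this with $\alpha_2=\sqrt{\omega}/2$ gives $(1-\omega)(1+\sqrt{\omega}/2)>1$ for small $\omega$, so the $\|X^{(q)}-Z^{(q)}\|^2$ component does not contract on its own and must be rescued by the cross terms and the weighting between the two summands of the potential. Whether the stated $\gamma=\frac{\delta\sqrt{\omega}}{16\delta+\delta^2-8\delta\sqrt{\omega}+(4+2\delta)\lambda_{\max}^2(I-W)}$ actually makes every coefficient collapse to $1-\delta\gamma/2$ is precisely what needs to be checked, and neither your proposal nor the paper (which asserts the adaptation is ``easily'' done) carries out that verification. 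Your plan is sound and faithful to the intended argument, but it is not yet a proof of the improved $\sqrt{\omega}$ dependence; until that inequality chain is exhibited, the lemma should be regarded as established only for the original $\O(\delta\omega)$ stepsize of \cite{koloskova2019decentralized}.
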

\begin{proof}
See Theorem 2 and its proof in \cite{koloskova2019decentralized}. The main ingredients of the proof are: (i) the fact that the error feedback sequence $Z^{(Q)}_t$ approaches $X^{(Q)}_t$ due to the contraction property of the compression operator $\mathcal{C}$, and (ii) the linear mixing rate of the gossiping matrix $W$ that is determined by the spectral gap $\delta$. 

As stated in the proof of Theorem 2 in \cite{koloskova2019decentralized} the choice of $\gamma = \frac{\delta\omega}{16\delta+\delta^2-8\delta\omega+(4+2\delta)\lambda_{\max}^2(I-W)}$ is suboptimal and results in a gossiping rate bounded above by $1-\frac{\delta^2\omega}{82}$. One drawback of such a sub-optimal gossip learning rate is that by relying on it we will not be able to show a theoretical improvement in the convergence properties of our proposed algorithm by performing $Q$ rounds of gossiping. Intuitively, consider two pairs of $(Q_1,\omega_1)$ and $(Q_1\times c,\omega_1/c)$ where $c>0$ is an integer. The proposed scheme for both of these pairs require the same amount of communication budget. Now using the results of  Theorem 2 in \cite{koloskova2019decentralized} and a simple Bernouli inequality we can observe that  
\begin{equation}
(1-\frac{\delta^2\omega_1}{82c})^{c\times Q_1} \approx (1-\frac{\delta^2\omega_1}{82})^{Q_1},
\end{equation}
i.e. the proposed scheme performs nearly identical in both instances. However, this is not consistent with the empirical observations that was demonstrated in Figure \ref{fig:motiv}. The reason for this inconsistency is the suboptimal choice of $\gamma$.
 Here, we perform a more refined approximation to circumvent this issue. Adopting the notations used in  the proof of Theorem 2 in \cite{koloskova2019decentralized} (see eq. 20 there) we propose to use
 \begin{equation}
 \alpha_1 = \frac{\gamma\delta}{2},\quad \alpha_2 = \frac{\sqrt{\omega}}{2}, \quad \gamma = \frac{\delta\sqrt{\omega}}{\underbrace{\{16\delta+\delta^2-8\delta\sqrt{\omega}+(4+2\delta)\lambda_{\max}^2(I-W)\}}_{<46}}.
 \end{equation}
 With these new options, the rest of the proof of Theorem 2 in \cite{koloskova2019decentralized} can be 
 easily adapted to show that Choco-gossip's rate of convergence (i.e. $1-\frac{\delta\gamma}{2}$) can be bounded by the tighter bound  $1-\frac{\delta^2\sqrt{\omega}}{82}$.
\end{proof}
}
\begin{lemma}\label{lem:plc3}
Let $\bar{X}_{t+1} = \bar{X}_t - \eta\nabla F(\bar{X}_t)$ and let $\{X^{(Q)}_t\}$ denote the sequence generated by DeLi-CoCo. Let $e_t^2 = \E_{\mathcal{C}}\|X^{(Q)}_t - \bar{X}_{t}\|^2+\E_{\mathcal{C}}\|X^{(Q)}_t - Z^{(Q)}_t\|^2$. For any $0<\eta\leq 1/L$, let $Q$ be such that
\begin{equation}\label{eq:simp-assp}
   \zeta :=  \rho-\xi>0, \quad \rho :=1-2\frac{\mu}{n}\eta(1-\frac{L\eta}{2})<1,\quad \xi:=(1-\frac{\delta\gamma}{2})^{Q} (3+20\eta^2 \hat{L}^2)<1.
\end{equation}
Then, it holds that
		\begin{equation}
	e_t^2 \leq \frac{13\eta^2L^2}{2\mu\zeta}(1-\frac{\delta\gamma}{2})^{Q}[F(\bar{X}_{0})-f^\ast]\rho^t+e_0^2\rho^t+\frac{20\eta^2\Delta^2}{1-\xi}(1-\frac{\delta\gamma}{2})^{Q}.
	\end{equation}
Additionally, if all nodes initialize such that $X_{0}^{Q} = 0$, by considering the dominant terms in above we have
\begin{equation}
	    	e_t^2 = \O\left(\frac{\eta^2L^2}{\mu\zeta}(1-\frac{\delta\gamma}{2})^{Q}[F(\bar{X}_{0})-f^\ast]\rho^t+\frac{\eta^2\Delta^2}{1-\xi}(1-\frac{\delta\gamma}{2})^{Q}\right),
	\end{equation}
	where the $\O$ notation does not hide any terms depending on $Q$ or $t$.
\end{lemma}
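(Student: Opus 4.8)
The plan is to collapse everything into a single scalar recursion in $e_t^2$ and then unroll it. The only contraction available is Lemma~\ref{lem:choco}, which, writing $\beta := (1-\frac{\delta\gamma}{2})^Q$, already gives $e_t^2 \le \beta\bigl(\E_{\mathcal C}\|X^{(0)}_t-\bar X_t\|^2+\E_{\mathcal C}\|X^{(0)}_t-Z^{(0)}_t\|^2\bigr)$. So the real work is to bound the two $q=0$ quantities by $e_{t-1}^2$ plus a controllable forcing term. For this I would substitute the three rules that connect round $t$ to round $t-1$: the local gradient step $X^{(0)}_t = X^{(Q)}_{t-1}-\eta\nabla F(X^{(Q)}_{t-1})$, the carry-over $Z^{(0)}_t = Z^{(Q)}_{t-1}$, and the virtual update $\bar X_t = \bar X_{t-1}-\eta\mathcal P_{\mathcal L}(\nabla F(\bar X_{t-1}))$, where the projection is what keeps $\bar X_t\in\mathcal L$.

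Bounding the two terms is the crux. For $\E_{\mathcal C}\|X^{(0)}_t-\bar X_t\|^2$ I would write $X^{(0)}_t-\bar X_t = (X^{(Q)}_{t-1}-\bar X_{t-1})-\eta(\nabla F(X^{(Q)}_{t-1})-\mathcal P_{\mathcal L}\nabla F(\bar X_{t-1}))$ and apply Young's inequality $\|a+b\|^2\le(1+\theta)\|a\|^2+(1+\theta^{-1})\|b\|^2$; the gradient mismatch then splits, via $\hat L$-smoothness, into a piece $\le\hat L^2 e_{t-1}^2$ and the projected part $\|(I-\mathcal P_{\mathcal L})\nabla F(\bar X_{t-1})\|^2$, which I would treat by inserting the projected optimum $\bar X^\ast_{t-1}$, using Lemma~\ref{lem:plc2} to turn $\|\nabla F(\bar X_{t-1})-\nabla F(\bar X^\ast_{t-1})\|^2$ into $\tfrac{L^2}{2\mu}(F(\bar X_{t-1})-f^\ast)$ and the cancellation $\mathcal P_{\mathcal L}\nabla F(\bar X^\ast_{t-1})=\mathbf 0$ (since $\sum_i\nabla f_i(\x^\ast)=\nabla f(\x^\ast)=\mathbf 0$), leaving a residual $\|\nabla F(\bar X^\ast_{t-1})\|^2=\sum_i\|\nabla f_i(\x^\ast)\|^2\le\Delta^2$. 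For $\E_{\mathcal C}\|X^{(0)}_t-Z^{(0)}_t\|^2$ I would expand $X^{(0)}_t-Z^{(0)}_t=(X^{(Q)}_{t-1}-Z^{(Q)}_{t-1})-\eta\nabla F(X^{(Q)}_{t-1})$ and bound $\|\nabla F(X^{(Q)}_{t-1})\|^2$ by the same three-way split (smoothness $\to\hat L^2 e_{t-1}^2$, Lemma~\ref{lem:plc2} $\to\tfrac{L^2}{2\mu}(F(\bar X_{t-1})-f^\ast)$, residual $\to\Delta^2$). Since $\E_{\mathcal C}\|X^{(Q)}_{t-1}-\bar X_{t-1}\|^2$ and $\E_{\mathcal C}\|X^{(Q)}_{t-1}-Z^{(Q)}_{t-1}\|^2$ are each $\le e_{t-1}^2$, and with the Young parameters tuned to the stated constants, the two terms consolidate into $(3+20\eta^2\hat L^2)e_{t-1}^2$ plus $\tfrac{13\eta^2 L^2}{2\mu}(F(\bar X_{t-1})-f^\ast)$ plus $20\eta^2\Delta^2$. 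Multiplying by $\beta$ and replacing $F(\bar X_{t-1})-f^\ast$ by $[F(\bar X_0)-f^\ast]\rho^{t-1}$ via Lemma~\ref{lem:plc1} yields $e_t^2 \le \xi\, e_{t-1}^2 + \tfrac{13\eta^2L^2}{2\mu}\beta[F(\bar X_0)-f^\ast]\rho^{t-1}+20\eta^2\Delta^2\beta$, with $\xi=\beta(3+20\eta^2\hat L^2)$ exactly as in~\eqref{eq:simp-assp}.

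Finally I would unroll this affine recursion, obtaining $e_t^2\le\xi^t e_0^2+\tfrac{13\eta^2L^2}{2\mu}\beta[F(\bar X_0)-f^\ast]\sum_{k=0}^{t-1}\xi^k\rho^{t-1-k}+20\eta^2\Delta^2\beta\sum_{k=0}^{t-1}\xi^k$. The hypothesis $\zeta=\rho-\xi>0$ is precisely what controls the convolution: $\sum_{k=0}^{t-1}\xi^k\rho^{t-1-k}=\rho^{t-1}\sum_k(\xi/\rho)^k\le\rho^{t-1}\tfrac{\rho}{\rho-\xi}=\tfrac{\rho^t}{\zeta}$, while $\xi<1$ gives $\sum_{k=0}^{t-1}\xi^k\le\tfrac{1}{1-\xi}$ and $\xi\le\rho$ gives $\xi^t e_0^2\le e_0^2\rho^t$. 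Substituting these three estimates produces the displayed inequality; the $\O$-form then follows by noting that under the initialization $X^{(Q)}_0=\mathbf 0$ the term $e_0^2\rho^t$ carries no $Q$-dependence and is absorbed into the first term, and by retaining only the two dominant contributions while keeping the factor $\beta=(1-\tfrac{\delta\gamma}{2})^Q$ explicit, as the large-$Q$ discussion requires.

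The main obstacle is the constant bookkeeping in the middle step: several pieces must be routed simultaneously through Young's inequality, $\hat L$- versus $L$-smoothness, Lemma~\ref{lem:plc2}, and the definition of $\Delta^2$, and the free parameters tuned so that all deviation terms collapse into a single $e_{t-1}^2$ with coefficient exactly $(3+20\eta^2\hat L^2)$ while the forcing terms land as $\tfrac{13\eta^2L^2}{2\mu}$ and $20\eta^2\Delta^2$. A secondary subtlety is handling the full-gradient-versus-projected-gradient mismatch $(I-\mathcal P_{\mathcal L})\nabla F(\bar X_{t-1})$ so as to preserve the cancellation $\mathcal P_{\mathcal L}\nabla F(X^\ast)=\mathbf 0$, which is what confines the irreducible error to the heterogeneity level $\Delta^2$.
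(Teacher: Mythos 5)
Your proposal is correct and follows essentially the same route as the paper's proof: expand the local gradient step and the carry-over $Z^{(0)}_{t}=Z^{(Q)}_{t-1}$, split the gradient terms via smoothness, Lemma~\ref{lem:plc2}, the cancellation $\mathcal{P}_{\mathcal{L}}(\nabla F(X^\ast))=\mathbf{0}$, and the definition of $\Delta^2$ to reach the recursion $e_t^2\le\xi e_{t-1}^2+\nu_{t-1}+u$ with the same constants, then contract with Lemma~\ref{lem:choco}, feed in Lemma~\ref{lem:plc1}, and unroll using $\zeta=\rho-\xi>0$ exactly as the paper does. The only differences are cosmetic (an index shift and phrasing the splits as tunable Young inequalities rather than fixed $\|a+b+c\|^2\le 3\sum\|\cdot\|^2$ bounds).
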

Before presenting the proof we highlight again that if $f$ is interpolating \cite{ma2017power,vaswani2018fast,vaswani2019painless}, e.g. an overparameterized neural network or a function satisfying the growth condition \cite{schmidt2013fast,cevher2019linear}, then $\Delta = 0$ and the second term disappears. Additionally, if there is no communication compression and the graph is fully connected ($\delta = 1$), the term $1-\gamma/2$ can be improved to $1-\gamma$ (see, e.g. \cite{xiao2004fast}). Therefore, by using the gossip learning rate $\gamma=1$, the second term in the error bound collapses to $0$. Further, \eqref{eq:simp-assp} is not necessary.
\begin{proof}
	It holds by definitions $X^{(0)}_{t+1}= X^{(Q)}_{t}-\eta\nabla F(X^{(Q)}_t)$ and $Z^{(0)}_{t+1} = Z^{(Q)}_{t}$ that
	\begin{equation}
	\begin{aligned}
	\E_{\mathcal{C}}&\|X^{(0)}_{t+1}-\bar{X}_{t+1}\|^2+\E_{\mathcal{C}}\|X^{(0)}_{t+1} - Z^{(0)}_{t+1}\|^2 \\
	&= \E_{\mathcal{C}}\|X^{(Q)}_t-\bar{X}_{t+1}-\bar{X}_{t}+\bar{X}_{t}-\eta\nabla F(X^{(Q)}_t) \|^2+\E_{\mathcal{C}}\|X^{(Q)}_t - Z^{(Q)}_t-\eta\nabla F(X^{(Q)}_t)\|^2\\
	&\leq \E_{\mathcal{C}}\left(3\|X^{(Q)}_t-\bar{X}_t\|^2+2\|X^{(Q)}_t - Z^{(Q)}_t\|^2+3\|\bar{X}_{t+1}-\bar{X}_{t}\|^2+5\eta^2 \|\nabla F(X^{(Q)}_t)\|^2\right)\\
&\leq  3e_t^2+3\eta^2 \|\mathcal{P}_{\mathcal{L}}(\nabla F(\bar{X}_{t}))\|^2+5\eta^2\E_{\mathcal{C}}\|\nabla F(X^{(Q)}_t)-\nabla F(\bar{X}_{t})+\nabla F(\bar{X}_{t})\|^2,
\end{aligned}
\end{equation}
where we used the fact that $\E_{\mathcal{C}}\|X^{(Q)}_t-Z^{(Q)}_t\|^2\geq 0$ and the smoothness of $F(\cdot)$. Let $\bar{X}^\ast_t$ be the projection of $\bar{X}_t$ to the optimal set. 
Now, we proceed by using the smoothness property, the non-expansiveness property of projection (cf. \eqref{eq:nexpproj}) as well as the fact that $\mathcal{P}_{\mathcal{L}}(\nabla F(X^\ast)) = \mathbf{0}$ to obtain
\begin{equation}\label{eq:errboundp1}
	\begin{aligned}
	\E_{\mathcal{C}}&\|X^{(0)}_{t+1}-\bar{X}_{t+1}\|^2+\E_{\mathcal{C}}\|X^{(0)}_{t+1} - Z^{(0)}_{t+1}\|^2 \\
&\leq  3e_t^2+3\eta^2 \|\mathcal{P}_{\mathcal{L}}(\nabla F(\bar{X}_{t}))-\mathcal{P}_{\mathcal{L}}(\nabla F(X^\ast))\|^2+5\eta^2\E_{\mathcal{C}}\|\nabla F(X^{(Q)}_t)-\nabla F(\bar{X}_{t})+\nabla F(\bar{X}_{t})\|^2\\
&\leq  3e_t^2+13\eta^2 \|\nabla F(\bar{X}_{t})-\nabla F(X^\ast)\|^2+10\eta^2\E_{\mathcal{C}}\|\nabla F(X^{(Q)}_t)-\nabla F(\bar{X}_{t})+\nabla F(X^\ast)\|^2\\
&\leq 3e_t^2+13\eta^2 \|\nabla F(\bar{X}_{t})-\nabla F(X^\ast)\|^2+20\hat{L}^2\eta^2\E_{\mathcal{C}}\|X^{(Q)}_t-\bar{X}_t\|^2+20\eta^2\|\nabla F(X^\ast)\|^2\\
&\leq 3e_t^2+13\eta^2 \|\nabla F(\bar{X}_{t})-\nabla F(X^\ast)\|^2+20\hat{L}^2\eta^2(\E_{\mathcal{C}}\|X^{(Q)}_t-\bar{X}_t\|^2+\E_{\mathcal{C}}\|X^{(Q)}_t-Z^{(Q)}_t\|^2)\\
&\qquad+20\eta^2\|\nabla F(X^\ast)\|^2\\
&\leq (3+20\eta^2 \hat{L}^2)e_t^2 +13\eta^2\|\nabla F(\bar{X}_t)-\nabla F(\bar{X}^\ast_t)\|^2+20\eta^2\Delta^2.
\end{aligned}
\end{equation}
To bound $\|\nabla F(\bar{X}_t)-\nabla F(\bar{X}^\ast_t)\|$ we can use the result of Lemma \ref{lem:plc2}. 
Thus, by Lemma \ref{lem:choco} and the above result we obtain the recursion
	\begin{equation}
	\begin{aligned}
	e_{t+1}^2 &\leq (1-\frac{\delta\gamma}{2})^{Q} \left((3+20\eta^2 \hat{L}^2)e_t^2+\frac{13}{2\mu}L^2\eta^2(F(\bar{X}_t)-f^\ast)+20\eta^2\Delta^2\right)\\
	& := \xi e_t^2 + \nu_t+u,
	\end{aligned}
	\end{equation}
	where 
	\begin{equation}
	\xi := (1-\frac{\delta\gamma}{2})^{Q} (3+20\eta^2 \hat{L}^2),
	\end{equation}
	\begin{equation}
	\nu_t := \frac{13}{2\mu}(1-\frac{\delta\gamma}{2})^{Q}L^2\eta^2(F(\bar{X}_t)-f^\ast)
	\end{equation}
	is a linearly decreasing sequence, i.e. $\nu_t \leq  \rho^t\nu_0$,
	where by Lemma \ref{lem:plc1} 
	\begin{equation}
	\rho := 1-2\frac{\mu}{n}\eta(1-\frac{L\eta}{2}), \qquad \nu_0 := \frac{13}{2\mu}(1-\frac{\delta\gamma}{2})^{Q}L^2\eta^2[F(\bar{X}_{0})-f^\ast],
	\end{equation}
	and 
	\begin{equation}
	    u:= 20(1-\frac{\delta\gamma}{2})^{Q}\eta^2\Delta^2.
	\end{equation}
Given the fact that $\nu_t $ vanishes linearly, we expect $e_t^2$ to converge linearly because for a large enough $Q$, we have $\xi < 1$ (see the conditions in the statement of Theorem 1).
We now prove this statement using induction.

Define $h_t$ such that $h_0 = e_0^2$, and let $h_{t+1}= a h_t+\nu_t$. Using simple algebra it follows that 
\begin{equation}
    h_t = \xi^t e_0^2+\sum_{i=0}^{t-1}\xi^{t-i-1}\rho^i\nu_0.
\end{equation}
Similarly, we can expand the recursion of $e^2_t$ to obtain
\begin{equation}
    e^2_t \leq \xi^t e_0^2+\sum_{i=0}^{t-1}\xi^{t-i-1}\rho^i\nu_0 + u \sum_{i=0}^{t-1}\xi^i \leq h_t + \frac{u}{1-\xi}.
\end{equation}
Note that if $h_t$ linearly converges to zero, then $e^2_t$ linearly converges as well. 
Since $\rho-\xi=\zeta>0$ by assumption, using simple algebra we can show
\begin{equation}
    h_t\leq \rho^t(e^2_0+\frac{\nu_0}{\zeta}).
\end{equation}
Thus, 
\begin{equation}
    e^2_t \leq \rho^t(e_0^2+\frac{\nu_0}{\zeta})+\frac{u}{1-\xi},
\end{equation}
and the proof is complete by noting the definitions of $\rho$, $\xi$, $\nu_0$,  $u$, and the fact that by definition $e^2_0 = \|X_0^{(Q)}\|^2$.
\end{proof}
\subsubsection*{A.1.1 Proof of Theorem 1}
As stated in the main paper, the main challenge in decentralized learning under PLC is that we cannot use the  co-coercivity property.
To this end, we leverage the fact that PLC relates to suboptimality of the function value and exploit a judiciously chosen step size to simplify the analysis. 

Let $X^{(q)}_t = \bar{X}_t+E^{(q)}_t$ for some (random) error matrix $E^{(q)}_t \in \mathbb{R}^{d\times n}$. By $\hat{L}$-smoothness of $F(\cdot)$ for all $X $ and the fact that $\bar{X}_{t+1} = \bar{X}_t-\eta \mathcal{P}_{\mathcal{L}}(\nabla F(\bar{X}_t))$ we have
\begin{equation}
F(X^{(Q)}_{t+1})\leq F(\bar{X}_t)+\langle-\eta\mathcal{P}_{\mathcal{L}}(\nabla F(\bar{X}_t))+ E^{(Q)}_{t+1},\nabla F(\bar{X}_t)\rangle +\frac{\hat{L}}{2}\|\eta\nabla F(\bar{X}_t)- E^{(Q)}_{t+1}\|^2.
\end{equation}
Let $\eta = 1/\hat{L}\leq 1/L$ and hence the condition of Lemma \ref{lem:plc1} is satisfied. Expanding the last inequality  by using $\eta = 1/\hat{L}$ we obtain
\begin{equation}\label{firststepp}
\begin{aligned}
F(X^{(Q)}_{t+1})&\leq F(\bar{X}_t)-\frac{1}{\hat{L}}\langle\mathcal{P}_{\mathcal{L}}(\nabla F(\bar{X}_t)),\nabla F(\bar{X}_t)\rangle+\langle\nabla F(\bar{X}_t)-\mathcal{P}_{\mathcal{L}}(\nabla F(\bar{X}_t)),E^{(Q)}_{t+1}\rangle\\
&\qquad+\frac{\hat{L}}{2}\|E^{(Q)}_{t+1}\|^2+\frac{1}{2\hat{L}}\|\mathcal{P}_{\mathcal{L}}(\nabla F(\bar{X}_t))\|^2.
\end{aligned}
\end{equation}
Next, we need to take care of the cross terms. First, note that 
\begin{equation}\label{eq:negres}
\begin{aligned}
-\langle\mathcal{P}_{\mathcal{L}}(\nabla F(\bar{X}_t)),\nabla F(\bar{X}_t)\rangle &= \langle-\mathcal{P}_{\mathcal{L}}(\nabla F(\bar{X}_t)),\nabla F(\bar{X}_t)+\mathcal{P}_{\mathcal{L}}(\nabla F(\bar{X}_t))-\mathcal{P}_{\mathcal{L}}(\nabla F(\bar{X}_t))\rangle\\
& = -\|\mathcal{P}_{\mathcal{L}}(\nabla F(\bar{X}_t))\|^2+ \langle\mathbf{0}-\mathcal{P}_{\mathcal{L}}(\nabla F(\bar{X}_t)),\nabla F(\bar{X}_t)-\mathcal{P}_{\mathcal{L}}(\nabla F(\bar{X}_t))\rangle\\
&\leq -\|\mathcal{P}_{\mathcal{L}}(\nabla F(\bar{X}_t))\|^2,
\end{aligned}
\end{equation}
where the last inequality follows by the variational characterization of the projection (see Lemma 2.2.7 in \cite{nesterov2013introductory}), i.e.
\begin{equation}
    \langle Y-\mathcal{P}_{\mathcal{L}}(Y),X - \mathcal{P}_{\mathcal{L}}(Y)\rangle\leq 0,\quad X \in \mathcal{L},
\end{equation}
and the fact that $\mathbf{0}\in \mathcal{L}$. 

We now bound the second cross term in \eqref{firststepp}. From Young's inequality, for all $\alpha>0$ it holds that
\begin{equation}
\begin{aligned}
\langle\nabla F(\bar{X}_t)&-\mathcal{P}_{\mathcal{L}}(\nabla F(\bar{X}_t)),E^{(Q)}_{t+1}\rangle \\
&\leq \frac{\alpha}{2}\|E^{(Q)}_{t+1}\|^2+\frac{1}{2\alpha}\|\nabla F(\bar{X}_t)-\mathcal{P}_{\mathcal{L}}(\nabla F(\bar{X}_t))\|^2\\
&=\frac{\alpha}{2}\|E^{(Q)}_{t+1}\|^2+\frac{1}{2\alpha}\|\nabla F(\bar{X}_t)\|^2+\frac{1}{2\alpha}\|\mathcal{P}_{\mathcal{L}}(\nabla F(\bar{X}_t))\|^2-\frac{1}{\alpha}\langle\mathcal{P}_{\mathcal{L}}(\nabla F(\bar{X}_t)),\nabla F(\bar{X}_t)\rangle\\
&\leq \frac{\alpha}{2}\|E^{(Q)}_{t+1}\|^2+\frac{1}{2\alpha}\|\nabla F(\bar{X}_t)\|^2-\frac{1}{2\alpha}\|\mathcal{P}_{\mathcal{L}}(\nabla F(\bar{X}_t))\|^2 \text{ \quad (Using \eqref{eq:negres})}\\
& \leq \frac{\alpha}{2}\|E^{(Q)}_{t+1}\|^2 + \frac{1}{2\alpha}\|\nabla F(\bar{X}_t) - \nabla F(\bar{X}^\ast_t) + \nabla F(\bar{X}^\ast_t)\|^2\\
& \text{ (where $\bar{X}^\ast_t$ is the projection of $\bar{X}_t$ on solution set of $\min_\x f(\x) = \mathcal{X}^\ast$ -- see Lemma \ref{lem:plc2})}\\
&\leq\frac{\alpha}{2}\|E^{(Q)}_{t+1}\|^2+\frac{1}{\alpha}\|\nabla F(\bar{X}_t)-\nabla F(\bar{X}^\ast_t)\|^2+\frac{1}{\alpha}\Delta^2\\
& \text{ (recall } \Delta^2 :=\max_{\x^\ast \in \mathcal{X}^\ast}\sum_{i=1}^n \|\nabla f_i(\x^\ast)\|^2) \\
&\leq\frac{\alpha}{2}\|E^{(Q)}_{t+1}\|^2+\frac{L^2}{2\alpha\mu}[F(\bar{X}_t)-f^\ast]+\frac{1}{\alpha}\Delta^2,
\end{aligned}
\end{equation}
In the last step, we use the result of Lemma \ref{lem:plc2} by noting the fact that $F$ on $\mathcal{L}$, similar to $f$, satisfies the PL condition with parameter $\mu$.

Putting the bounds on the cross terms in \eqref{firststepp} together, we obtain
\begin{equation}\label{eq:firststep}
\begin{aligned}
F(X^{(Q)}_{t+1})&\leq F(\bar{X}_t)-\frac{1}{2\hat{L}}\|\mathcal{P}_{\mathcal{L}}(\nabla F(\bar{X}_t))\|^2+\frac{\hat{L}+\alpha}{2}\|E^{(Q)}_{t+1}\|^2+\frac{L^2}{2\alpha\mu}[F(\bar{X}_t)-f^\ast]+\frac{1}{\alpha}\Delta^2.
\end{aligned}
\end{equation}
Subtracting $f^\ast$ from both sides, using the PL condition of $f$ along the fact that $\|\mathcal{P}_{\mathcal{L}}(\nabla F(\bar{X}_t))\|^2 = \|\nabla f(\bar{\x}_t)\|^2/n$ and $f(\bar{\x}_t) = F(\bar{X}_t)$, and taking the expectation yields 
\begin{equation}\label{eq:firststep1}
\begin{aligned}
\E_{\mathcal{C}}[F(X^{(Q)}_{t+1})]-f^\ast \leq  (1-\frac{\mu}{n\hat{L}}+\frac{L^2}{2\alpha\mu}) &\left(F(\bar{X}_t)-f^\ast\right)+ \frac{\hat{L}+\alpha}{2}\E_{\mathcal{C}}\|E^{(Q)}_{t+1}\|^2+\frac{1}{\alpha}\Delta^2. 
\end{aligned}
\end{equation}
Recall from Lemma \ref{lem:plc1} and \ref{lem:plc3} that with the specific choice of $\eta = 1/\hat{L}$ and the fact that $\hat{L}\geq L$ we have
\begin{equation}
F(\bar{X}_t) - f^\ast \leq [F(\bar{X}_0)-f^\ast]\rho^t
\end{equation}
and 
\begin{equation}
\E_{\mathcal{C}}\|E^{(Q)}_{t}\|^2 \leq e_t^2 \leq \frac{13\eta^2L^2}{2\mu\zeta}(1-\frac{\delta\gamma}{2})^{Q}[F(\bar{X}_{0})-f^\ast]\rho^t+\|X^{(Q)}_0\|^2\rho^t+\frac{20\eta^2\Delta^2}{1-\xi}(1-\frac{\delta\gamma}{2})^{Q},
	\end{equation}
where
\begin{equation}\label{eq:somedef}
   \zeta :=  \rho-\xi>0, \quad \rho :=1-2\frac{\mu}{n\hat{L}}(1-\frac{L}{2\hat{L}})<1-\frac{\mu}{n\hat{L}},\quad \xi:=23(1-\frac{\delta\gamma}{2})^{Q}<\rho.
\end{equation}
To state a simple and clear result, we make a simplifying assumption that $\alpha \geq \hat{L}$. Indeed, in our regime of interest where $(1-\frac{\delta\gamma}{2})^Q$ is small, if
\begin{equation}
    \alpha = \frac{\hat{L}\sqrt{1-\xi}}{(1-\frac{\delta\gamma}{2})^{\frac{Q}{2}}},
\end{equation}
following simple algebra we can show this assumption holds if $Q$ satisfies
\begin{equation}
    24(1-\frac{\delta\gamma}{2})^{Q}<1.
\end{equation}
Therefore, 
\begin{equation}
    \frac{\hat{L}+\alpha}{2} \leq \alpha,\quad (1-\frac{\mu}{n\hat{L}}+\frac{L^2}{2\alpha\mu}) \leq 1-\frac{\mu}{n\hat{L}}+\frac{L}{2\mu}.
\end{equation}
Thus, it holds that 
\begin{equation}\label{eq:last1}
\begin{aligned}
&\E_{\mathcal{C}}[F(X^{(Q)}_{t})]-f^\ast \leq \frac{21\Delta^2}{\hat{L}(1-\xi)}(1-\frac{\delta\gamma}{2})^{\frac{Q}{2}}\\
&+ \rho^t \left(\left[\frac{1-\frac{\mu}{n\hat{L}}+\frac{L}{2\mu}}{\rho}+\frac{13L\sqrt{1-\xi}}{2\mu\zeta}\left(1-\frac{\delta^2\sqrt{\omega}}{82}\right)^{\frac{Q}{2}}\right]\left[F(\bar{X}_{0})-f^\ast\right]+\frac{\|X^{(Q)}_0\|^2\hat{L}\sqrt{1-\xi}}{\left(1-\frac{\delta^2\sqrt{\omega}}{82}\right)^{\frac{Q}{2}}}\right).
\end{aligned}
\end{equation}
That is, the above result establishes the linear convergence of the proposed scheme under smoothness and PLC.

Finally, to obtain the stated result, we first assume that each node is initialized at zero, i.e.  $\|X^{(Q)}_0\|^2 = 0$ which in turn simplifies the last term in \eqref{eq:last1} to zero. Additionally, given the definition $\rho :=1-\frac{2\mu}{n\hat{L}}+\frac{\mu L}{n\hat{L}^2}< 1-\frac{\mu}{n\hat{L}}$,  $\frac{1-\frac{\mu}{n\hat{L}}+\frac{L}{2\mu}}{\rho}$ can be approximated by $1+\frac{L}{2\mu\rho}$. Furthermore, since $\xi<\rho<1$ (see \eqref{eq:somedef}), assuming
\begin{equation}
23(1-\frac{\delta\gamma}{2})^{Q}<\frac{\rho}{2},
\end{equation}
we can write
\begin{equation}
\frac{\sqrt{1-\xi}}{\zeta}\leq \frac{2}{\rho},
\end{equation}
thereby establishing the stated bound.
\subsection*{A.2. Proof of Convergence Under Strong Convexity}
The proof follows a near identical perturbation analysis in Theorem 1.

To formalize the arguments, we start by providing some intermediate lemmas. The first Lemma establishes linear convergence of the ``unperturbed sequence''. 
\begin{lemma}\label{lem:sc1}
	Let $\bar{X}_{t+1} = \bar{X}_{t} - \eta\mathcal{P}_{\mathcal{L}}(\nabla F(\bar{X}_{t}))$ denote a sequence such that $\bar{X}_{0} \in \mathcal{L}$, $\bar{X}_{t} = [\bar{\x}_{t},\dots,\bar{\x}_{t}]$, and $\eta\leq 2/(L+\mu)$. Then
	\begin{equation}
	\|\bar{X}_{t} -X^\ast\| \leq (1-\mu\eta)^t\|\bar{X}_{0} -X^\ast\|.
	\end{equation}
\end{lemma}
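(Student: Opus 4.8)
The plan is to mirror the PLC argument of Lemma~\ref{lem:plc1}, but to track the iterate distance $\|\bar{X}_{t}-X^\ast\|$ instead of the function gap, and crucially to extract the \emph{tight} contraction factor $(1-\mu\eta)$ rather than the looser co-coercivity factor $\sqrt{1-2\eta\mu L/(\mu+L)}$. First I would record the structural facts already available in the proof preamble: since $\bar{X}_0\in\mathcal{L}$ and each update subtracts $\eta\,\mathcal{P}_{\mathcal{L}}(\nabla F(\bar{X}_t))\in\mathcal{L}$, the sequence stays in $\mathcal{L}$; moreover $X^\ast\in\mathcal{L}$ with $\mathcal{P}_{\mathcal{L}}(\nabla F(X^\ast))=\mathbf{0}$, and $F$ restricted to $\mathcal{L}$ is $\mu$-strongly convex and $L$-smooth with gradient $\mathcal{P}_{\mathcal{L}}(\nabla F(\cdot))$. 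Writing the one-step update as
\[
\bar{X}_{t+1}-X^\ast=(\bar{X}_{t}-X^\ast)-\eta\bigl(\mathcal{P}_{\mathcal{L}}(\nabla F(\bar{X}_{t}))-\mathcal{P}_{\mathcal{L}}(\nabla F(X^\ast))\bigr),
\]
the whole claim reduces, by induction on $t$, to the single-step contraction $\|\bar{X}_{t+1}-X^\ast\|\le(1-\mu\eta)\|\bar{X}_{t}-X^\ast\|$ (taking square roots is legitimate since $1-\mu\eta\ge0$).

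To obtain the tight factor, I would introduce the shifted map $\Phi:=F-\tfrac{\mu}{2}\|\cdot\|^2$ on $\mathcal{L}$, which is convex and $(L-\mu)$-smooth, so its gradient $\mathcal{P}_{\mathcal{L}}(\nabla F(\cdot))-\mu(\cdot)$ is $\tfrac{1}{L-\mu}$-co-coercive on $\mathcal{L}$. Setting $U:=\bar{X}_{t}-X^\ast$ and $V:=\bigl(\mathcal{P}_{\mathcal{L}}(\nabla F(\bar{X}_t))-\mathcal{P}_{\mathcal{L}}(\nabla F(X^\ast))\bigr)-\mu U$, co-coercivity gives $\langle U,V\rangle\ge\tfrac{1}{L-\mu}\|V\|^2$, while by construction $\mathcal{P}_{\mathcal{L}}(\nabla F(\bar{X}_t))-\mathcal{P}_{\mathcal{L}}(\nabla F(X^\ast))=V+\mu U$, so the update becomes $\bar{X}_{t+1}-X^\ast=(1-\mu\eta)U-\eta V$. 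Expanding the squared Frobenius norm yields
\[
\|\bar{X}_{t+1}-X^\ast\|^2=(1-\mu\eta)^2\|U\|^2-2\eta(1-\mu\eta)\langle U,V\rangle+\eta^2\|V\|^2,
\]
so it suffices to prove that the last two terms are nonpositive, i.e. $\eta\|V\|^2\le 2(1-\mu\eta)\langle U,V\rangle$. Invoking co-coercivity and $1-\mu\eta\ge0$ (which holds because $\eta\le 2/(L+\mu)\le 1/\mu$), this reduces to $\eta(L-\mu)\le 2(1-\mu\eta)$, which rearranges \emph{exactly} to the hypothesis $\eta\le 2/(L+\mu)$. Iterating the resulting single-step bound $t$ times then gives the stated estimate, with the degenerate case $L=\mu$ handled directly (there $\Phi$ has constant gradient on $\mathcal{L}$, so $V=\mathbf{0}$ and the update is pure scaling by $1-\mu\eta$).

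The main obstacle is precisely this tightness. The routine co-coercivity / Nesterov argument (Theorem~2.1.11 and the surrounding estimates in \cite{nesterov2013introductory}) only delivers the factor $\sqrt{1-2\eta\mu L/(\mu+L)}$, which is \emph{strictly weaker} than $1-\mu\eta$ whenever $\eta<2/(L+\mu)$; a naive convex combination of strong convexity and smoothness likewise bottoms out at $\sqrt{1-2\eta\mu+\eta^2\mu L}$. Recovering the claimed $(1-\mu\eta)$ therefore forces the decomposition $\Phi=F-\tfrac{\mu}{2}\|\cdot\|^2$, which peels off the strong-convexity part exactly and leaves only the co-coercive remainder $V$ to be cancelled by the step-size condition — this cancellation is the crux of the argument and the place where the bound $\eta\le 2/(L+\mu)$ is used sharply.
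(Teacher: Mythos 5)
Your proof is correct: the one-step contraction $\|\bar{X}_{t+1}-X^\ast\|\le(1-\mu\eta)\|\bar{X}_{t}-X^\ast\|$ follows from your decomposition, the step-size condition is used exactly where you say it is, and iterating gives the lemma. The route differs from the paper's only in how the key inequality is derived, and your framing of \emph{why} a different route is needed is mistaken. The paper also invokes Theorem~2.1.11 of \cite{nesterov2013introductory} (after passing through the projection via non-expansiveness), but it does \emph{not} stop at the factor $1-2\eta\mu L/(\mu+L)$: it keeps the term $(\eta^2-\tfrac{2\eta}{\mu+L})\|\nabla F(\bar{X}_t)-\nabla F(X^\ast)\|^2$, observes that its coefficient is nonpositive precisely when $\eta\le 2/(L+\mu)$, and then bounds $\|\nabla F(\bar{X}_t)-\nabla F(X^\ast)\|\ge\mu\|\bar{X}_t-X^\ast\|$ from strong convexity; the three terms recombine to exactly $(1-\eta\mu)^2\|\bar{X}_t-X^\ast\|^2$. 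So the ``routine'' argument does deliver the tight factor, and your shifted map $\Phi=F-\tfrac{\mu}{2}\|\cdot\|^2$ is essentially an inlined proof of Theorem~2.1.11 itself (that theorem is standardly obtained by applying convex co-coercivity to exactly this $\Phi$). What your version buys is self-containedness and a cleaner update identity --- you work directly with $\bar{X}_{t+1}-X^\ast=(1-\mu\eta)U-\eta V$ and never need the projection non-expansiveness step, since $\mathcal{P}_{\mathcal{L}}(\nabla F(X^\ast))=\mathbf{0}$ makes the recursion exact; what it costs is having to justify separately that $F-\tfrac{\mu}{2}\|\cdot\|^2$ is convex and $(L-\mu)$-smooth on $\mathcal{L}$ and to handle the degenerate case $L=\mu$, both of which you do. Either way the conclusion and the sharp use of $\eta\le 2/(L+\mu)$ coincide.
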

\begin{proof}
Given the update of the average iterates it holds that
	 \begin{equation}\label{eq:barupdate1}
	 \begin{aligned}
	     \|\bar{X}_{t+1} -X^\ast\|^2 &= \|\mathcal{P}_{\mathcal{L}}(\bar{X}_t-\eta \nabla F(\bar{X}_t))-\mathcal{P}_{\mathcal{L}}(X^\ast-\nabla F(X^\ast))\|^2\\
	     &\leq \|\bar{X}_t-\eta \nabla F(\bar{X}_t)-X^\ast+\eta\nabla F(X^\ast)\|^2\\
	     =& \|\bar{X}_{t} -X^\ast\|^2+\eta^2\|\nabla F(\bar{X}_t)-\nabla F(X^\ast)\|^2-2\eta \langle\nabla F(\bar{X}_t)-\nabla F(X^\ast),\bar{X}_{t} -X^\ast\rangle,
	 \end{aligned}
	 \end{equation}
	where to obtain the inequality we use the non-expansiveness of projection (see Lemma 2.2.7 and  Corollary 2.2.3 in \cite{nesterov2013introductory}),
\begin{equation}
    \|\mathcal{P}_{\mathcal{L}}(X)-\mathcal{P}_{\mathcal{L}}(Y)\|\leq \|X-Y\|.
\end{equation}	
Now, we use Theorem 2.1.11 in \cite{nesterov2013introductory}, i.e.,
\begin{equation}
\langle\nabla F(Z)-\nabla F(Y) , Z-Y\rangle \geq \frac{{\mu}{L}}{{\mu}+{L}}\|Z-Y\|^2+\frac{1}{{\mu}+{L}}\|F(Z)-\nabla F(Y)\|^2, \quad Z,Y \in \mathcal{L}
\end{equation}
for $Z = \bar{X}_t$ and $Y = X^\ast$ to bound the inner-product on the RHS of \eqref{eq:barupdate1},
\begin{equation}\label{eq:barupdate2}
\begin{aligned}
\|\bar{X}_{t+1} -X^\ast\|^2 &\leq (1-2\eta\frac{\mu L}{\mu+L})\|\bar{X}_{t} -X^\ast\|^2+ (\eta^2-\frac{2\eta}{\mu+L})\|\nabla F(\bar{X}_t)-\nabla F(X^\ast)\|^2\\
&\leq (1-2\eta\frac{\mu L}{\mu+L}+\eta^2\mu^2-2\eta\frac{\mu^2}{\mu+L})\|\bar{X}_{t} -X^\ast\|^2\\
&= (1-\eta\mu)^2 \|\bar{X}_{t} -X^\ast\|^2,
\end{aligned}
\end{equation}
where to obtain the inequality we use the fact that $\eta \leq 2/(L+\mu)$ as well as the strong convexity of $F(\cdot)$ -- in particular, the inequality
	 \begin{equation}
\| \nabla F(\bar{X}_{t})-\nabla F(X^\ast)\| \geq \mu \|\bar{X}_{t}-X^\ast\|.
\end{equation}
	Finally, recursively applying the result of \eqref{eq:barupdate2} establishes the stated expression.
\end{proof}
The next  Lemma establishes a bound on the amount of perturbation of the DeLi-CoCo's iterates $X^{(Q)}_t$ compared to 
$\bar{X}_{t}$.
\begin{lemma}\label{lem:sc2}
Let $\bar{X}_{t+1} = \bar{X}_t - \eta\mathcal{P}_{\mathcal{L}}(\nabla F(\bar{X}_t))$ and let $\{X^{(Q)}_t\}$ denote the sequence generated by DeLi-CoCo. Let $e_t^2 = \E_{\mathcal{C}}\|X^{(Q)}_t - \bar{X}_{t}\|^2+\E_{\mathcal{C}}\|X^{(Q)}_t - Z^{(Q)}_t\|^2$. For any $0<\eta\leq 2/(L+\mu)$, let $Q$ be such that 
	\begin{equation}\label{eq:rho}
	    \zeta:= \rho-\xi>0,\quad \rho:=(1-\eta\mu)^2<1,\quad \xi:=(1-\frac{\delta\gamma}{2})^{Q}(3+20\eta^2 \hat{L}^2)<1.
	\end{equation}
	Then, it holds that 
	\begin{equation}
	e_t^2 \leq \frac{13 L\eta^2}{\zeta}(1-\frac{\delta\gamma}{2})^{Q}\|X_0^{(Q)}-X^\ast\|^2 \rho^t+\rho^t \|X_0^{(Q)}\|^2+\frac{20\eta^2\Delta^2}{1-\xi}(1-\frac{\delta\gamma}{2})^{Q}.
	\end{equation}
	Additionally, if all nodes initialize such that $X_{0}^{Q} = 0$, by considering the dominant terms in above we obtain
\begin{equation}
	e_t^2 = \O\left(\frac{L\eta^2}{\zeta}(1-\frac{\delta\gamma}{2})^{Q}\|\bar{X}_0-X^\ast\|^2 \rho^t+\frac{\eta^2\Delta^2}{1-\xi}(1-\frac{\delta\gamma}{2})^{Q} \right),
	\end{equation}
	where the $\O$ notation does not hide any term depending on $Q$ or $t$.
\end{lemma}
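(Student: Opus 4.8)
The plan is to mirror, essentially verbatim, the perturbation argument that established Lemma \ref{lem:plc3}, substituting the strongly convex contraction of the virtual sequence (Lemma \ref{lem:sc1}) for the PLC-based decay of the function-value gap. First I would start from the same one-step identities $X^{(0)}_{t+1} = X^{(Q)}_t - \eta\nabla F(X^{(Q)}_t)$ and $Z^{(0)}_{t+1} = Z^{(Q)}_t$ and expand $\E_{\mathcal{C}}\|X^{(0)}_{t+1} - \bar{X}_{t+1}\|^2 + \E_{\mathcal{C}}\|X^{(0)}_{t+1} - Z^{(0)}_{t+1}\|^2$. Splitting each squared norm with the weighted triangle inequality and invoking $\hat{L}$-smoothness of $F$ exactly as in the chain culminating in \eqref{eq:errboundp1}, this yields a pre-gossip bound of the form $(3 + 20\eta^2\hat{L}^2)\,e_t^2 + 13\eta^2\|\nabla F(\bar{X}_t) - \nabla F(X^\ast)\|^2 + 20\eta^2\Delta^2$, where strong convexity guarantees the minimizer is unique so that the projection $\bar{X}^\ast_t$ coincides with $X^\ast$.

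The single genuinely new ingredient is how the gradient-difference term is handled. Rather than the PLC estimate of Lemma \ref{lem:plc2}, I would bound $\|\nabla F(\bar{X}_t) - \nabla F(X^\ast)\|^2$ by a constant multiple of $\|\bar{X}_t - X^\ast\|^2$ using smoothness of $F$ on $\mathcal{L}$ (and, for the tighter constant, the co-coercivity inequality Theorem 2.1.11 in \cite{nesterov2013introductory} already used in Lemma \ref{lem:sc1}), and then apply Lemma \ref{lem:sc1} to obtain $\|\bar{X}_t - X^\ast\|^2 \leq \rho^t\|\bar{X}_0 - X^\ast\|^2 = \rho^t\|X^{(Q)}_0 - X^\ast\|^2$ with $\rho = (1-\eta\mu)^2$, the identical-column initialization forcing $\bar{X}_0 = X^{(Q)}_0$. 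Contracting the whole pre-gossip quantity by the factor $(1-\tfrac{\delta\gamma}{2})^{Q}$ via Lemma \ref{lem:choco} then produces the recursion $e_{t+1}^2 \leq \xi\, e_t^2 + \nu_t + u$ with $\xi = (1-\tfrac{\delta\gamma}{2})^{Q}(3+20\eta^2\hat{L}^2)$, the linearly decaying term $\nu_t = 13 L\eta^2(1-\tfrac{\delta\gamma}{2})^{Q}\|\bar{X}_t - X^\ast\|^2 \leq \rho^t\nu_0$, and the constant $u = 20(1-\tfrac{\delta\gamma}{2})^{Q}\eta^2\Delta^2$.

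Finally I would close with the same induction as in Lemma \ref{lem:plc3}: introduce the comparison sequence $h_0 = e_0^2$, $h_{t+1} = \xi h_t + \nu_t$, unroll it to $h_t = \xi^t e_0^2 + \sum_{i=0}^{t-1}\xi^{t-1-i}\rho^i\nu_0$, and verify $e_t^2 \leq h_t + \tfrac{u}{1-\xi}$. The hypothesis $\zeta = \rho - \xi > 0$ then gives $h_t \leq \rho^t(e_0^2 + \tfrac{\nu_0}{\zeta})$, and substituting $e_0^2 = \|X^{(Q)}_0\|^2$, $\nu_0 = 13 L\eta^2(1-\tfrac{\delta\gamma}{2})^{Q}\|X^{(Q)}_0 - X^\ast\|^2$, and $u$ reproduces the claimed bound; the $\O(\cdot)$ form follows by setting $X^{(Q)}_0 = 0$ and keeping the dominant terms. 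I expect the main obstacle to be the gradient-difference estimate of the second paragraph, since this is the one place the strongly convex analysis departs from the PLC one, and tracking the smoothness/strong-convexity constant carefully there is what makes the coefficient of the leading $\rho^t$ term come out as stated rather than inflated by a spurious factor.
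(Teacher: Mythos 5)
Your proposal is correct and follows essentially the same route as the paper: the same one-step expansion and weighted triangle inequality leading to the pre-gossip bound $(3+20\eta^2\hat{L}^2)e_t^2 + 13\eta^2\|\nabla F(\bar{X}_t)-\nabla F(X^\ast)\|^2 + 20\eta^2\Delta^2$, the smoothness bound on the gradient difference combined with Lemma \ref{lem:sc1} in place of the PLC estimate, contraction via Lemma \ref{lem:choco}, and the identical $h_t$ comparison-sequence induction. Your remark about tracking the constant in the gradient-difference step is apt --- smoothness yields $L^2\|\bar{X}_t-X^\ast\|^2$, whereas the paper's stated coefficient carries only a single factor of $L$, an inconsistency already present between the paper's own display \eqref{eq:errbound1} and its definition of $\nu_t$.
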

\begin{proof}
It holds by definitions $X^{(0)}_{t+1}= X^{(Q)}_{t}-\eta\nabla F(X^{(Q)}_t)$ and $Z^{(0)}_{t+1} = Z^{(Q)}_{t}$ that
\begin{equation}
\begin{aligned}
\E_{\mathcal{C}}&\|X^{(0)}_{t+1}-\bar{X}_{t+1}\|^2+\E_{\mathcal{C}}\|X^{(0)}_{t+1} - Z^{(0)}_{t+1}\|^2 \\
&= \E_{\mathcal{C}}\|X^{(Q)}_t-\bar{X}_{t+1}-\bar{X}_{t}+\bar{X}_{t}-\eta\nabla F(X^{(Q)}_t) \|^2+\E_{\mathcal{C}}\|X^{(Q)}_t - Z^{(Q)}_t-\eta\nabla F(X^{(Q)}_t)\|^2\\
&\leq \E_{\mathcal{C}}\left(3\|X^{(Q)}_t-\bar{X}_t\|^2+2\|X^{(Q)}_t - Z^{(Q)}_t\|^2+3\|\bar{X}_{t+1}-\bar{X}_{t}\|^2+5\eta^2 \|\nabla F(X^{(Q)}_t)\|^2\right)\\
&\leq  3e_t^2+3\eta^2 \|\mathcal{P}_{\mathcal{L}}(\nabla F(\bar{X}_{t}))\|^2+5\eta^2\E_{\mathcal{C}}\|\nabla F(X^{(Q)}_t)-\nabla F(\bar{X}_{t})+\nabla F(\bar{X}_{t})\|^2,
\end{aligned}
\end{equation}
where we used the update rule of the average iterates $\bar{X}_{t+1}= \bar{X}_{t}-\eta\mathcal{P}_{\mathcal{L}}(\nabla F(\bar{X}_{t}))$. Now, we proceed by using the smoothness property of $F(\cdot)$, the non-expansiveness property of projection (cf. \eqref{eq:nexpproj}) as well as the fact that $\mathcal{P}_{\mathcal{L}}(\nabla F(X^\ast)) = \mathbf{0}$ to obtain
\begin{equation}\label{eq:errbound1}
\begin{aligned}
\E_{\mathcal{C}}&\|X^{(0)}_{t+1}-\bar{X}_{t+1}\|^2+\E_{\mathcal{C}}\|X^{(0)}_{t+1} - Z^{(0)}_{t+1}\|^2 \\
&\leq  3e_t^2+3\eta^2 \|\mathcal{P}_{\mathcal{L}}(\nabla F(\bar{X}_{t}))-\mathcal{P}_{\mathcal{L}}(\nabla F(X^\ast))\|^2+5\eta^2\E_{\mathcal{C}}\|\nabla F(X^{(Q)}_t)-\nabla F(\bar{X}_{t})+\nabla F(\bar{X}_{t})\|^2\\
&\leq  3e_t^2+13\eta^2 \|\nabla F(\bar{X}_{t})-\nabla F(X^\ast)\|^2+10\eta^2\E_{\mathcal{C}}\|\nabla F(X^{(Q)}_t)-\nabla F(\bar{X}_{t})+\nabla F(X^\ast)\|^2\\
&\leq 3e_t^2+13\eta^2 \|\nabla F(\bar{X}_{t})-\nabla F(X^\ast)\|^2+20\hat{L}^2\eta^2\E_{\mathcal{C}}\|X^{(Q)}_t-\bar{X}_t\|^2+20\eta^2\|\nabla F(X^\ast)\|^2\\
&\leq 3e_t^2+13\eta^2 \|\nabla F(\bar{X}_{t})-\nabla F(X^\ast)\|^2+20\hat{L}^2\eta^2(\E_{\mathcal{C}}\|X^{(Q)}_t-\bar{X}_t\|^2+\E_{\mathcal{C}}\|X^{(Q)}_t-Z^{(Q)}_t\|^2)\\
&\qquad+20\eta^2\|\nabla F(X^\ast)\|^2\\
&\leq (3+20\eta^2 \hat{L}^2)e_t^2 +13\eta^2L^2 \|\bar{X}_t-X^\ast\|^2+20\eta^2\|\nabla F(X^\ast)\|^2,
\end{aligned}
\end{equation}
where we used the fact that $\E_{\mathcal{C}}\|X^{(Q)}_t-Z^{(Q)}_t\|^2\geq 0$ and exploited smoothness of $F(\cdot)$.
Thus, by Lemma \ref{lem:choco} and the result of \eqref{eq:errbound1} we obtain the recursion
\begin{equation}
\begin{aligned}
e_{t+1}^2 \leq \xi e_t^2 + \nu_t + u,
\end{aligned}
\end{equation}
where 
\begin{equation}
\xi := (1-\frac{\delta\gamma}{2})^{Q} (3+20\eta^2 \hat{L}^2),
\end{equation}
\begin{equation}
\nu_t := 13L\eta^2(1-\frac{\delta\gamma}{2})^{Q}\|\bar{X}_t-X^\ast\|^2,
\end{equation}
and
\begin{equation}
    u:=20(1-\frac{\delta\gamma}{2})^{Q} \eta^2\|\nabla F(X^\ast)\|^2 =20(1-\frac{\delta\gamma}{2})^{Q} \eta^2\Delta^2.
\end{equation}
Note that by Lemma \ref{lem:sc1}, $\nu_t $ is a linearly converging sequence, i.e. $\nu_t \leq  \rho^t\nu_0$
where 
\begin{equation}
\rho := (1-\eta\mu)^2, \qquad \nu_0 := 13L\eta^2(1-\frac{\delta\gamma}{2})^{Q}\|\bar{X}_0-X^\ast\|^2.
\end{equation}
Since for a large enough $Q$ and small enough $\eta$ (see the conditions in the statement of Theorem \ref{thm:1}) $a<1$, given the fact that $\nu_t $ vanishes linearly, we expect  $e_t^2$ to converge linearly. In the following we prove this statement using induction.

Define $h_t$ such that $h_0 = e_0^2$, and $h_{t+1}= \xi h_t+\nu_t$. Using simple algebra it follows that 
\begin{equation}
    h_t = \xi^t e_0^2+\sum_{i=0}^{t-1}\xi^{t-i-1}\rho^i\nu_0,
\end{equation}
Similarly, we can expand the recursion of $e^2_t$ to obtain
\begin{equation}
    e^2_t \leq \xi^t e_0^2+\sum_{i=0}^{t-1}\xi^{t-i-1}\rho^i\nu_0 + u \sum_{i=0}^{t-1}\xi^i \leq h_t + \frac{u}{1-\xi}.
\end{equation}
Note that if $h_t$ linearly converges to zero, hence, $e^2_t$ linearly converges as well. 
Since $\rho-\xi=\zeta>0$ by assumption, using simple algebra we can show
\begin{equation}
    h_t\leq \rho^t(e^2_0+\frac{\nu_0}{\zeta}).
\end{equation}
Thus, 
\begin{equation}
    e^2_t \leq \rho^t(e_0^2+\frac{\nu_0}{\zeta})+\frac{u}{1-\xi},
\end{equation}
and the proof is complete by noting the definitions of $\xi$, $\rho$, $\nu_0$, and $u$.
\end{proof}
\subsubsection*{A.2.1 Proof of Theorem \ref{thm:1}}
Let $X^{(q)}_t = \bar{X}_t+E^{(q)}_t $ for some (random) error matrix $E^{(q)}_t  \in \mathbb{R}^{d\times n}$. Recall $\mathcal{P}_{\mathcal{L}}(.)$ denotes the projection operator onto the linear subspace (denoted by $\mathcal{L}$) of $d\times n$ matrices with identical columns. Noting the fact that $X^\ast = \mathcal{P}_{\mathcal{L}}(\bar{X}^\ast-\eta \nabla F(\bar{X}^\ast))$ and $\bar{X}_{t+1} = \mathcal{P}_{\mathcal{L}}(\bar{X}_{t} - \eta\nabla F(\bar{X}_{t}))$, and using the update rule of DeLi-CoCo we have
\begin{equation}\label{eq:thm1:m}
\begin{aligned}
\|X_{t+1}^{(Q)}-X^\ast\|^2 &=\|E^{(Q)}_{t+1}\|^2+\|\mathcal{P}_{\mathcal{L}}(X^{(Q)}_t-\eta \nabla F(X^{(Q)}_t))-\mathcal{P}_{\mathcal{L}}(X^\ast-\eta \nabla F(X^\ast))\|^2\\&\qquad+2\langle E^{(Q)}_{t+1},\mathcal{P}_{\mathcal{L}}(X^{(Q)}_t-\eta \nabla F(X^{(Q)}_t))-\mathcal{P}_{\mathcal{L}}(X^\ast) \rangle \\
&\leq\|E^{(Q)}_{t+1}\|^2+2 \langle E^{(Q)}_{t+1},\mathcal{P}_{\mathcal{L}}(X^{(Q)}_t-\eta \nabla F(X^{(Q)}_t))-\mathcal{P}_{\mathcal{L}}(X^\ast-\eta \nabla F(X^\ast)) \rangle\\&\qquad+\|X^{(Q)}_t-\bar{X}^\ast-\eta\nabla F(X^{(Q)}_t)+\eta \nabla F(X^\ast)\|^2,
\end{aligned}
\end{equation}
where to obtain the inequality we used the non-expansiveness property of projection (cf. \eqref{eq:nexpproj}). Next, we aim to bound each of the individual terms above. First, we know from the definition of $e_{t}$ that $\E_{\mathcal{C}}\|E^{(Q)}_{t+1}\| \leq e_{t+1}$. Secondly, by using the Cauchy-Schwarz inequality, the cross term can be entangled and dealt with according to 
\begin{equation}
    \begin{aligned}
    \E_{\mathcal{C}}[\langle E^{(Q)}_{t+1},&\mathcal{P}_{\mathcal{L}}(X^{(Q)}_t-\eta \nabla F(X^{(Q)}_t))-\mathcal{P}_{\mathcal{L}}(X^\ast-\eta \nabla F(X^\ast)) \rangle] \\&\leq \E_{\mathcal{C}}\|E^{(Q)}_{t+1}\|\|\mathcal{P}_{\mathcal{L}}(X^{(Q)}_t-\eta \nabla F(X^{(Q)}_t))-\mathcal{P}_{\mathcal{L}}(X^\ast-\eta \nabla F(X^\ast))\|\\&\leq\E_{\mathcal{C}}\|E^{(Q)}_{t+1}\|\|X^{(Q)}_t-\bar{X}^\ast-\eta\nabla F(X^{(Q)}_t)+\eta \nabla F(X^\ast)\|,
    \end{aligned}
\end{equation}
by using the non-expansiveness property of projection (cf. \eqref{eq:nexpproj}). Thus, 
by taking the expectation of both sides in \eqref{eq:thm1:m} and using the above arguments we obtain
\begin{equation}
\begin{aligned}
\E_{\mathcal{C}}\|X_{t+1}^{(Q)}-X^\ast\|^2&\leq e_{t+1}^2+2 e_{t+1} \E_{\mathcal{C}}\|X^{(Q)}_t-X^\ast-\eta\nabla F(X^{(Q)}_t)+\eta \nabla F(X^\ast)\|\\&\qquad+\E_{\mathcal{C}}\|X^{(Q)}_t-X^\ast-\eta\nabla F(X^{(Q)}_t)+\eta \nabla F(X^\ast)\|^2.
\end{aligned}
\end{equation}
Now we bound the last term on the RHS of the above expression. Using Theorem 2.1.11 in \cite{nesterov2013introductory}, i.e.
\begin{equation}
\langle\nabla F(Z)-\nabla F(Y) , Z-Y\rangle \geq \frac{\hat{\mu}\hat{L}}{\hat{\mu}+\hat{L}}\|Z-Y\|^2+\frac{1}{\hat{\mu}+\hat{L}}\|F(Z)-\nabla F(Y)\|^2,
\end{equation}
for $Z = X^{(Q)}_t$ and $Y = X^\ast$ we have 
\begin{equation}
\begin{aligned}
\|X^{(Q)}_t-X^\ast-&\eta\nabla F(X^{(Q)}_t)+\eta \nabla F(X^\ast)\|^2\leq \|X^{(Q)}_t-X^\ast\|^2+\eta^2\|\nabla F(X^{(Q)}_t)-\nabla F(X^\ast)\|^2\\&-2\frac{\hat{\mu}\hat{L}}{\hat{\mu}+\hat{L}}\eta \|X^{(Q)}_t-X^\ast\|^2-\frac{2\eta}{\hat{\mu}+\hat{L}}\|\nabla F(X^{(Q)}_t)- \nabla F(X^\ast)\|^2\\
&= \|X^{(Q)}_t-X^\ast\|^2 \left(1-2\frac{\hat{\mu}\hat{L}}{\hat{\mu}+\hat{L}}\eta\right)+\|\nabla F(X^{(Q)}_t)- \nabla F(X^\ast)\|^2\left(\eta-\frac{2}{\hat{\mu}+\hat{L}}\right)\eta\\
&\leq \|X^{(Q)}_t-X^\ast\|^2 \left(1-2\frac{\hat{\mu}\hat{L}}{\hat{\mu}+\hat{L}}\eta+\eta(\eta-\frac{2}{\hat{\mu}+\hat{L}})\hat{\mu}^2\right)\\ &= \|X^{(Q)}_t-X^\ast\|^2 (1-\eta\hat{\mu})^2:= \|X^{(Q)}_t-X^\ast\|^2 \ell^2 ,
\end{aligned}
\end{equation}
for any $\eta \leq 2/(\hat{L}+\hat{\mu})$. The strong convexity of $F(\cdot)$ implies
\begin{equation}
\| \nabla F(X^{(Q)}_t)-\nabla F(X^\ast)\| \geq \hat{\mu} \|X^{(Q)}_t-X^\ast\|.
\end{equation}
Thus, we can now put together a bound on the error term according to
\begin{equation}
\begin{aligned}
\E_{\mathcal{C}}\|X^{(Q)}_{t+1}-X^\ast\|^2  &\leq \ell^2 \E_{\mathcal{C}}\|X^{(Q)}_t-X^\ast\|^2+e_{t+1}^2 + 2\ell   e_{t+1}\E_{\mathcal{C}}\|X^{(Q)}_t-X^\ast\| \\
&= (e_{t+1}+\ell  \E_{\mathcal{C}}\|X^{(Q)}_t-X^\ast\|)^2.
\end{aligned}
\end{equation}
Therefore,
\begin{equation}
\E_{\mathcal{C}}\|X^{(Q)}_{t+1}-X^\ast\| \leq   e_{t+1}+\ell \E_{\mathcal{C}} \|X^{(Q)}_t-X^\ast\|.
\end{equation}
Recall from Lemma \ref{lem:sc2} that the sequence $e_{t}^2$ converges linearly, i.e.,
	\begin{equation}
	e_t^2 \leq \frac{13 L\eta^2}{\zeta}(1-\frac{\delta\gamma}{2})^{Q}\|X_0^{(Q)}-X^\ast\|^2 \rho^t+\rho^t \|X_0^{(Q)}\|^2+\frac{20\eta^2\Delta^2}{1-\xi}(1-\frac{\delta\gamma}{2})^{Q},
	\end{equation}
where
	\begin{equation}
	    \zeta:= \rho-\xi>0,\quad \rho:=(1-\eta\mu)^2<1,\quad \xi:=(1-\frac{\delta\gamma}{2})^{Q}(3+20\eta^2 \hat{L}^2)<1.
	\end{equation}
Note that for any $\eta$ we have $\ell\geq \rho$ and we can upperbound the bound on $e_t^2$ by replacing $\rho$ with $\ell$. Using a similar technique as the one we used towards the end of the proof of Lemma \ref{lem:sc2}, we can show for any two sequence $h_t^{(1)}$ and $h_t^{(2)} := \ell^t h_0^{(2)}$ that satisfy
\begin{equation}
    h_t^{(1)} \leq \ell h_t^{(1)} +h_t^{(2)}+u^{(1)},
\end{equation}
it holds that 
\begin{equation}
    h_t^{(1)} \leq \ell^t (h_0^{(1)}+t\frac{h_0^{(2)}}{\ell}) +\frac{u^{(1)}}{1-\ell}.
\end{equation}
Thus, replacing $h_t^{(1)}$ and $h_t^{(2)}$ with $\E_{\mathcal{C}}\|X^{(Q)}_{t}-X^\ast\|^2$ and $e^2_t$ we obtain that $\E_{\mathcal{C}}\|X^{(Q)}_{t+1}-X^\ast\|$ converges according to
\begin{equation}
\begin{aligned}
    &\E_{\mathcal{C}}\|X^{(Q)}_{t}-X^\ast\|^2=\left[\frac{t}{\ell}\left(\frac{13L\eta^2}{\ell\zeta}\left(1-\frac{\delta^2\sqrt{\omega}}{82}\right)^{Q}\|X_0-X^\ast\|^2+\|X_0\|^2\right)+\|X_0-X^\ast\|^2\right]\ell^t\\
    &\quad+\frac{20\eta^2}{1-\ell}\frac{\Delta^2}{1-\xi}\left(1-\frac{\delta^2\sqrt{\omega}}{82}\right)^{Q}.
    \end{aligned}
\end{equation}
Finally to obtain the stated result we make a few approximations. First, we assume $\hat{L}+\hat{\mu}\geq L+\mu$ and set $\eta = 1/\hat{L}$. This in turn means
\begin{equation}
    \ell = 1-\frac{\mu}{\hat{L}},\quad \xi:=23(1-\frac{\delta\gamma}{2})^{Q}.
\end{equation}
We further assume that the nodes are initialized at zero such that $\|X_0\|^2=0$. Furthermore, assuming
\begin{equation}
23(1-\frac{\delta\gamma}{2})^{Q}<\frac{\ell}{2},
\end{equation}
we can write
\begin{equation}
\zeta>\frac{\ell}{2}, \quad 1-\xi > 1-\ell,
\end{equation}
thereby establishing the stated bound.
\clearpage
\bibliography{bib}

\begin{thebibliography}{10}

\bibitem{alistarh2017qsgd}
{\sc Alistarh, D., Grubic, D., Li, J., Tomioka, R., and Vojnovic, M.}
\newblock {QSGD}: {C}ommunication-efficient {SGD} via gradient quantization and
  encoding.
\newblock In {\em Advances in Neural Information Processing Systems\/} (2017),
  pp.~1709--1720.

\bibitem{assran2018stochastic}
{\sc Assran, M., Loizou, N., Ballas, N., and Rabbat, M.}
\newblock Stochastic gradient push for distributed deep learning.
\newblock {\em arXiv preprint arXiv:1811.10792\/} (2018).

\bibitem{basu2019qsparse}
{\sc Basu, D., Data, D., Karakus, C., and Diggavi, S.}
\newblock Qsparse-local-{SGD}: {D}istributed {SGD} with quantization,
  sparsification and local computations.
\newblock In {\em Advances in Neural Information Processing Systems\/} (2019),
  pp.~14668--14679.

\bibitem{bernstein2018signsgd}
{\sc Bernstein, J., Wang, Y.-X., Azizzadenesheli, K., and Anandkumar, A.}
\newblock {S}ign{SGD}: {C}ompressed optimisation for non-convex problems.
\newblock In {\em International Conference on Machine Learning\/} (2018),
  pp.~560--569.

\bibitem{bolte2017error}
{\sc Bolte, J., Nguyen, T.~P., Peypouquet, J., and Suter, B.~W.}
\newblock From error bounds to the complexity of first-order descent methods
  for convex functions.
\newblock {\em Mathematical Programming 165}, 2 (2017), 471--507.

\bibitem{boyd2006randomized}
{\sc Boyd, S., Ghosh, A., Prabhakar, B., and Shah, D.}
\newblock Randomized gossip algorithms.
\newblock {\em IEEE transactions on information theory 52}, 6 (2006),
  2508--2530.

\bibitem{cai2012average}
{\sc Cai, K., and Ishii, H.}
\newblock Average consensus on general strongly connected digraphs.
\newblock {\em Automatica 48}, 11 (2012), 2750--2761.

\bibitem{cai2014average}
{\sc Cai, K., and Ishii, H.}
\newblock Average consensus on arbitrary strongly connected digraphs with
  time-varying topologies.
\newblock {\em IEEE Transactions on Automatic Control 59}, 4 (2014),
  1066--1071.

\bibitem{carli2010quantized}
{\sc Carli, R., Bullo, F., and Zampieri, S.}
\newblock Quantized average consensus via dynamic coding/decoding schemes.
\newblock {\em International Journal of Robust and Nonlinear Control:
  IFAC-Affiliated Journal 20}, 2 (2010), 156--175.

\bibitem{carli2010gossip}
{\sc Carli, R., Fagnani, F., Frasca, P., and Zampieri, S.}
\newblock Gossip consensus algorithms via quantized communication.
\newblock {\em Automatica 46}, 1 (2010), 70--80.

\bibitem{cevher2019linear}
{\sc Cevher, V., and V{\~u}, B.~C.}
\newblock On the linear convergence of the stochastic gradient method with
  constant step-size.
\newblock {\em Optimization Letters 13}, 5 (2019), 1177--1187.

\bibitem{chen2018lag}
{\sc Chen, T., Giannakis, G., Sun, T., and Yin, W.}
\newblock {LAG}: {L}azily aggregated gradient for communication-efficient
  distributed learning.
\newblock In {\em Advances in Neural Information Processing Systems\/} (2018),
  pp.~5050--5060.

\bibitem{chen2020dist}
{\sc Chen, Y., Hashem, A., and Vikalo, H.}
\newblock Communication-efficient algorithms for distributed optimization over
  directed graphs.
\newblock {\em arXiv preprint arXiv\/} (2020).

\bibitem{fang2010distributed}
{\sc Fang, J., and Li, H.}
\newblock Distributed estimation of {G}auss-{M}arkov random fields with one-bit
  quantized data.
\newblock {\em IEEE Signal Processing Letters 17}, 5 (2010), 449--452.

\bibitem{he2018cola}
{\sc He, L., Bian, A., and Jaggi, M.}
\newblock Co{L}a: Decentralized linear learning.
\newblock In {\em Advances in Neural Information Processing Systems\/} (2018),
  pp.~4536--4546.

\bibitem{jadbabaie2003coordination}
{\sc Jadbabaie, A., Lin, J., and Morse, A.~S.}
\newblock Coordination of groups of mobile autonomous agents using nearest
  neighbor rules.
\newblock {\em IEEE Transactions on automatic control 48}, 6 (2003), 988--1001.

\bibitem{johansson2010randomized}
{\sc Johansson, B., Rabi, M., and Johansson, M.}
\newblock A randomized incremental subgradient method for distributed
  optimization in networked systems.
\newblock {\em SIAM Journal on Optimization 20}, 3 (2010), 1157--1170.

\bibitem{kairouz2019advances}
{\sc Kairouz, P., McMahan, H.~B., Avent, B., Bellet, A., Bennis, M., Bhagoji,
  A.~N., Bonawitz, K., Charles, Z., Cormode, G., Cummings, R., et~al.}
\newblock Advances and open problems in federated learning.
\newblock {\em arXiv preprint arXiv:1912.04977\/} (2019).

\bibitem{karimi2016linear}
{\sc Karimi, H., Nutini, J., and Schmidt, M.}
\newblock Linear convergence of gradient and proximal-gradient methods under
  the {P}olyak-{{\L}}ojasiewicz condition.
\newblock In {\em European Conference on Machine Learning and Knowledge
  Discovery in Databases-Volume 9851\/} (2016), pp.~795--811.

\bibitem{karimireddy2019error}
{\sc Karimireddy, S.~P., Rebjock, Q., Stich, S.~U., and Jaggi, M.}
\newblock Error feedback fixes {S}ign{SGD} and other gradient compression
  schemes.
\newblock {\em arXiv preprint arXiv:1901.09847\/} (2019).

\bibitem{kempe2003gossip}
{\sc Kempe, D., Dobra, A., and Gehrke, J.}
\newblock Gossip-based computation of aggregate information.
\newblock In {\em 44th Annual IEEE Symposium on Foundations of Computer
  Science, 2003. Proceedings.\/} (2003), IEEE, pp.~482--491.

\bibitem{koloskova2019decentralizeda}
{\sc Koloskova, A., Lin, T., Stich, S.~U., and Jaggi, M.}
\newblock Decentralized deep learning with arbitrary communication compression.
\newblock {\em arXiv preprint arXiv:1907.09356\/} (2019).

\bibitem{koloskova2019decentralized}
{\sc Koloskova, A., Stich, S., and Jaggi, M.}
\newblock Decentralized stochastic optimization and gossip algorithms with
  compressed communication.
\newblock In {\em International Conference on Machine Learning\/} (2019),
  pp.~3478--3487.

\bibitem{konevcny2016federated}
{\sc Kone{\v{c}}n{\`y}, J., McMahan, H.~B., Yu, F.~X., Richt{\'a}rik, P.,
  Suresh, A.~T., and Bacon, D.}
\newblock Federated learning: Strategies for improving communication
  efficiency.
\newblock {\em arXiv preprint arXiv:1610.05492\/} (2016).

\bibitem{konevcny2018randomized}
{\sc Kone{\v{c}}n{\`y}, J., and Richt{\'a}rik, P.}
\newblock Randomized distributed mean estimation: Accuracy vs. communication.
\newblock {\em Frontiers in Applied Mathematics and Statistics 4\/} (2018), 62.

\bibitem{lecun1998gradient}
{\sc LeCun, Y., Bottou, L., Bengio, Y., and Haffner, P.}
\newblock Gradient-based learning applied to document recognition.
\newblock {\em Proceedings of the IEEE 86}, 11 (1998), 2278--2324.

\bibitem{li2010distributed}
{\sc Li, T., Fu, M., Xie, L., and Zhang, J.-F.}
\newblock Distributed consensus with limited communication data rate.
\newblock {\em IEEE Transactions on Automatic Control 56}, 2 (2010), 279--292.

\bibitem{liu2020toward}
{\sc Liu, C., Zhu, L., and Belkin, M.}
\newblock Toward a theory of optimization for over-parameterized systems of
  non-linear equations: the lessons of deep learning.
\newblock {\em arXiv preprint arXiv:2003.00307\/} (2020).

\bibitem{lu2020moniqua}
{\sc Lu, Y., and De~Sa, C.}
\newblock Moniqua: Modulo quantized communication in decentralized sgd.
\newblock {\em arXiv preprint arXiv:2002.11787\/} (2020).

\bibitem{ma2017power}
{\sc Ma, S., Bassily, R., and Belkin, M.}
\newblock The power of interpolation: {U}nderstanding the effectiveness of
  {SGD} in modern over-parametrized learning.
\newblock {\em arXiv preprint arXiv:1712.06559\/} (2017).

\bibitem{mcmahan2017communication}
{\sc McMahan, B., Moore, E., Ramage, D., Hampson, S., and y~Arcas, B.~A.}
\newblock Communication-efficient learning of deep networks from decentralized
  data.
\newblock In {\em Artificial Intelligence and Statistics\/} (2017),
  pp.~1273--1282.

\bibitem{nedic2017achieving}
{\sc Nedic, A., Olshevsky, A., and Shi, W.}
\newblock Achieving geometric convergence for distributed optimization over
  time-varying graphs.
\newblock {\em SIAM Journal on Optimization 27}, 4 (2017), 2597--2633.

\bibitem{nedic2009distributed}
{\sc Nedic, A., and Ozdaglar, A.}
\newblock Distributed subgradient methods for multi-agent optimization.
\newblock {\em IEEE Transactions on Automatic Control 54}, 1 (2009), 48--61.

\bibitem{nesterov2013introductory}
{\sc Nesterov, Y.}
\newblock {\em Introductory lectures on convex optimization: A basic course},
  vol.~87.
\newblock Springer Science \& Business Media, 2013.

\bibitem{polyak1963gradient}
{\sc Polyak, B.~T.}
\newblock Gradient methods for minimizing functionals.
\newblock {\em Zhurnal Vychislitel'noi Matematiki i Matematicheskoi Fiziki 3},
  4 (1963), 643--653.

\bibitem{reisizadeh2019fedpaq}
{\sc Reisizadeh, A., Mokhtari, A., Hassani, H., Jadbabaie, A., and Pedarsani,
  R.}
\newblock Fedpaq: A communication-efficient federated learning method with
  periodic averaging and quantization.
\newblock {\em arXiv preprint arXiv:1909.13014\/} (2019).

\bibitem{reisizadeh2019robust}
{\sc Reisizadeh, A., Taheri, H., Mokhtari, A., Hassani, H., and Pedarsani, R.}
\newblock Robust and communication-efficient collaborative learning.
\newblock In {\em Advances in Neural Information Processing Systems\/} (2019),
  pp.~8386--8397.

\bibitem{ren2005consensus}
{\sc Ren, W., and Beard, R.~W.}
\newblock Consensus seeking in multiagent systems under dynamically changing
  interaction topologies.
\newblock {\em IEEE Transactions on automatic control 50}, 5 (2005), 655--661.

\bibitem{ren2007information}
{\sc Ren, W., Beard, R.~W., and Atkins, E.~M.}
\newblock Information consensus in multivehicle cooperative control.
\newblock {\em IEEE Control systems magazine 27}, 2 (2007), 71--82.

\bibitem{rogozin2019projected}
{\sc Rogozin, A., and Gasnikov, A.}
\newblock Projected gradient method for decentralized optimization over
  time-varying networks.
\newblock {\em arXiv preprint arXiv:1911.08527\/} (Feb. 2020).

\bibitem{schmidt2013fast}
{\sc Schmidt, M., and Roux, N.~L.}
\newblock Fast convergence of stochastic gradient descent under a strong growth
  condition.
\newblock {\em arXiv preprint arXiv:1308.6370\/} (2013).

\bibitem{seaman2017optimal}
{\sc Seaman, K., Bach, F., Bubeck, S., Lee, Y.~T., and Massouli{\'e}, L.}
\newblock Optimal algorithms for smooth and strongly convex distributed
  optimization in networks.
\newblock In {\em Proceedings of the 34th International Conference on Machine
  Learning-Volume 70\/} (2017), JMLR. org, pp.~3027--3036.

\bibitem{seide20141}
{\sc Seide, F., Fu, H., Droppo, J., Li, G., and Yu, D.}
\newblock 1-bit stochastic gradient descent and its application to
  data-parallel distributed training of speech {DNN}s.
\newblock In {\em Fifteenth Annual Conference of the International Speech
  Communication Association\/} (2014).

\bibitem{shen2018towards}
{\sc Shen, Z., Mokhtari, A., Zhou, T., Zhao, P., and Qian, H.}
\newblock Towards more efficient stochastic decentralized learning: Faster
  convergence and sparse communication.
\newblock In {\em International Conference on Machine Learning\/} (2018),
  pp.~4624--4633.

\bibitem{shi2015extra}
{\sc Shi, W., Ling, Q., Wu, G., and Yin, W.}
\newblock {EXTRA}: {A}n exact first-order algorithm for decentralized consensus
  optimization.
\newblock {\em SIAM Journal on Optimization 25}, 2 (2015), 944--966.

\bibitem{stich2018sparsified}
{\sc Stich, S.~U., Cordonnier, J.-B., and Jaggi, M.}
\newblock Sparsified {SGD} with memory.
\newblock In {\em Advances in Neural Information Processing Systems\/} (2018),
  pp.~4447--4458.

\bibitem{strom2015scalable}
{\sc Strom, N.}
\newblock Scalable distributed {DNN} training using commodity {GPU} cloud
  computing.
\newblock In {\em Sixteenth Annual Conference of the International Speech
  Communication Association\/} (2015).

\bibitem{sun2019communication}
{\sc Sun, J., Chen, T., Giannakis, G., and Yang, Z.}
\newblock Communication-efficient distributed learning via lazily aggregated
  quantized gradients.
\newblock In {\em Advances in Neural Information Processing Systems\/} (2019),
  pp.~3365--3375.

\bibitem{taheri2020quantized}
{\sc Taheri, H., Mokhtari, A., Hassani, H., and Pedarsani, R.}
\newblock Quantized push-sum for gossip and decentralized optimization over
  directed graphs.
\newblock {\em arXiv preprint arXiv:2002.09964\/} (2020).

\bibitem{tang2018communication}
{\sc Tang, H., Gan, S., Zhang, C., Zhang, T., and Liu, J.}
\newblock Communication compression for decentralized training.
\newblock In {\em Advances in Neural Information Processing Systems\/} (2018),
  pp.~7652--7662.

\bibitem{tang2019texttt}
{\sc Tang, H., Lian, X., Qiu, S., Yuan, L., Zhang, C., Zhang, T., and Liu, J.}
\newblock Deep{S}queeze: {P}arallel stochastic gradient descent with
  double-pass error-compensated compression.
\newblock {\em arXiv preprint arXiv:1907.07346\/} (2019).

\bibitem{tang2019distributed}
{\sc Tang, Y., and Li, N.}
\newblock Distributed zero-order algorithms for nonconvex multi-agent
  optimization.
\newblock In {\em 2019 57th Annual Allerton Conference on Communication,
  Control, and Computing (Allerton)\/} (2019), IEEE, pp.~781--786.

\bibitem{tsitsiklis1984problems}
{\sc Tsitsiklis, J.~N.}
\newblock Problems in decentralized decision making and computation.
\newblock Tech. rep., Massachusetts Inst of Tech Cambridge Lab for Information
  and Decision Systems, 1984.

\bibitem{vaswani2018fast}
{\sc Vaswani, S., Bach, F., and Schmidt, M.}
\newblock Fast and faster convergence of {SGD} for over-parameterized models
  and an accelerated perceptron.
\newblock {\em arXiv preprint arXiv:1810.07288\/} (2018).

\bibitem{vaswani2019painless}
{\sc Vaswani, S., Mishkin, A., Laradji, I., Schmidt, M., Gidel, G., and
  Lacoste-Julien, S.}
\newblock Painless stochastic gradient: Interpolation, line-search, and
  convergence rates.
\newblock In {\em Advances in Neural Information Processing Systems\/} (2019),
  pp.~3727--3740.

\bibitem{wang2018cooperative}
{\sc Wang, J., and Joshi, G.}
\newblock Cooperative {SGD}: {A} unified framework for the design and analysis
  of communication-efficient {SGD} algorithms.
\newblock {\em arXiv preprint arXiv:1808.07576\/} (2018).

\bibitem{wang2019matcha}
{\sc Wang, J., Sahu, A.~K., Yang, Z., Joshi, G., and Kar, S.}
\newblock Matcha: Speeding up decentralized {SGD} via matching decomposition
  sampling.
\newblock {\em arXiv preprint arXiv:1905.09435\/} (2019).

\bibitem{xiao2004fast}
{\sc Xiao, L., and Boyd, S.}
\newblock Fast linear iterations for distributed averaging.
\newblock {\em Systems \& Control Letters 53}, 1 (2004), 65--78.

\bibitem{yi2019linear}
{\sc Yi, X., Zhang, S., Yang, T., Johansson, K.~H., and Chai, T.}
\newblock Linear convergence of first- and zeroth-order primal-dual algorithms
  for distributed nonconvex optimization.
\newblock {\em arXiv preprint arXiv:1912.12110\/} (2020).

\bibitem{yuan2016convergence}
{\sc Yuan, K., Ling, Q., and Yin, W.}
\newblock On the convergence of decentralized gradient descent.
\newblock {\em SIAM Journal on Optimization 26}, 3 (2016), 1835--1854.

\bibitem{zhang2020new}
{\sc Zhang, H.}
\newblock New analysis of linear convergence of gradient-type methods via
  unifying error bound conditions.
\newblock {\em Mathematical Programming 180}, 1 (2020), 371--416.

\end{thebibliography}
\bibliographystyle{acm}
\end{document}